\newtheorem{theorem}{Theorem}
\newtheorem*{theorem*}{Theorem}
\newtheorem{lemma}{Lemma}
\newtheorem*{lemma*}{Lemma}
\newtheorem*{property*}{Property}
\newtheorem{definition}{Definition}
\newtheorem{corollary}{Corollary}
\newtheorem{assumption}{Assumption}
\newtheorem*{assumption*}{Assumption}
\newtheorem*{prop*}{Proposition}
\newtheorem{prop}{Proposition}
\newtheorem*{setting*}{Setting}
\icmltitlerunning{An Analysis for Reasoning Bias of Language Models with Small Initialization}
\begin{document}

\twocolumn[
\icmltitle{An Analysis for Reasoning Bias of Language Models with Small Initialization}




\begin{icmlauthorlist}
\icmlauthor{Junjie Yao}{sjtustu}
\icmlauthor{Zhongwang Zhang}{sjtustu}
\icmlauthor{Zhi-Qin John Xu}{sjtu,iaar,seres}
\end{icmlauthorlist}

\icmlaffiliation{sjtustu}{School of Mathematical Sciences, Institute of Natural Sciences, Shanghai Jiao Tong University, Shanghai, P.R. China}
\icmlaffiliation{sjtu}{Institute of Natural Sciences, School of Mathematical Sciences, MOE-LSC, School of Artificial Intelligence, Shanghai Jiao Tong University, Shanghai, P.R. China}
\icmlaffiliation{iaar}{Center for LLM, Institute for Advanced Algorithms Research, Shanghai, P.R. China}
\icmlaffiliation{seres}{Shanghai Seres Information Technology Co., Ltd, Shanghai 200040, China}

\icmlcorrespondingauthor{Zhongwang Zhang}{0123zzw666@sjtu.edu.cn}
\icmlcorrespondingauthor{Zhi-Qin John Xu}{xuzhiqin@sjtu.edu.cn}

\icmlkeywords{initialization scale, reasoning bias, language model, embedding space, training dynamics}

\vskip 0.3in
]



\printAffiliationsAndNotice{}  

\begin{abstract}
Transformer-based Large Language Models (LLMs) have revolutionized Natural Language Processing by demonstrating exceptional performance across diverse tasks. This study investigates the impact of the parameter initialization scale on the training behavior and task preferences of LLMs. We discover that smaller initialization scales encourage models to favor reasoning tasks, whereas larger initialization scales lead to a preference for memorization tasks. We validate this reasoning bias via real datasets and meticulously designed anchor functions. Further analysis of initial training dynamics suggests that specific model components, particularly the embedding space and self-attention mechanisms, play pivotal roles in shaping these learning biases. We provide a theoretical framework from the perspective of model training dynamics to explain these phenomena. Additionally, experiments on real-world language tasks corroborate our theoretical insights. This work enhances our understanding of how initialization strategies influence LLM performance on reasoning tasks and offers valuable guidelines for training models.
\end{abstract}


\section{Introduction}
\begin{figure}[hbpt]
    \centering
    \includegraphics[width=1\linewidth]{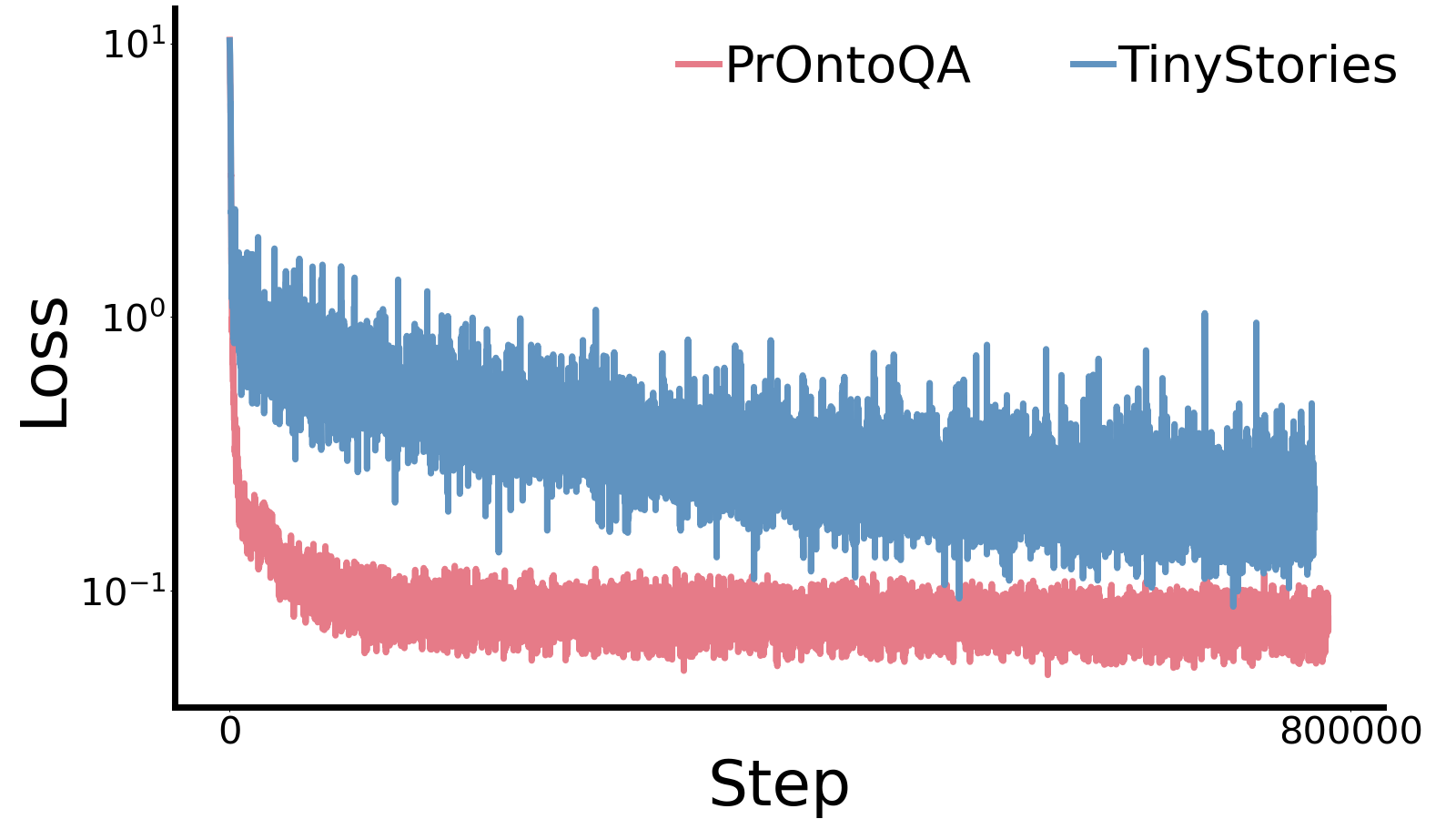}
    \vspace{-15pt}
    \caption{Comparison of training loss between PrOntoQA and TinyStories in one next-token prediction training for this mix dataset. The red line represents the training loss on the PrOntoQA dataset, while the blue line depicts the training loss on the TinyStories dataset.}
    \label{fig:loss_PrOntoQA}
    \vspace{-10pt}
\end{figure}

With the rapid advancement of deep learning technologies, Large Language Models have achieved remarkable success in the field of Natural Language Processing (NLP). These models have demonstrated exceptional capabilities across a wide range of tasks, from text generation to complex reasoning~\cite{wei2022emergent,achiam2023gpt,liu2024deepseek}. Reasoning, in particular, is a critical ability for LLMs. A number of studies have focused on improving the reasoning ability of these models through data-driven approaches, such as RHO-1~\cite{lin2024not} and Phi-4~\cite{abdin2024phi4technicalreport}. However, there remains an ongoing debate as to whether LLMs genuinely learn the underlying logical rules or merely mimic patterns observed in the data~\cite{marcus2003algebraic, smolensky2022neurocompositional}.

An alternative approach to enhancing the reasoning ability of LLMs focuses on the model architecture and its training process. In one such study examining the use of Transformers to model compositional functions, it was observed that the scale of model parameter initialization significantly impacts the model's reasoning behavior~\cite{zhang2024initialization, zhang2025complexity}. Specifically, smaller initialization scales bias the model toward fitting the data by learning primitive-level functions and compositional rules, whereas larger initialization scales tend to encourage memorization of input-output mappings. A qualitative rationale for this phenomenon has been proposed: with a small initialization, a well-documented effect known as neuron condensation emerges during training~\cite{xu2025overview,luo2021phase,zhou2022empirical,zhang2022linear,zhang2023loss,zhang2023stochastic,zhang2024implicit}. This phenomenon suggests that neurons within the same layer tend to behave similarly, promoting data fitting with the least possible complexity. To achieve a low-complexity result, the model must learn a minimal set of rules leading to capture the intrinsic primitive functions and compositional rules. However, this rationale does not reveal a critical question: how the optimization process, together with the Transformer structure, can achieve reasoning solutions with small initialization?

In this work, we identify a reasoning bias during the training of neural networks that learn natural language when initialized with small parameter scales. To illustrate this phenomenon, we employ a GPT-2 model~\cite{GPT2} to train on a mixed dataset comprising two types of language data with distinct levels of reasoning complexity, within a single next-token prediction training framework. The first dataset, PrOntoQA~\cite{PrOntoQA}, consists of question-answering examples that include chains of thought, which explicitly describe the reasoning necessary to answer the questions correctly. The second dataset, TinyStories~\cite{tinystories}, is a synthetic corpus of short stories containing only words typically understood by children aged 3 to 4 years. As shown in Figure~\ref{fig:loss_PrOntoQA}, the training loss for PrOntoQA decreases significantly faster than for TinyStories, suggesting that the model encounters and learns the reasoning patterns more readily.

We uncover a key mechanism whereby reasoning tasks are learned earlier during training because the tokens associated with these tasks become more differentiated in the embedding space at an early stage of the training process. We validate this mechanism using both synthetic data and real-world datasets. Furthermore, we provide a theoretical explanation for the evolution of token embeddings, which depends on the distribution of sample labels. Since each token is encoded as a one-hot vector, its embedding is adjusted based on the loss associated with the labels of that token. Consequently, different label distributions can lead to distinct learning behaviors for token embeddings. For memory tasks, the labels associated with each token are typically random and lack explicit structure, which results in similar distributions for different memory token labels. As a result, the embeddings for memory tokens are difficult to differentiate in the early stages of training. In contrast, reasoning tokens often exhibit distinct label distributions, leading to more differentiated embedding vectors for these tokens. These insights are elaborated through a simplified model using a multi-layer perceptron (MLP) and embedding structure, followed by an analysis of a Transformer model.

The primary contribution of this research lies in uncovering the impact of the parameter initialization scale on the training behavior and task preferences of LLMs. By combining theoretical analysis with empirical evidence, we enhance the understanding of LLM training dynamics and provide new insights for optimizing model initialization strategies.

\section{Preliminary}\label{sec:pre}
\subsection{Synthetic Composition Task}
To study the task bias during the training, we use the concept of anchor function~\cite{zhang2024anchor} to construct a dataset that contains tasks of different reasoning complexities. We consider all tokens belonging to positive integers. A set of tokens are designated as anchors, denoted as $\fA:=\{a\in\sN^+|\alpha_{\min}\leq a\leq \alpha_{\max}\}$, where each anchor represents an addition/randomness operation in this work. Another set of tokens are designated as keys, denoted as $\fZ:=\{z\in\sN^+|\zeta_{\min}\leq z\leq\zeta_{\max}\}$ with the assumption that $\fZ \cap \fA = \emptyset$. For convenience, we denote $N_{\fZ}=\zeta_{\max}-\zeta_{\min}+1$ and $N_{\fA}=\alpha_{\max}-\alpha_{\min}+1$. 

This section introduces two types of sequence mappings. The first step involves constructing a sequence of positive integers with length $L$, represented as:
\begin{equation}    
\begin{aligned}
    &\fX^{\left(q,L\right)}=\left\{X\middle|X=\left[z_1,\cdots,z_p,a_{p+1},\right.\right.\\
& \left.\cdots,a_{p+q},z_{p+q+1},\cdots,z_L\right]\left.,z_i\in\fZ, a_i\in \fA \right\}.
\end{aligned}
\end{equation}
 We define $q$ as the number of anchors in the sequence, and $p$ as the index of the element immediately preceding the first anchor element $a_{p+1}$ in the sequence.

For a given sequence $X\in\fX^{\left(q,L\right)}$, we define its key-anchor combination as $(z_p,a_{p+1},\cdots,a_{p+q})$, which is denoted concisely as pair $(z_p,\va)$, and other keys are regarded as noise in this input sequence. The anchor set $\fA$ is divided into two subsets, i.e., reasoning anchor set $\fA_{\rm rsn}$ and memory anchor set $\fA_{\rm mem}$, where $\fA=\fA_{\rm rsn}\cup\fA_{\rm mem}$ and $\fA_{\rm rsn}\cap\fA_{\rm mem}=\emptyset$.

\paragraph{Reasoning mapping.} For any $X$ with $a_{p+i}\in\fA_{\rm rsn}, i=1,\cdots,q$, we define the following mapping as a reasoning mapping
\begin{align*}
    \fF_{\rm rsn}(X) = z_p + \sum_{i=1}^q a_{p+i}.
\end{align*}
\paragraph{Memory mapping.} 
For any key-anchor pair $\left(z_p,\va\right)$, where each element in $\va$ belongs to $\fA_{\rm mem}$, we randomly sample a number $y^{\left(z_p,\va\right)}$ from $\fZ$ as the memory mapping label of any sequence $X$ containing $\left(z_p,\va\right)$, i.e.
\begin{align*}
    \mathcal{F}_{\rm mem}(X) = y^{\left(z_p,\va\right)},\quad \forall X \text{ contains } \left(z_p,\va\right).
\end{align*}

A detailed example is provided in Figure~\ref{fig:task}. It's noted that the key-anchor pair may occur at any position within the sequence. The label is independent of both the noise tokens and the position of the key-anchor pair within the sequence but is determined solely by the value of the key-anchor pair.
\begin{figure*}[hbpt]
    \centering
    \includegraphics[width=1\linewidth]{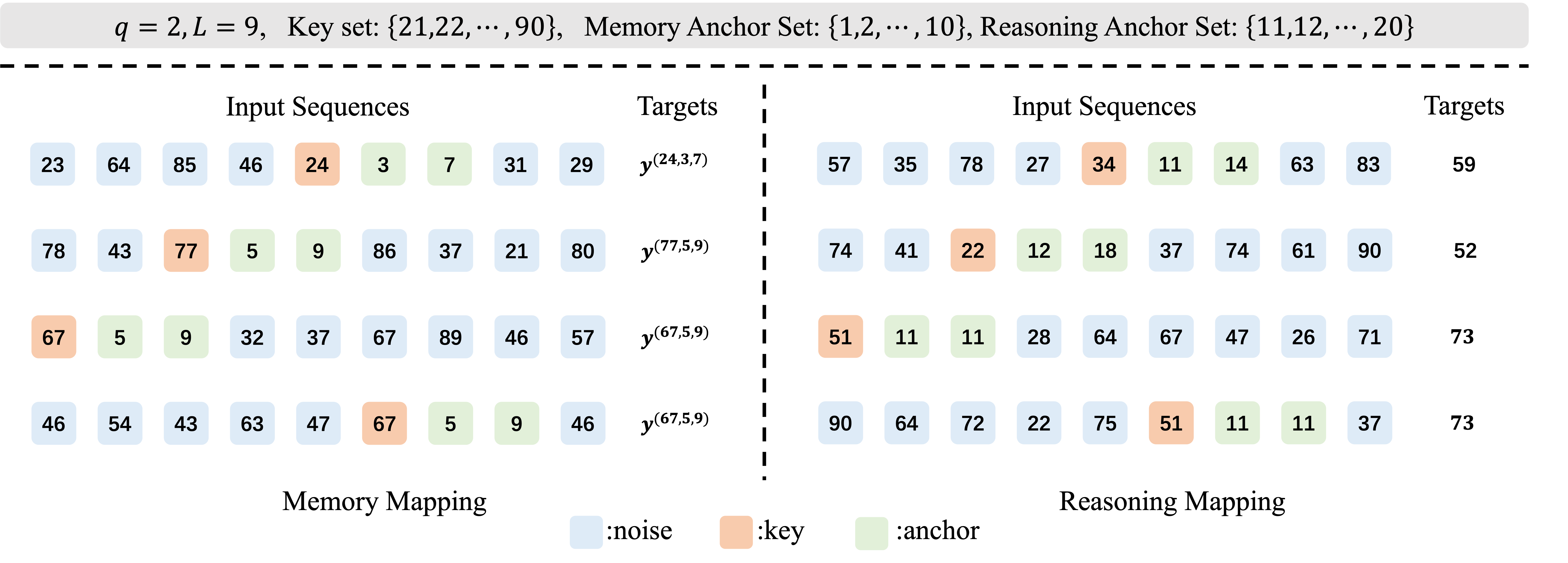}
    \vspace{-20pt}
    \caption{Schematic diagram of the synthetic composition task. The gray-shaded area illustrates the specific setup used in this example. Each block represents a token within the input sequence, with different face colors indicating distinct token types (blue: noise, orange: key, green: anchor). Each row corresponds to an input sequence paired with its respective label. The left section depicts four examples of memory mapping, while the right section presents four examples of reasoning mapping. }
    \label{fig:task}
\end{figure*}
\subsection{Dataset Setup}
In this study, we denote a data pair as $\left(X,y\right)$, where $X$ represents the input sequence and $y$ corresponds to its associated label. We define $\vy$ as the one-hot encoded representation of $y$ for convenience. For memory mapping, all data are contained within the training set $\fD_{\rm mem}$, and no test set is employed, as the generalization is not considered in this framework. For reasoning mapping, we define a set of masked anchor combinations $\fM = $$\left\{\left(a_{p+1},a_{p+2},\cdots,a_{p+q}\right)\mid a_{p+i}\in\fA_{\rm rsn},i=1,\cdots,q\right\}$ and designate all sequences containing any masked combination $\left(a_{p+1}, a_{p+2}, \cdots, a_{p+q}\right)\in \fM$ as the test set $\fD_{\rm rsn,test}$ and set the rest sequence as $\fD_{\rm rsn,train}$. The training set is $\fD_{\rm train}=\fD_{\rm mem}\cup\fD_{\rm rsn,train}$.


\subsection{Model Architecture}
We give the formulation of the embedding space and self-attention module here for notation convenience. Let $d_{\rm vob},d_m,d_k$ denote the vocabulary size, embedding space dimension, and query-key-value projection dimension, respectively. For any token $s$, denote its one-hot vector by $\ve^s\in\sR^{1\times d_{\rm vob}}$. The embedding vector of $s$ is $\vw^{{\rm emb},s}=\ve^s\vW^{\rm emb}$ where $\vW^{\rm emb}\in\sR^{d_{\rm vob}\times d_m}$ is the embedding matrix. Additionally, the self-attention operator ${\rm Attn}$ on any embedding sequence $\vX\in\sR^{L\times d_m}$ is defined as:
\begin{align}    
    {\rm Attn}\left(\vX\right)&=g\left({\rm mask}\left(\frac{\vX\vW^Q\vW^{KT}\vX^T}{\sqrt{d_k}}\right)\right),\\
    \vO&={\rm Attn}\left(\vX\right)\vX\vW^V\vW^O,
\end{align}
where $g\left(\cdot\right)$ is the softmax operator and $T$ means the matrix transposition. $\vW^{Q},\vW^K,\vW^V$$\in\sR^{d_m\times d_k}$ are the query, key and value projection matrices, respectively. $\vW^O\in\sR^{d_k\times d_{m}}$ represents the output projection matrix. 
The detailed expression of multilayer Transformer models can be found in Appendix~\ref{sec:model_arc}.

\begin{figure*}[htpb]
    \centering
    \includegraphics[width=0.9\linewidth]{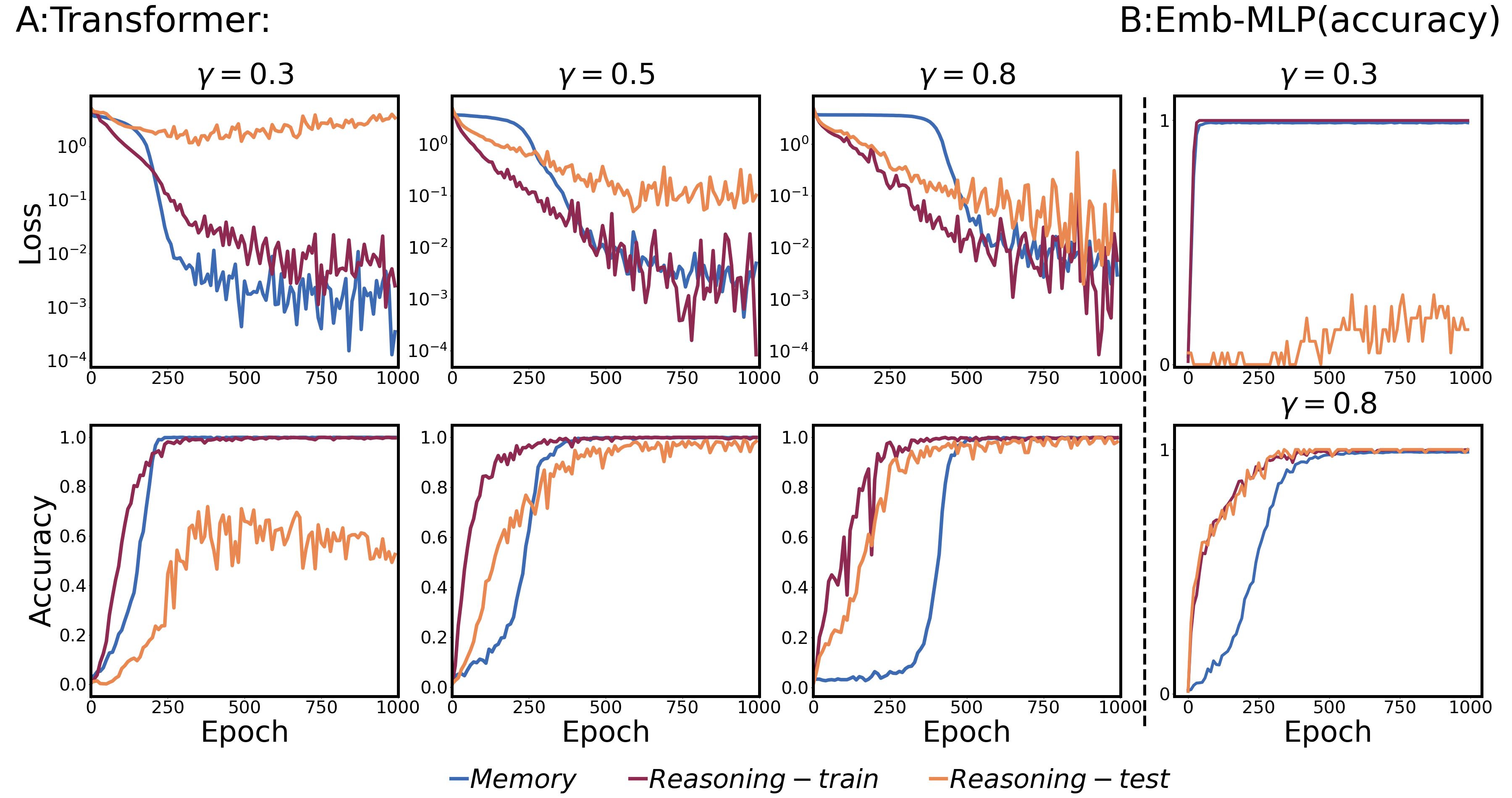}
    \vspace{-10pt}
    \caption{A: Loss and prediction accuracy of the models on different datasets under varying initialization scales ($\gamma = 0.3, 0.5, 0.8$). The top row depicts the evolution of the loss during training for three datasets: $\fD_{\rm mem}$ (blue lines), $\fD_{\rm rsn,train}$ (purple lines), and $\fD_{\rm rsn,test}$ (orange lines). The bottom row presents the corresponding prediction accuracies for these datasets. Each column represents results obtained with different initialization scales. B: Prediction accuracy of Emb-MLP under initialization rate $\gamma=0.3$ and $\gamma=0.8$.}
    \label{fig:lossgap}
\end{figure*}

\subsection{Parameter Initialization}
Given any trainable parameter matrix $\vW\in\sR^{d_1\times d_2}$, where $d_1$ and $d_2$ denote the input and output dimensions, respectively, its elements are initialized according to a normal distribution:
\begin{equation*}
    \vW_{i,j} \sim \mathcal{N}\left(0,\left(d_1^{-\gamma}\right)^2\right),
\end{equation*}
where $\gamma$ is the initialization rate. Specifically, the initialization scale decreases as $\gamma$ increases.
Note that $\gamma=0.5$ is commonly used in many default initialization methods, such as LeCun initialization~\cite{LeCun1998} and He initialization~\cite{he2015delving}. As the network width towards infinity~\cite{luo2021phase,zhou2022empirical}, the training of the network with $\gamma>0.5$ exhibits significant non-linear characteristics, i.e., condensation. Therefore, initialization scales with $\gamma>0.5$ are generally considered small. 


\section{Result}
In this section, we present empirical evidence of a reasoning bias during the training of Transformers with small initialization by utilizing composition tasks. To further explore this phenomenon, we introduce a simplified model consisting of an embedding layer and a multi-layer perceptron, which reproduces the reasoning bias and enables theoretical analysis. A key mechanism underlying this bias is that the training behavior of each token's embedding depends on the label distribution of the samples containing that token. For reasoning anchors, the label distributions typically exhibit greater variability compared to memory anchors, leading to the more rapid differentiation of their embeddings early in training. Additionally, we extend our analysis to the Transformer architecture to demonstrate the generalizability of this effect. For each observation mentioned in the following, we provide a similar analysis with larger initialization scales in Appendix~\ref{app:model_chara}.

\subsection{Reasoning Bias in Transformer with Composite Anchor Functions}
In our experiment, we set that $q=2, L=9$. The dataset is constructed with the following configurations: $\fZ=\{21,\cdots,120\},\fA_{\rm mem}=\{1,\cdots,10\},\fA_{\rm rsn}=\{11,\cdots,20\}$ and $\fM=\{(11,13),(13,11)\}$. The dataset contains 200000 data pairs, ensuring an equal number of samples for each anchor combination. The loss function employed is Cross Entropy and the optimization algorithm used is AdamW. The model architecture comprises a decoder-only Transformer structure with 2 layers and a single attention head. We train the model under different initialization scales with $\gamma=0.3,0.5,0.8$ utilizing the last token prediction method. Additional details about the experimental setup can be found in Appendix~\ref{app:setup}.

To investigate the impact on training behavior under varying initialization scales, we analyze the dynamics of loss and prediction accuracy on $\fD_{\rm mem},\fD_{\rm rsn,train}$ and $\fD_{\rm rsn,test}$. As illustrated in Figure \ref{fig:lossgap}A, for $\gamma = 0.3$, the losses on \(\fD_{\rm mem}\) and \(\fD_{\rm rsn,train}\) decrease at nearly identical rates, while the loss on \(\fD_{\rm rsn,test}\) remains effectively unchanged. This observation suggests that the model primarily memorizes the training data in this setting. In contrast, when $\gamma = 0.8$, the losses on \(\fD_{\rm rsn,train}\) and \(\fD_{\rm rsn,test}\) decrease significantly faster than the loss on \(\fD_{\rm mem}\). This behavior indicates a shift towards a reasoning bias in the model. These findings reveal that the model's learning bias is influenced by the initialization scale: as the initialization scale decreases, the model exhibits a progressively stronger reasoning bias.

\subsection{Simplified Model: Phenomena and Analysis}
To further investigate the underlying cause of the reasoning bias under a small initialization scale, we begin by employing a two-layer fully connected network to address a particular task, where $p\equiv1$ and $L=q+1$. The network structure is defined as follows:
\begin{definition}
    Given that $\vW^{(1)}\in \sR^{d_m\times d_f},\vW^{(2)}\in \sR^{d_f\times d_{\rm vob}}$, and $\sigma$ as the activation function. Given any sequence $X\in \fX^{(q,q+1)}$, we define the Embedding-MLP model (Emb-MLP) $\vG_{\vtheta}$ as
    \begin{align*}
        \vG_{\vtheta}\left(X\right):=\sigma\left(\sum_{s\in X}\vw^{{\rm emb},s}\vW^{(1)}\right)\vW^{(2)}.
    \end{align*}    
\end{definition}
Comparing with a large initialization scale $(\gamma = 0.3)$, a noticeable reasoning bias can still be observed in Figure \ref{fig:lossgap}B for a small initialization scale $(\gamma = 0.8)$.

\paragraph{Embedding space exhibits distinct patterns} To investigate the causes of the reasoning bias under small initialization for such a simplified model, it's critical to understand the structure of the embedding space. Figure~\ref{fig:emb-mlp-embeddings}A depicts the cosine similarity matrices for embeddings of memory anchors and reasoning anchors at epochs 50 and 900. The results reveal that the cosine similarity between reasoning anchors $s_i,s_j$ decreases with the increase of $|s_i-s_j|$, suggesting that reasoning anchors quickly establish a hierarchical structure within the embedding space. In contrast, the memory anchors exhibit consistently high similarity and alignment, leading to a lack of differentiation among them. Nevertheless, given that the model needs to learn more primitive-level mappings for memory mapping than reasoning mapping, the embedding space associated with memory mapping should, in principle, exhibit greater complexity and variability. This phenomenon reveals that the primary challenge preventing the model from effectively learning memory mapping could highly possibly lie in its difficulty in identifying and differentiating between individual anchors. 
\begin{figure}[htp]
    \centering
    \includegraphics[width=1\linewidth]{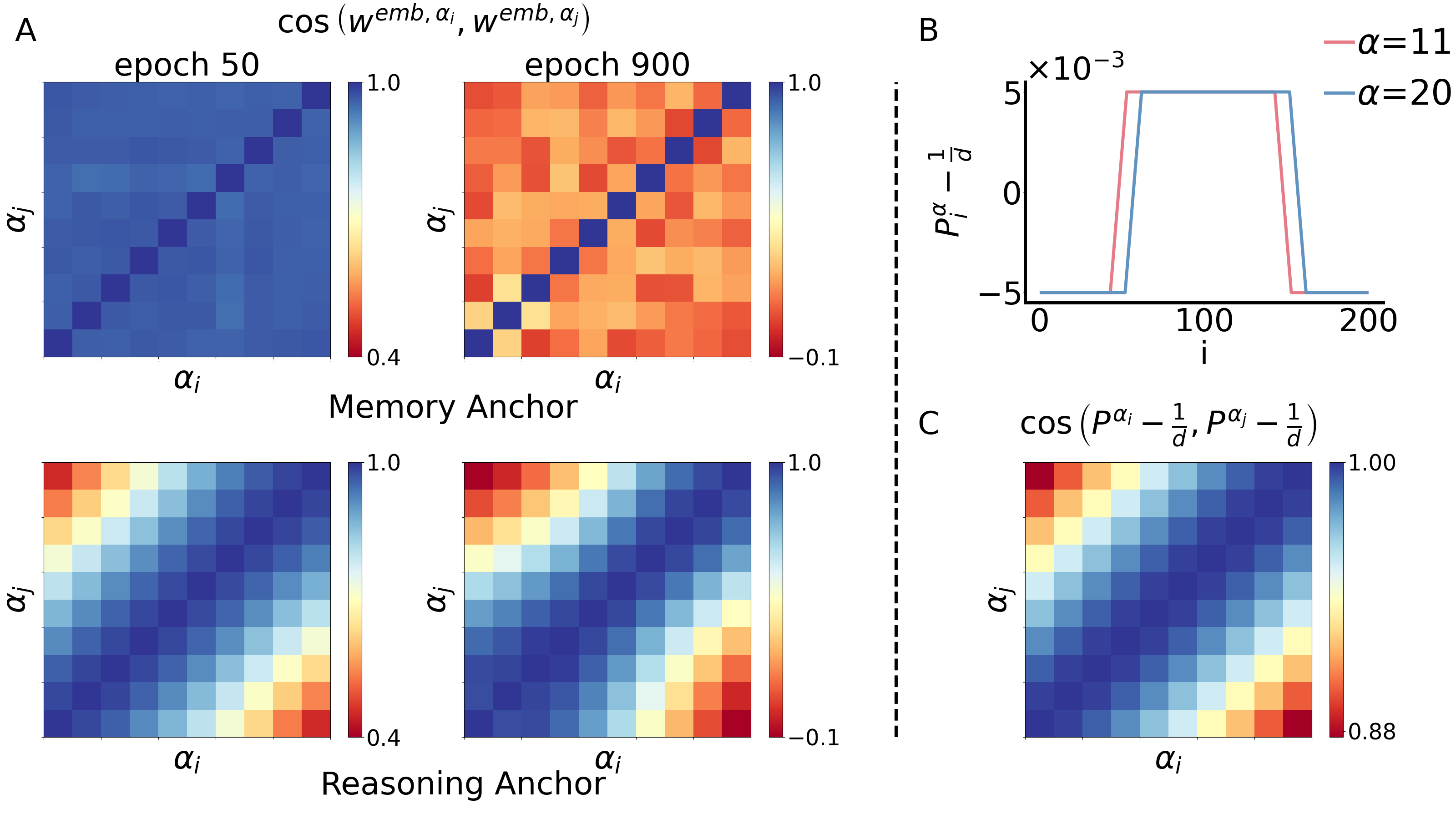}
    \vspace{-15pt}
    \caption{A: Cosine similarity matrices for memory (top row) and reasoning (bottom row) anchors at epoch 50 (left) and epoch 900 (right) of a model initialized with $\gamma=0.8$. B: Distribution of $\vP^{s}-\frac{1}{d_{\rm vob}}\bm{1}$ for different reasoning anchor $s$. C: Cosine similarity between $\vP^{s_i}-\frac{1}{d_{\rm vob}}\bm{1}$ and $\vP^{s_j}-\frac{1}{d_{\rm vob}}\bm{1}$ for any reasoning anchor $s_i,s_j$, exhibiting a similar structure to the embedding space of reasoning anchors observed in A.}
    \label{fig:emb-mlp-embeddings}
\end{figure}

\paragraph{Target distribution shapes the embedding}
This phenomenon can be interpreted through the training dynamics. To facilitate our analysis, we give the following assumption~\cite{CSIAMchen}:
\begin{assumption}\label{assump:activation}
    The activation function $\sigma\in \mathcal{C}^2(\mathbb{R})$, and there exists a universal constant $C_L > 0$ such that its first and second derivatives satisfy $||\sigma^{\prime}(\cdot)||_{\infty}\leq C_L,||\sigma^{\prime\prime}(\cdot)||_{\infty}\leq C_L.$ Moreover, $\sigma(0) = 0,\sigma^{\prime}(0) = 1$.
\end{assumption}
For any token $s$, let $\left\{\left(X^{s,i},y^{s,i}\right)\right\}_{i=1}^{n_{s}}$ denote all input sequences containing $s$ and corresponding labels, where $n_{s}$ means the appearance times of $s$. As the initialization scale decreases, with Assumption~\ref{assump:activation}, we have  
$\sigma^{\prime}\left(\sum_{x\in X^i}\vw^{{\rm{emb}},x}\vW^{(1)}\right)=\bm{1}, {\rm softmax}\left(\vG_{\vtheta}\left(X^{s,i}\right)\right)=\frac{1}{d_{\rm{vob}}}\bm{1}$, where $\textbf{1}\in\sR^{1\times d_{\rm vob}}$ means the vector with all elements equal to 1. Then the gradient flow  of \( \vw^{{\rm emb},s} \) could be approximated by the limit formulation, i.e.,
\begin{equation}\label{eq:gf_emb_g_smallinit}
    \frac{d\vw^{{\rm emb},s}}{dt}=\frac{1}{n}\sum_{i=1}^{n_s}\left(\vy^{s,i}-\frac{1}{d_{\rm vob}}\bm{1}\right)\vW^{(2)T}\vW^{(1)T},
\end{equation}
where $n$ represents the count of all tokens. We consider $n\rightarrow\infty$ to obtain the asymptotic form of the following gradient flow.
\begin{prop}\label{prop:E_emb_g}
    For any token $s$, denote $Y^s$ as a random variable, which takes values randomly from the label of any input sequence that contains token $s$.  In the limit $n\rightarrow\infty$, we define $\vP^s$ with its $i$-th element as the probability of $Y^s=i$, i.e., $\vP^s_i=\mathbb{P}\left(Y^s=i\right)$. Assume the ratio of the token $s$ in the whole dataset $r_s:=\frac{n_s}{n}$ remains constant, then ~\eqref{eq:gf_emb_g_smallinit} can be approximated as:
    \begin{equation}\label{eq:gf_emb_g_limit}
        \frac{d\vw^{{\rm emb},s}}{dt}=r_s\left(\vP^s-\frac{1}{d_{\rm vob}}\bm{1}\right)\vW^{(2)T}\vW^{(1)T}.
    \end{equation}
\end{prop}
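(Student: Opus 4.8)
The plan is to prove Proposition~\ref{prop:E_emb_g} as a law-of-large-numbers statement: the right-hand side of \eqref{eq:gf_emb_g_smallinit} is an empirical average taken over the occurrences of the token $s$, and as the number of such occurrences grows it converges to the corresponding expectation, which is precisely \eqref{eq:gf_emb_g_limit}.

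First I would isolate the data-dependent part. Since $\vW^{(2)T}$ and $\vW^{(1)T}$ do not depend on the summation index $i$, it suffices to analyze $\frac{1}{n}\sum_{i=1}^{n_s}\bigl(\vy^{s,i}-\frac{1}{d_{\rm vob}}\bm{1}\bigr)$ and apply the fixed linear map $\vW^{(2)T}\vW^{(1)T}$ at the end. Splitting the sum, the constant piece is exact: $\frac{1}{n}\sum_{i=1}^{n_s}\frac{1}{d_{\rm vob}}\bm{1}=\frac{n_s}{n}\cdot\frac{1}{d_{\rm vob}}\bm{1}=r_s\cdot\frac{1}{d_{\rm vob}}\bm{1}$. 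For the label piece I would write $\frac{1}{n}\sum_{i=1}^{n_s}\vy^{s,i}=r_s\cdot\bigl(\frac{1}{n_s}\sum_{i=1}^{n_s}\vy^{s,i}\bigr)$, exhibiting the bracketed term as the empirical mean of the one-hot label vectors of all sequences containing $s$. Because $\vy^{s,i}$ is the one-hot encoding of $y^{s,i}$, its $j$-th coordinate is the indicator $\mathbb{1}[y^{s,i}=j]$, so $\mathbb{E}[\vy^{s,i}]=\vP^s$ by the definitions of $Y^s$ and $\vP^s$.

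Next I would take the limit. Modeling the sequences containing $s$ (equivalently, their labels) as i.i.d. draws from the law of $Y^s$, the law of large numbers gives $\frac{1}{n_s}\sum_{i=1}^{n_s}\vy^{s,i}\to\vP^s$; here $n_s=r_s n\to\infty$ as $n\to\infty$ because $r_s$ is held constant. Combining with the constant piece yields $\frac{1}{n}\sum_{i=1}^{n_s}\bigl(\vy^{s,i}-\frac{1}{d_{\rm vob}}\bm{1}\bigr)\to r_s\bigl(\vP^s-\frac{1}{d_{\rm vob}}\bm{1}\bigr)$, and applying the bounded linear operator $\vW^{(2)T}\vW^{(1)T}$, which is continuous and, at the fixed time $t$ in question, independent of $n$, preserves the convergence and produces \eqref{eq:gf_emb_g_limit}.

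I expect the main obstacle to be making the probabilistic setup precise rather than the computation itself. One must specify the data-generating process under which the labels $\{y^{s,i}\}_{i=1}^{n_s}$ are (conditionally on the sequence containing $s$) independent and identically distributed, so that $\vP^s$ is well defined and the law of large numbers applies; in the balanced construction used here this needs a short justification, or one may simply take the almost-sure limit of the empirical token-label frequency as the definition of $\vP^s$. A secondary point worth a remark is the interchange of the $n\to\infty$ limit with the small-initialization weights $\vW^{(1)},\vW^{(2)}$: since the statement concerns the instantaneous velocity field of $\vw^{{\rm emb},s}$ at a given time $t$, these matrices are treated as fixed, so no uniformity in $t$ is required.
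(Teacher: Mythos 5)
Your proposal is correct and follows essentially the same route as the paper: the appendix proof likewise writes $\frac{1}{n}\sum_{i=1}^{n_s}\vy^{s,i}=r_s\,\mathbb{E}_{Y^s}[\vY^s]=r_s\vP^s$ by the law of large numbers and then applies the fixed linear map $\vW^{(2)T}\vW^{(1)T}$. Your added remarks on making the i.i.d.\ sampling assumption explicit and on treating $\vW^{(1)},\vW^{(2)}$ as fixed at time $t$ are reasonable refinements of the same argument rather than a different approach.
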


The proof is provided in Appendix~\ref{app:proof_emb_g}. Proposition~\ref{prop:E_emb_g} demonstrates that for any token $s$, its embedding vector is dominated by the distribution of $Y^s$ which indicates that it's significant to discuss the distribution $Y^s$ for different tokens. Firstly, we define the following random variables ($\mathcal{U}$ for discrete uniform distribution):
\begin{equation}
    Z\sim \mathcal{U}\left(\fZ\right),\quad A_j\sim \mathcal{U}\left(\fA_{\rm rsn}\right),\quad j=1,2,\cdots,q.
\end{equation}
Then we have the following results:
\begin{equation}\label{eq:dist_mem}
    \mathbb{P}\left(Y^{s}=i\mid s\in\fA_{\rm mem}\right) = \frac{1}{N_{\fZ}}\delta_{i\in\fZ},
\end{equation}
and 
\begin{equation}\label{eq:dist_rsn}
\begin{aligned}
    &\mathbb{P}\left(Y^{s}=i\mid s\in\fA_{\rm rsn}\right)\\=&\mathbb{P}\left(Z+\sum_{j=1}^{q-1}A_j=i-s\mid s\in\fA_{\rm rsn}\right).
\end{aligned}
\end{equation}

Equation~\eqref{eq:dist_mem} reveals that the information to different memory anchors is identical such that the embedding space of memory anchors exhibits a high similarity. However, \eqref{eq:dist_rsn} demonstrates that the
 distribution of $Y^s$ exhibits shifts in the mean values that depend on the specific $s$ for any $s\in\fA_{\rm rsn}$. Figure \ref{fig:emb-mlp-embeddings}B and \ref{fig:emb-mlp-embeddings}C visualize the distribution of $\vP^s-\frac{1}{d_{\rm vob}}\bm{1}$ and the resulting cosine similarity among different reasoning anchors $s$, suggesting that the labels' distributions play a critical role in establishing the embedding structure of reasoning anchors during the early stages of training, facilitating the differentiation among the embeddings associated with different reasoning anchors. 
 The detailed formulations can be found in Appendix~\ref{sec:dist_ys}.

\begin{figure*}[htbp]
    \centering
    \includegraphics[width=1\linewidth]{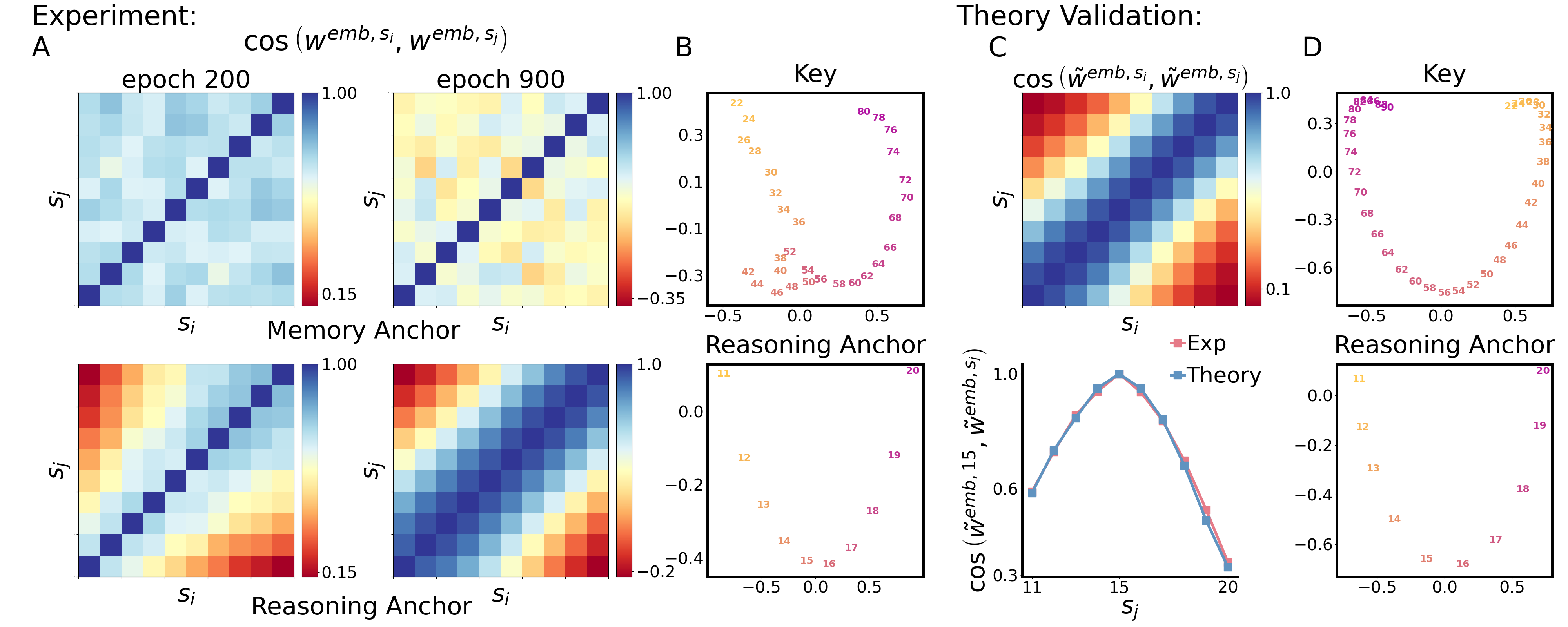}
    \vspace{-20pt}
    \caption{Embedding structure of a Transformer model with small initialization scale. A: Cosine similarity matrices for memory (top) and reasoning (bottom) anchors at epoch 200 (left) and epoch 900 (right). B: Visualization of the embedding space projected onto the first two principal components computed via PCA. C: Cosine similarity between the constructed embedding vectors of reasoning anchors $\tilde{\vw}^{\rm emb,s}$ (see ~\eqref{eq:estimation_wemb}) as derived in Theorem~\ref{thm:emb_reconstruct} (top) and Cosine similarity comparison between experimental results with theoretical approximations where $s_i=15$ (bottom). D: PCA projection of the constructed embedding space $\tilde{\vw}^{\rm emb,s}$ for $s\in\fZ$ (top) and $s\in\fA_{\rm rsn}$ (bottom) onto the first two principal components.}
    \label{fig:embeding-transformer}
\end{figure*}
\subsection{Transformer with General Task}
In the previous section, we investigate the key mechanisms driving the learning bias of Emb-MLP and analyze the dynamics of its embedding space. However, when applied to a general sequence containing some degree of noise, i.e., $L>q+1$, we find the MLP model fails to perform effectively due to its inability to extract critical tokens $z_p, a_{p+1},$ and $a_{p+2}$. In contrast, Transformer architectures overcome this limitation through self-attention mechanisms, which can identify the key and anchor elements and propagate their information.

In the following section, we conduct an in-depth analysis of the Transformer's characteristics and processing mechanisms under a small initialization scale. Specifically, we investigate whether the embedding space exhibits similar phenomena to those observed in Emb-MLP and assess how the model captures critical information from the input sequence. 

\paragraph{Embedding space.}
The embedding space of the Transformer exhibits a phenomenon similar to that observed in the Emb-MLP. Figure~\ref{fig:embeding-transformer}A illustrates the cosine similarity among different anchors' embedding vectors, revealing distinct patterns for reasoning and memory tasks. Reasoning anchors display a hierarchical structure, the further distance, the smaller similarity, suggesting a clearer organization within the embedding space. In contrast, memory anchors exhibit high similarity and alignment. Additionally, we apply Principal Component Analysis (PCA) to the entire embedding space to examine its structural properties. The results in Figure~\ref{fig:embeding-transformer}B reveal a strong inherent numerical order. This structure is particularly advantageous for reasoning tasks, as it supports the model’s capacity to generalize based on the underlying numerical relationships.

\paragraph{First attention module.}
As illustrated in Figure~\ref{fig:first_attn}, the first attention matrix approaches the behavior of an average operator when initialized with small scales, such that $\left({\rm Attn}\left(\vX\right)\vV\right)_i=\frac{1}{i}\sum_{j\leq i}\vV_j$, where $\vV=\vX\vW^{V}$. Consequently, each token aggregates information from all preceding tokens. Additionally, the largest singular value of $\vW^{V}$ is significantly larger than the remaining singular values, and its corresponding singular vector is aligned closely with the embedding vectors of reasoning anchors, but nearly orthogonal to those of memory anchors. These phenomena suggest reasoning anchors are prominently captured by $\vW^V$ and subsequently propagated to all subsequent tokens in the sequence via the average operation. However, the memory anchors are not distinctly identified, indicating that the model faces challenges in capturing significant information from a memory sequence effectively. More analysis of $\vW^V$ can be found in Appendix~\ref{app:W_V}.
\begin{figure*}[htbp]
    \centering
    \includegraphics[width=1\linewidth]{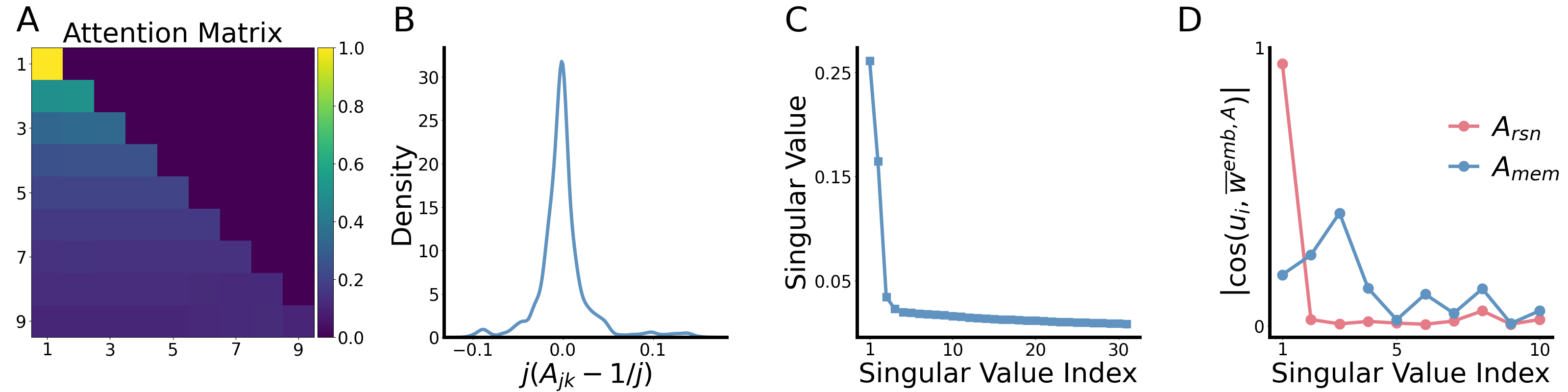}
    \vspace{-20pt}
    \caption{Characteristics of the first attention module under small initialization ($\gamma=0.8$) in the early training stage (epoch 200). A: Heatmap of the attention matrix for a random sample. B: Distribution of the relative error between attention $A_{jk}$ and $\frac{1}{j}$ across all training sequences. C: Distribution of singular values of $\vW^{V}$. D: Cosine similarity between the left singular vectors and average embedding vectors of the anchors. } 
    \label{fig:first_attn}
\end{figure*}

\paragraph{Second attention module.}
The second attention module functions to extract the key, and propagate its information to the final position in the sequence. This is facilitated through the use of position embeddings. Since this mechanism is applicable to both memory and reasoning tasks, we provide a detailed explanation in Appendix~\ref{sec:second_attention_app}.

\paragraph{Theoretical analysis.} Based on the observations from the experiments, we extract the sketch component of the model, which is crucial to its learning preferences, and analyze the underlying mechanisms for its occurrence. We define the following one-layer Transformer model: 
\begin{definition}[One-layer Transformer]\label{def:1_layer_transformer}
    Let $d_f\in\sN^+$ denotes the hidden layer of the feedforward neural network (FNN). For any $X\in\fX^{\left(q,L\right)}$, denote ${\rm Attn}\left(\vW^{{\rm emb},X}\right)$ by $\vA$, then we define $\vf_{\vtheta}:\fX^{\left(q,L\right)}\rightarrow \sR^{d_m}$ as follows:
    \begin{equation} 
\begin{aligned}
    \vf_{\vtheta}(X) =& \sigma((\vA_{L,:}\vW^{{\rm emb},X}\vW^V\vW^O+\vW^{{\rm emb},X}_{L,:})\vW^{f1})\vW^{f2} \\+& \vA_{L,:}\vW^{{\rm emb},X}\vW^V\vW^O+\vW^{{\rm emb},X}_{L,:},
\end{aligned}
    \end{equation}
where $\vW^{f1}\in\mathbb{R}^{d_m\times d_f},\vW^{f2}\in\mathbb{R}^{d_f\times d_m}$ are the feedforward layer projection matrices. The subscript $L,:$ in $\vA_{L,:}$ and $\vW^{{\rm emb},X}_{L,:}$ denotes the $L$-th row.
\end{definition}
Definition~\ref{def:1_layer_transformer} is introduced to facilitate the theoretical analysis, excluding the Layer Normalization and the final projection operator, as they do not impact our results (see Appendix~\ref{app:LN}). 

As we observed, with a small initialization scale, the attention operator $\vA$ can be interpreted as an average operator. Specifically, we have
\begin{lemma}\label{lem:attention}
    For any $\varepsilon\in(0,1]$, there exists $C>0$ such that for any $\gamma>C$, the elements of $\vA$ at initialization, denoted by $\vA_{i,j}$, satisfy $\left|\vA_{i,j}-\frac{1}{i}\right|\leq\varepsilon$ for any $i\leq j$ with probability at least $1-\varepsilon$.
\end{lemma}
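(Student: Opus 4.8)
The plan is to analyze the entries of $\vA = {\rm Attn}(\vW^{{\rm emb},X})$ at initialization and show that, as $\gamma$ grows, the pre-softmax logit matrix $\frac{1}{\sqrt{d_k}}\vW^{{\rm emb},X}\vW^Q\vW^{KT}\vW^{{\rm emb},X T}$ concentrates near the zero matrix, so that after the causal mask and softmax each row becomes (close to) the uniform distribution over its unmasked entries, i.e.\ $\vA_{i,j}\approx 1/i$ for $j\le i$. Concretely, I would first bound the magnitude of a single embedding row: each entry of $\vW^{\rm emb}$ is $\mathcal{N}(0,(d_m^{-\gamma})^2)$, so with high probability $\|\vw^{{\rm emb},s}\|_2 = O(d_m^{1/2-\gamma}\sqrt{\log(d_{\rm vob}/\varepsilon)})$; since $L$ is fixed, this controls $\|\vW^{{\rm emb},X}\|$ too. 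Similarly each entry of $\vW^Q,\vW^K$ has standard deviation $d_k^{-\gamma}$, so their operator norms are $O(\sqrt{d_m}\,d_k^{-\gamma} + d_k^{1/2-\gamma})$ with high probability by a standard Gaussian-matrix tail bound (e.g.\ a net argument or Bernstein). Multiplying these estimates, every logit $\bigl(\frac{\vX\vW^Q\vW^{KT}\vX^T}{\sqrt{d_k}}\bigr)_{i,j}$ is bounded in absolute value by some quantity $\eta(\gamma)$ that tends to $0$ as $\gamma\to\infty$, on an event of probability at least $1-\varepsilon$ (union bound over the $O(L^2)$ entries, which costs only a constant since $L$ is fixed).

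The second step is the deterministic softmax perturbation estimate: if all logits in a row of length $i$ (the unmasked part) lie in $[-\eta,\eta]$, then each softmax output lies in $\bigl[\frac{e^{-2\eta}}{i},\frac{e^{2\eta}}{i}\bigr]$, hence differs from $1/i$ by at most $\frac{e^{2\eta}-1}{i}\le e^{2\eta}-1$. Choosing $\gamma$ large enough that $\eta(\gamma)$ is small enough to force $e^{2\eta(\gamma)}-1\le\varepsilon$ gives the claim; the threshold $C$ in the statement is exactly the value of $\gamma$ at which this inequality first holds, and it depends only on $\varepsilon$ (and on the fixed architectural constants $d_m,d_k,d_{\rm vob},L$, which are not varying here). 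Combining the two steps via the high-probability event from step one yields $|\vA_{i,j}-1/i|\le\varepsilon$ for all $i\le j$ with probability at least $1-\varepsilon$.

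The main obstacle is bookkeeping the dependence on dimensions cleanly: one must be careful that the operator-norm bounds for the Gaussian projection matrices and the $\ell_2$ bound for the embedding rows combine to give a logit bound that genuinely decays in $\gamma$ rather than merely staying bounded — this works because every factor of the form $d^{-\gamma}$ or $d^{1/2-\gamma}$ shrinks as $\gamma$ increases (for $d\ge 2$), but the argument should state explicitly that the relevant dimensions are held fixed while $\gamma\to\infty$, so that $\eta(\gamma)\to 0$. A minor secondary point is handling the diagonal/near-diagonal rows with very few unmasked entries (e.g.\ $i=1$, where $\vA_{1,1}=1$ exactly and $1/i=1$, so the bound is trivially satisfied), which should be remarked on but requires no real work. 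Everything else — the Gaussian tail bounds, the union bound, the softmax Lipschitz estimate — is routine.
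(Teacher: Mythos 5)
Your proposal follows essentially the same route as the paper: show that the pre-softmax logits $\frac{1}{\sqrt{d_k}}\bigl(\vX\vW^Q\vW^{KT}\vX^T\bigr)_{i,j}$ concentrate near zero as $\gamma\to\infty$ (the paper does this by computing the variance of a single entry and applying Chebyshev's inequality, where you use operator-norm tail bounds and a union bound), and then conclude that each masked softmax row is close to the uniform vector $\frac{1}{i}$. If anything, your write-up is more complete than the paper's, since you make explicit the deterministic softmax perturbation step (the $e^{2\eta}-1$ bound) that the paper leaves implicit when it jumps from small logits to $\vA_{i,j}\xrightarrow{P}\frac{1}{i}$.
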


 Denote that $\vW^f=\vW^{f1}\vW^{f2},\vW^{VO}=\vW^V\vW^O$ and $\tilde{\vW}=\left(\vW^{f,T}+\vI\right)\left(\vW^{VO,T}+\vI\right)$, where the identity matrix $\vI$ comes from the resnet. Using techniques similar to those employed in the previous section, we derive the gradient flow of $\vw^{{\rm emb},s}$ under small initialization scales as follows:

 \begin{prop}\label{prop:emb_mem_tran}
     For any $s\in\fA_{\rm mem}$, let $n,\gamma\rightarrow\infty$, with Assumption~\ref{assump:activation} we have the following result:
     \begin{equation}
        \frac{d\vw^{{\rm emb},s}}{dt}=\frac{r_s}{L}\left(\frac{\vdelta^\fZ}{N_{\fZ}}-\frac{1}{d_m}\bm{1}\right)\tilde{\vW}.
     \end{equation}
\end{prop}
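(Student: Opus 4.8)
The plan is to mirror the derivation of Proposition~\ref{prop:E_emb_g}, but now carrying the one-layer Transformer structure through the computation instead of the Emb-MLP. First I would write down the gradient flow of $\vw^{{\rm emb},s}$ for the model $\vf_{\vtheta}$ of Definition~\ref{def:1_layer_transformer} under cross-entropy loss with last-token prediction. The key simplifications come from the small-initialization limit: by Assumption~\ref{assump:activation} the activation derivative $\sigma'(\cdot)$ evaluates to $\bm{1}$ at initialization (since the pre-activation argument vanishes as the scale shrinks and $\sigma'(0)=1$), and the softmax output collapses to the uniform vector $\frac{1}{d_m}\bm{1}$ (or $\frac{1}{d_{\rm vob}}\bm{1}$ after the final projection, which Definition~\ref{def:1_layer_transformer} tells us to ignore). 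Simultaneously, Lemma~\ref{lem:attention} lets me replace $\vA_{L,:}$ by the averaging row $\frac{1}{L}\bm{1}^T$ (up to $\varepsilon$, which vanishes as $\gamma\to\infty$). Under these replacements the Jacobian of $\vf_{\vtheta}$ with respect to $\vw^{{\rm emb},s}$ factors through the resnet-augmented linear maps: each occurrence of $s$ in position $j$ contributes a term $\frac{1}{L}\vW^{VO} (\vW^{f}+\vI)$ from the attention-plus-feedforward branch together with the direct residual term, and these combine into the single matrix $\tilde{\vW}=(\vW^{f,T}+\vI)(\vW^{VO,T}+\vI)$ acting on the transposed label residual.

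Next I would take the population limit $n\to\infty$ exactly as in Proposition~\ref{prop:E_emb_g}: the empirical average $\frac{1}{n}\sum_{i=1}^{n_s}(\vy^{s,i}-\frac{1}{d_m}\bm 1)$ converges to $r_s\,(\vP^s-\frac{1}{d_m}\bm 1)$ where $\vP^s$ is the label distribution of sequences containing $s$. For $s\in\fA_{\rm mem}$, Equation~\eqref{eq:dist_mem} gives $\vP^s=\frac{1}{N_{\fZ}}\vdelta^{\fZ}$, i.e.\ the uniform distribution supported on the key set $\fZ$ (here $\vdelta^\fZ$ is the indicator vector of $\fZ$). The factor $\frac{1}{L}$ in front is the one surviving coefficient from the averaging attention row hitting the single position where the anchor $s$ sits — I should be a little careful that an anchor appears exactly once in each sequence of $\fX^{(q,L)}$ that contributes to its count, so that only one $\frac{1}{L}$ appears and not $q/L$ or similar; this is where bookkeeping on "which position and how many times $s$ occurs" must be done honestly. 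Assembling the pieces yields
\begin{equation*}
    \frac{d\vw^{{\rm emb},s}}{dt}=\frac{r_s}{L}\left(\frac{\vdelta^\fZ}{N_{\fZ}}-\frac{1}{d_m}\bm{1}\right)\tilde{\vW},
\end{equation*}
which is the claimed formula.

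The main obstacle I anticipate is justifying the two limiting approximations rigorously and in the right order. The statement stacks $n\to\infty$ and $\gamma\to\infty$, and these interact: Lemma~\ref{lem:attention} is a probabilistic statement about the attention entries at initialization with error $\varepsilon$ and failure probability $\varepsilon$, while the $\sigma'\to\bm 1$ and softmax$\to$uniform reductions also hold only approximately for finite $\gamma$, and only \emph{at initialization} — strictly speaking the gradient flow then moves the parameters. So the honest content is that \emph{the leading-order term of the gradient flow near initialization} is the stated expression; I would phrase the argument as computing the $t=0^+$ velocity field in the $\gamma\to\infty$ limit and invoking continuity, rather than claiming the identity holds for all $t$. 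A secondary, more mechanical obstacle is keeping the transpose conventions straight: the embedding residual enters $\vf_\vtheta$ on the left ($\vW^{{\rm emb},X}_{L,:}$ times weight matrices), so when differentiating the scalar loss and collecting the coefficient of the perturbation $\delta\vw^{{\rm emb},s}$, the factors come out in the order $(\vW^{f,T}+\vI)(\vW^{VO,T}+\vI)$ — exactly the definition of $\tilde\vW$ — but verifying that the residual "$+\vI$" terms land in both factors (once from the feedforward resnet, once from the attention-branch resnet) and not just one requires writing the Jacobian term by term.
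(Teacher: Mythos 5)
Your proposal matches the paper's proof: the paper first derives the exact gradient flow of $\vw^{{\rm emb},s}$ for the one-layer Transformer via the trace/chain rule (Lemma~\ref{lem:gf_emb_f}), then applies Assumption~\ref{assump:activation} and the small-initialization limit to replace $\sigma'$ by $\bm{1}$, the softmax output by $\frac{1}{d_m}\bm{1}$ and the attention row by the averaging operator, takes $n\to\infty$ by the law of large numbers to obtain $\frac{r_s}{L}\mathbb{E}_{Y^s}\left[\vY^s-\frac{1}{d_m}\bm{1}\right]\tilde{\vW}$, and finally substitutes the memory-anchor label distribution $\vP^s=\frac{1}{N_{\fZ}}\vdelta^{\fZ}$. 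The bookkeeping issue you flag --- merging the attention-branch sum over all $n_s$ occurrences with the residual-branch sum over the $\tilde{n}_s$ last-position occurrences into the single factor $\frac{r_s}{L}(\cdots)\tilde{\vW}$ --- is precisely the step the paper itself passes over without comment, so your account is faithful to (and no less careful than) the original.
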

  \begin{prop}\label{prop:emb_rsn_tran}
      For any $s\in\fA_{\rm rsn}$, let $n,\gamma\rightarrow\infty$, with Assumption~\ref{assump:activation} we have the following result:
     \begin{equation}
        \frac{d\vw^{{\rm emb},s}}{dt}=\frac{r_s}{L}\left(\vP^{s}-\frac{1}{d_m}\bm{1}\right)\tilde{\vW},
     \end{equation}
     where the $i$-th element of $\vP^s$ is defined as $\vP^{s}_i=\mathbb{P}\left(Z+\sum_{j=1}^{q-1}A_j=i-s\mid s\in\fA_{\rm rsn}\right)$.
 \end{prop}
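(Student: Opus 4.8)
The plan is to mimic the derivation that produced Proposition~\ref{prop:E_emb_g}, but now with the one-layer Transformer $\vf_{\vtheta}$ of Definition~\ref{def:1_layer_transformer} replacing the Emb-MLP. First I would write out the per-sample loss $\ell(X,y)=\mathrm{CE}(\mathrm{softmax}(\vf_{\vtheta}(X)),\vy)$ and differentiate $\mathcal{L}=\frac1n\sum_i \ell(X^i,y^i)$ with respect to the row $\vw^{{\rm emb},s}$ of $\vW^{\rm emb}$. Since each token $s$ enters $\vf_{\vtheta}(X)$ only through the embedding rows selected by the one-hot encodings in $X$, the chain rule gives a sum over the $n_s$ occurrences of $s$, each term being (gradient of $\ell$ w.r.t.\ the network output)~$\times$~(Jacobian of $\vf_{\vtheta}$ w.r.t.\ that embedding row). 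The output-gradient of cross-entropy is $\mathrm{softmax}(\vf_{\vtheta}(X^{s,i}))-\vy^{s,i}$, which up to the dimension mismatch (the model outputs in $\sR^{d_m}$, not $\sR^{d_{\rm vob}}$; I would note that the paper suppresses the final projection, so the relevant one-hot target lives in the $d_m$-space, explaining the $\frac{1}{d_m}\bm 1$ rather than $\frac{1}{d_{\rm vob}}\bm 1$) becomes $\frac{1}{d_m}\bm 1-\vy^{s,i}$ in the small-initialization limit.

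Next I would invoke the small-initialization simplifications. By Assumption~\ref{assump:activation} and $\gamma\to\infty$, every pre-activation argument of $\sigma$ vanishes, so $\sigma'(\cdot)=\bm 1$ and the FNN acts linearly with Jacobian $\vW^{f1}\vW^{f2}=\vW^f$; combined with the two residual connections this collapses the output-to-embedding Jacobian (through the feedforward block and the $\vW^V\vW^O$ path) to the matrix $\tilde{\vW}=(\vW^{f,T}+\vI)(\vW^{VO,T}+\vI)$ exactly as defined before the proposition. Simultaneously, by Lemma~\ref{lem:attention} the attention row $\vA_{L,:}$ is (with probability $\to1$) the uniform average $\frac1L\bm 1^\top$ over the $L$ positions, so the coefficient with which $\vw^{{\rm emb},s}$ feeds into position $L$ is $\frac1L$ for each position at which $s$ sits; this is the source of the overall $\frac1L$ prefactor. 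Putting these together, the gradient flow reads $\frac{d\vw^{{\rm emb},s}}{dt}=\frac{1}{nL}\sum_{i=1}^{n_s}\bigl(\vy^{s,i}-\frac{1}{d_m}\bm 1\bigr)\tilde{\vW}$, the exact analogue of~\eqref{eq:gf_emb_g_smallinit}.

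Finally I would take $n\to\infty$ with $r_s=n_s/n$ fixed, exactly as in Proposition~\ref{prop:E_emb_g}: the empirical average $\frac1{n_s}\sum_i\vy^{s,i}$ converges to $\mathbb E[\vy\mid \text{sequence contains }s]$, i.e.\ to the label-distribution vector $\vP^s$ with $\vP^s_i=\mathbb P(Y^s=i)$. This yields $\frac{d\vw^{{\rm emb},s}}{dt}=\frac{r_s}{L}(\vP^s-\frac{1}{d_m}\bm 1)\tilde{\vW}$. For Proposition~\ref{prop:emb_rsn_tran} I then substitute the reasoning-label law from~\eqref{eq:dist_rsn}, namely $\vP^s_i=\mathbb P(Z+\sum_{j=1}^{q-1}A_j=i-s\mid s\in\fA_{\rm rsn})$, and we are done. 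For Proposition~\ref{prop:emb_mem_tran} I substitute the memory-label law~\eqref{eq:dist_mem}: since the memory label is uniform on $\fZ$ independently of which memory anchor appears, $\vP^s=\frac{1}{N_{\fZ}}\vdelta^{\fZ}$ (the indicator vector of $\fZ$ scaled to a probability vector), giving $\frac{d\vw^{{\rm emb},s}}{dt}=\frac{r_s}{L}(\frac{\vdelta^\fZ}{N_\fZ}-\frac{1}{d_m}\bm 1)\tilde{\vW}$.

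The main obstacle I anticipate is not the limiting law but the careful bookkeeping of the Jacobian $\frac{\partial \vf_{\vtheta}(X)}{\partial \vw^{{\rm emb},s}}$ before the limit: the embedding row $\vw^{{\rm emb},s}$ appears inside $\vA$ (through both $\vW^Q,\vW^K$ and through $\vW^{{\rm emb},X}$), inside the value path $\vA_{L,:}\vW^{{\rm emb},X}\vW^V\vW^O$, inside the residual term $\vW^{{\rm emb},X}_{L,:}$, and inside the FNN pre-activation. I would argue that in the $\gamma\to\infty$ limit the contributions routed through $\vW^Q,\vW^K$ (i.e.\ through the $X$-dependence of the attention weights) are higher order in the initialization scale and vanish, so only the "value + residual, then linear FNN" paths survive, collapsing to $\frac1L\tilde{\vW}$; making this vanishing quantitative — e.g.\ with a probabilistic bound in the spirit of Lemma~\ref{lem:attention} — is the step that needs the most care, and the rest is a routine law-of-large-numbers argument identical to the Emb-MLP case.
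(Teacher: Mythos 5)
Your proposal follows essentially the same route as the paper: compute the gradient of the cross-entropy loss with respect to the embedding row via the chain rule, invoke Lemma~\ref{lem:attention} to treat $\vA_{L,:}$ as the uniform average (source of the $\tfrac1L$ factor), use Assumption~\ref{assump:activation} in the small-initialization limit to linearize the FNN and replace the softmax output by $\tfrac{1}{d_m}\bm 1$ so the Jacobian collapses to $\tilde{\vW}$, and then pass to $n\to\infty$ by the law of large numbers to replace the empirical label average by $\vP^s$. The only cosmetic difference is that the paper tracks the direct residual contribution $\vW^{{\rm emb},X}_{L,:}$ through a separate count $\tilde n_s$ of last-position occurrences and never differentiates through the attention weights at all (treating them as a fixed averaging operator), whereas you fold the residual path into a single Jacobian and explicitly flag the $\vW^Q,\vW^K$-dependence as a higher-order term to be bounded — if anything, a more careful accounting than the paper's.
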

 \begin{figure*}[htbp]
    \centering
    \includegraphics[width=1\linewidth]{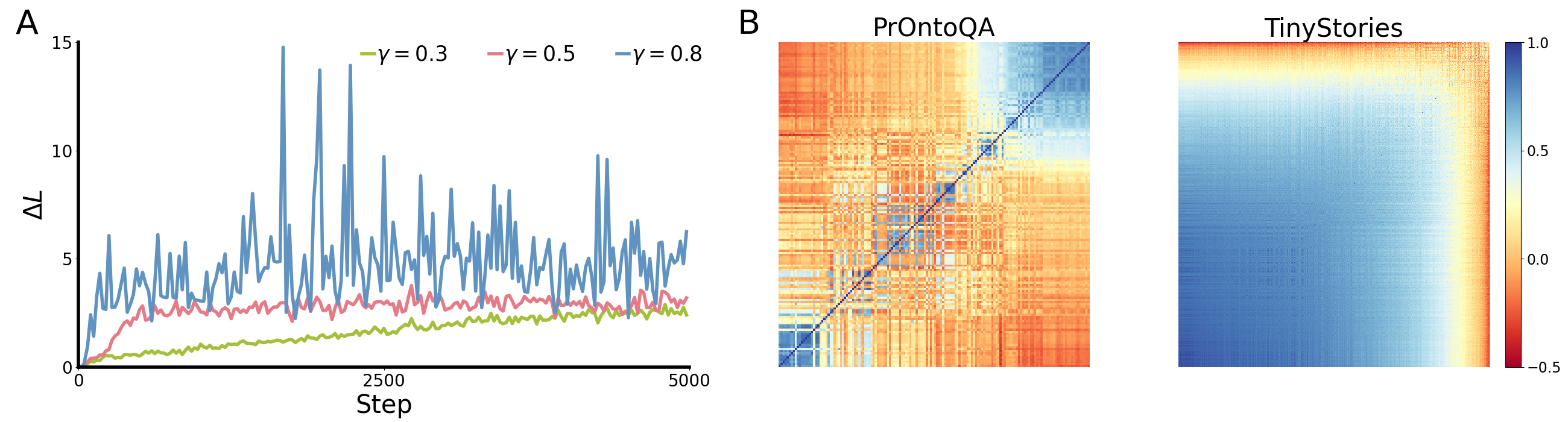}
    \vspace{-15pt}
    \caption{Reasoning bias of GPT-2 in real language tasks. A: dynamics of $\Delta L$ during early training stage with initializations scales $\gamma=0.3, 0.5$ and $0.8$. B: Cosine similarity among embedding space of PrOntoQA dataset and TinyStories dataset at step 5000 with $\gamma=0.8$.}
    \label{fig:realtask}
\end{figure*}
 Furthermore, we utilize the normal distribution to approximate the distribution of $Y^{s},s\in\fA_{\rm rsn}$ and give an approximation of $\vw^{{\rm emb},s}$ to describe the overall structure and internal relationships within the embedding space observed in real-world training scenarios.
 
\begin{theorem}\label{thm:emb_reconstruct}
    let $n\rightarrow\infty$, define the learning rate $\eta$ and assume that $L-q=O(1),\frac{r_s\eta}{L}=O(1)$ and $||\vw^{\rm emb}||_{\infty}\leq O\left(d_m^{-\gamma}\right)$ at initialization. We propose the approximation of $\vw^{{\rm emb},s},s\in\fA_{\rm rsn}$ by
    \begin{equation}\label{eq:estimation_wemb}
        \tilde{\vw}^{{\rm emb},s}_j=C_1\left(C_2e^{-\frac{(j-s)^2}{2\sigma_P}}-\frac{1}{d_m}\right)+\varepsilon,
    \end{equation}
    where $C_1,C_2,\sigma_P$ are constants depending on $r_s,\eta,L,q$ and  $\varepsilon\sim\fN(0,(d_m^{-\gamma})^2)$. Then we have the following result
    \begin{equation}
    \begin{aligned}
        \sup_{i,j}&{\left|\left(\tilde{\vw}^{{\rm emb},s_j},\tilde{\vw}^{{\rm emb},s_i}\right)-\left({\vw}^{{\rm emb},s_j},{\vw}^{{\rm emb},s_i}\right)\right|}\\ &\leq O\left(d_m^{1-\gamma}\left(q^{-\frac{1}{2}}+d_m^{-\gamma}\right)\right),
    \end{aligned}
    \end{equation}
    where $\left(\cdot,\cdot\right)$ denotes the inner production.
\end{theorem}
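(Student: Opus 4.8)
The plan is to derive the closed-form approximation \eqref{eq:estimation_wemb} from the gradient flow in Proposition~\ref{prop:emb_rsn_tran} and then bound the error it incurs in the pairwise inner products. First I would integrate the gradient flow $\frac{d\vw^{{\rm emb},s}}{dt}=\frac{r_s}{L}\left(\vP^s-\frac{1}{d_m}\bm{1}\right)\tilde{\vW}$ over the early-training window: since $\vP^s$, $r_s$, $L$ and (to leading order under small initialization) $\tilde{\vW}$ are approximately time-independent, one Euler step with learning rate $\eta$ gives $\vw^{{\rm emb},s}\approx \vw^{{\rm emb},s}_{0}+\tfrac{r_s\eta}{L}\left(\vP^s-\tfrac{1}{d_m}\bm 1\right)\tilde{\vW}$, where the initialization term $\vw^{{\rm emb},s}_{0}$ is the $\varepsilon\sim\fN(0,(d_m^{-\gamma})^2)$ contribution, using the hypothesis $\|\vw^{\rm emb}\|_\infty\le O(d_m^{-\gamma})$. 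I would then invoke a local CLT / normal approximation: $\vP^{s}_i=\mathbb{P}(Z+\sum_{j=1}^{q-1}A_j=i-s)$ is the distribution of a sum of $q$ i.i.d.\ bounded uniform variables shifted by $s$, so by a Berry--Esseen-type bound its mass function is $C_2 e^{-(i-\mu_q-s)^2/(2\sigma_P)}$ up to an additive error of order $q^{-1/2}$ times the Gaussian envelope (the $q^{-1/2}$ in the final bound traces back here). Absorbing $\mu_q$ into the index and the factors $\tfrac{r_s\eta}{L}=O(1)$, the multiplicative action of $\tilde{\vW}$, and the $1/d_m$ offset into the constants $C_1,C_2,\sigma_P$ yields exactly \eqref{eq:estimation_wemb}.

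For the error bound, I would write $\vw^{{\rm emb},s}=\tilde{\vw}^{{\rm emb},s}+\vr^{s}$ where $\vr^{s}$ collects (i) the local-CLT remainder $O(q^{-1/2})$ per coordinate (after scaling by $C_1$, still $O(q^{-1/2})$), and (ii) higher-order-in-$t$ / higher-order-in-initialization corrections, which under $\frac{r_s\eta}{L}=O(1)$ and $\|\vw^{\rm emb}_0\|_\infty\le O(d_m^{-\gamma})$ are $O(d_m^{-\gamma})$ per coordinate. Each of $\tilde{\vw}^{{\rm emb},s}$ and $\vw^{{\rm emb},s}$ has coordinates of size $O(1)$, hence $\ell^2$-norm $O(d_m^{1/2})$, while $\|\vr^{s}\|_2\le O\!\big(d_m^{1/2}(q^{-1/2}+d_m^{-\gamma})\big)$. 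Expanding the inner product,
\begin{align*}
\left|\left(\tilde{\vw}^{{\rm emb},s_j},\tilde{\vw}^{{\rm emb},s_i}\right)-\left(\vw^{{\rm emb},s_j},\vw^{{\rm emb},s_i}\right)\right|
&\le \left|\left(\vr^{s_j},\tilde{\vw}^{{\rm emb},s_i}\right)\right|+\left|\left(\tilde{\vw}^{{\rm emb},s_j},\vr^{s_i}\right)\right|+\left|\left(\vr^{s_j},\vr^{s_i}\right)\right|,
\end{align*}
and Cauchy--Schwarz on each term gives $O\!\big(d_m^{1/2}\cdot d_m^{1/2}(q^{-1/2}+d_m^{-\gamma})\big)=O\!\big(d_m^{1-\gamma}(q^{-1/2}+d_m^{-\gamma})\big)$, uniformly in $i,j$ since the constants hidden in $\vr$ do not depend on the anchor. (The noise term $\varepsilon$ in \eqref{eq:estimation_wemb} is handled inside $\vr$: cross terms with $\varepsilon$ contribute $O(d_m^{1/2}\cdot d_m^{1/2-\gamma})$, consistent with the stated bound.)

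The main obstacle I anticipate is making the ``early-training'' linearization rigorous: I am treating $\tilde{\vW}$ and $r_s$ as frozen and taking a single gradient step, but the theorem's statement with a learning rate $\eta$ suggests the intended regime is exactly one (or $O(1)$ many) discrete updates from initialization, where Proposition~\ref{prop:emb_rsn_tran}'s limiting dynamics and Lemma~\ref{lem:attention}'s average-operator approximation both hold with high probability for $\gamma$ large. I would need to argue that the drift in $\tilde{\vW}$ over this window is lower-order than $d_m^{-\gamma}$ (it starts at size $O(d_m^{-2\gamma})$ away from $\vI\cdot\vI$, so $\tilde{\vW}\approx\vI$ and the whole update is essentially $\tfrac{r_s\eta}{L}(\vP^s-\tfrac1{d_m}\bm1)$), and that the softmax-is-uniform and $\sigma'=\bm1$ approximations underlying Propositions~\ref{prop:E_emb_g}--\ref{prop:emb_rsn_tran} hold uniformly over the samples with the claimed probability. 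The second, more technical obstacle is the quantitative local CLT: I need the $q^{-1/2}$ error to hold in sup-norm over coordinates $i$ simultaneously against the Gaussian envelope, which requires a lattice Berry--Esseen estimate (e.g.\ via characteristic functions of the uniform summands) rather than the classical CLT; the summands are bounded and non-degenerate, so this is standard but needs to be stated carefully to get the envelope-weighted bound that survives multiplication by $C_1=O(d_m\cdot\text{const})$ in the $\ell^2$ estimate.
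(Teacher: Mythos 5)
Your overall route is the same as the paper's: a single Euler/linearization step of the gradient flow from Proposition~\ref{prop:emb_rsn_tran} starting from the $O(d_m^{-\gamma})$ initialization, a Berry--Esseen (really a lattice local-CLT) approximation of $\vP^s$ by a Gaussian envelope with sup-norm error $O(q^{-1/2})$, and then a term-by-term comparison of the inner products. Your per-coordinate residual bound $\sup_i|\tilde{\vw}^{{\rm emb},s}_i-\vw^{{\rm emb},s}_i|\leq O(q^{-1/2}+d_m^{-\gamma})$ matches the paper's intermediate estimate exactly, and your worries about freezing $\tilde{\vW}$ (which is $\vI+O(d_m^{-2\gamma})$ at initialization) and about needing a lattice version of Berry--Esseen are legitimate points that the paper itself glosses over.

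However, the last step of your error analysis has a concrete gap. You assert that the coordinates of $\tilde{\vw}^{{\rm emb},s}$ and $\vw^{{\rm emb},s}$ are $O(1)$, hence $\|\tilde{\vw}^{{\rm emb},s}\|_2=O(d_m^{1/2})$, and then write $O\bigl(d_m^{1/2}\cdot d_m^{1/2}(q^{-1/2}+d_m^{-\gamma})\bigr)=O\bigl(d_m^{1-\gamma}(q^{-1/2}+d_m^{-\gamma})\bigr)$. But $d_m^{1/2}\cdot d_m^{1/2}=d_m$, not $d_m^{1-\gamma}$, so with your stated norms Cauchy--Schwarz only yields $O\bigl(d_m(q^{-1/2}+d_m^{-\gamma})\bigr)$, which is weaker than the theorem by a factor of $d_m^{\gamma}$. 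The missing factor must come from the smallness of the embedding coordinates themselves: after one step, $\vw^{{\rm emb},s}_i=\frac{r_s\eta}{L}(\vP^s_i-\frac{1}{d_m})+\varepsilon_i$ has magnitude controlled by $\max_i\vP^s_i$ and the initialization scale, and the argument needs $\|\tilde{\vw}^{{\rm emb},s}\|_\infty,\|\vw^{{\rm emb},s}\|_\infty\leq O(d_m^{-\gamma})$ (equivalently $\ell^2$-norms $O(d_m^{1/2-\gamma})$) so that each of the $d_m$ coordinate-wise product differences is $O\bigl(d_m^{-\gamma}(q^{-1/2}+d_m^{-\gamma})\bigr)$; this is how the paper reaches $O\bigl(d_m^{1-\gamma}(q^{-1/2}+d_m^{-\gamma})\bigr)$, summing $\sum_k\sup_{i,j}|\tilde{\vw}^{{\rm emb},s_j}_k\tilde{\vw}^{{\rm emb},s_i}_k-\vw^{{\rm emb},s_j}_k\vw^{{\rm emb},s_i}_k|$ directly rather than via a vector-level Cauchy--Schwarz. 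Your decomposition $\vw=\tilde{\vw}+\vr$ works just as well once you replace the $O(1)$ coordinate claim by the correct $O(d_m^{-\gamma})$ bound (note also that the pure cross term $|(\vr^{s_j},\vr^{s_i})|$ then needs $q^{-1/2}=O(d_m^{-\gamma})$, or a separate remark, to be absorbed into the stated rate).
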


Additionally, for any key $z\in\fZ$ and $\fA_{mem}$, we could have a similar result.
To validate our theory analysis, we set the detailed formulation of reasoning anchors and keys as
\begin{equation}
\begin{aligned}
    &\tilde{\vw}^{{\rm emb},s}_i=e^{-\frac{\left(i-s\right)^2}{12}}-\frac{1}{d_m}+\varepsilon,\quad s\in\fA_{\rm rsn},\\
    &\tilde{\vw}_i^{{\rm emb},s}=e^{-\frac{\left(i-s\right)^2}{12}}+\varepsilon,\quad s\in\fZ.
\end{aligned}
\end{equation}
 Figure~\ref{fig:embeding-transformer}C exhibits the cosine similarity among the $\tilde{\vw}^{{\rm emb},s}$ for any $s\in\fA_{\rm rsn}$ (top) and compare $\cos\left(\vw^{\rm emb,15},\vw^{\rm emb,s_j}\right)$ in real training process with the theoretical approximation $\cos\left(\tilde{\vw}^{\rm emb,15},\tilde{\vw}^{\rm emb,s_j}\right)$ (bottom, a complete comparison is provided in Appendix~\ref{app:validation_of_theory}). Figure~\ref{fig:embeding-transformer}D presents the PCA projection of $\vw^{{\rm emb},s}$ for $s\in\fA_{\rm rsn}$ and $\fZ$, respectively. These visualizations exhibit a strong alignment with the experimental observations, thereby substantiating the validity of our analysis. The proofs of our theoretical results are provided in Appendix~\ref{app:proof_lemma},~\ref{app:proof_prop_emb_tran}, and~\ref{app:proof_theorem}.

\subsection{Real Language Tasks}
For the experiment in Figure~\ref{fig:loss_PrOntoQA}, we also conduct comparisons with initialization scales $\gamma=0.3$ and $0.5$. To quantify the reasoning bias of the model, we define the following metric:
\begin{equation*}
    \Delta L:= \frac{L_{\text{Tinystories}}-L_{\text{PrOntoQA}}}{L_{\text{PrOntoQA}}},
\end{equation*}
where $L_{\text{Tinystories}}$ and $L_{\text{PrOntoQA}}$ denote the loss on TinyStories and PrOntoQA, respectively. As $\gamma$ increases, $\Delta L$ exhibits an upward trend, indicating a growing bias for reasoning task, which is depicted in Figure~\ref{fig:realtask}A. To further validate our analysis, we examine the embedding space of the GPT-2 model during the early stages of training which we train with a small initialization scale. Figure~\ref{fig:realtask}B reveals that the embeddings of tokens in PrOntoQA are significantly more distinguishable from each other compared to the tokens in TinyStories. The average cosine similarity among the PrOntoQA is 0.123 while 0.531 among the TinyStories. These results provide strong support for our analysis, highlighting the impact of embedding distinguishability on training preference.


\subsection{Effect of Label's Distribution}
\begin{figure}[htbp]
    \centering
    \includegraphics[width=1\linewidth]{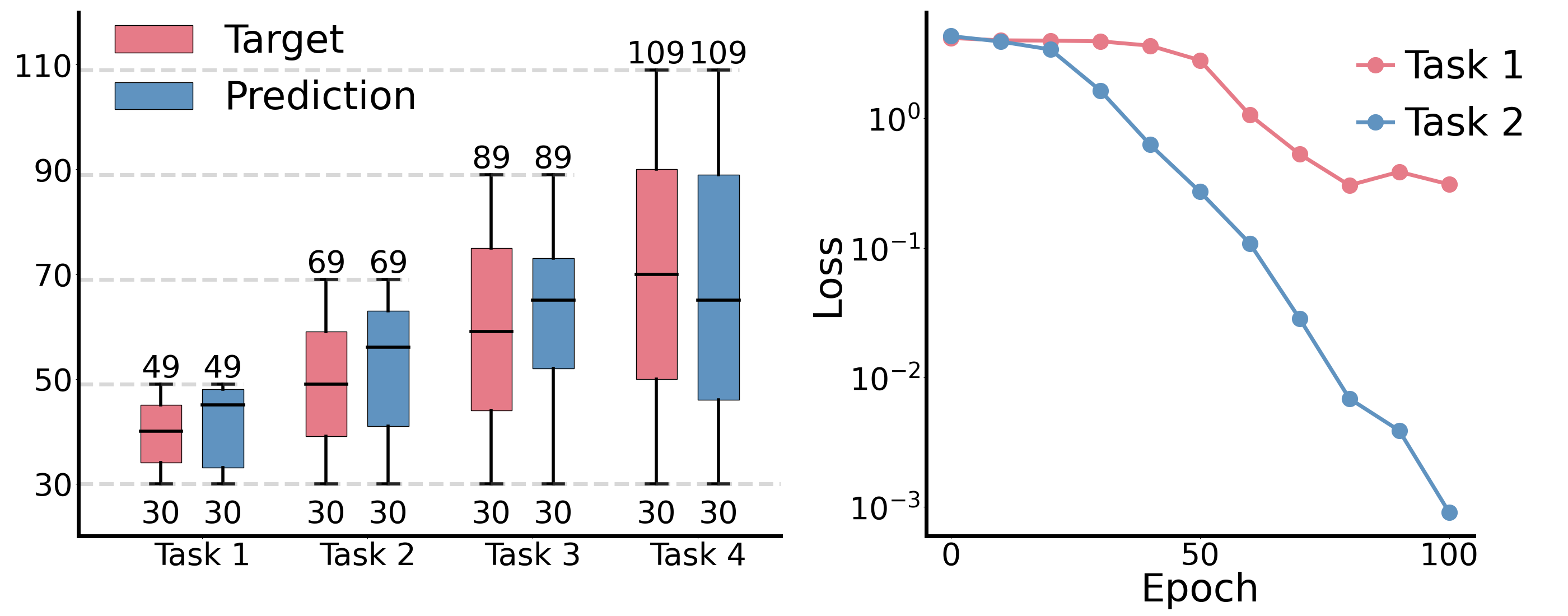}
    \vspace{-20pt}
    \caption{Left: Distribution of targets and predictions in 4 groups of memory tasks. Red represents the target distribution in each group and blue represents the prediction distribution. Right: Learning speed comparison for $\fF_{\rm mem,1}$ (red) and $\fF_{\rm mem,2}$ (blue).}
    \label{fig:label}
\end{figure}
Previous sections reveal that under small initialization, the distribution of labels plays a pivotal role in shaping the embedding space of tokens and regulating the model’s training dynamics. To more intuitively demonstrate the impact of the label distribution of each token on its output, we designed four groups of memory mappings. The label ranges of the four groups are set to $\{30,\cdots,29+20\times i\},i=1,2,3,4$. The right picture of Figure \ref{fig:label} illustrates the distribution of the model's outputs for each group during the early stages of training. Notably, it can be observed that even at this initial stage, when the model's accuracy is still relatively low, its outputs do not exceed the range of the label distributions. This highlights the critical influence of label distributions on the token structure, which in turn significantly impacts the model’s outputs. Additionally, we compare two memory tasks with differing label distributions. In the first task, denoted $\fF_{mem,1}$, for any key-anchor pair $(z_p,\va)$, the label $y^{(z_p,\va)}$ is randomly sampled from $\fZ$. In the second task $\fF_{mem,2}$, $y^{(z_p,\va)}$ is randomly sampled from $\left\{z_p-\sum_{i=p+1}^{p+q}a_i,\cdots,z_p+\sum_{i=p+1}^{p+q}a_i\right\}$. While both tasks are clearly memory tasks, the label distributions in $\fF_{mem,2}$ vary depending on the anchor. As shown in the left picture of Figure \ref{fig:label}, the learning rate for $\fF_{mem,2}$ is significantly faster than that for $\fF_{mem,1}$. This observation underscores the crucial role that label distribution plays in the model's learning process.

\section{Related Works}

Recent advancements in large language models have shown remarkable capabilities, often surpassing human-level performance in many tasks~\cite{fu2022does, wei2022emergent}. However, despite their strong performance in many aspects~\cite{srivastava2022beyond}, LLMs face challenges in handling complex reasoning tasks~\cite{csordas2021neural, dziri2024faith, hupkes2018learning, lepori2023break, okawa2023compositional, yun2022vision, wang2024towards, csordas2022ctl}. For example, Ramesh et al.~\cite{ramesh2023capable} show that Transformers trained on a synthetic benchmark struggle when tasked with combining multiple reasoning steps. Similarly, Liu et al.~\cite{liu2022transformers} suggest that shallow Transformers tend to learn shortcuts during training, which limits their ability to perform well in more complex reasoning scenarios. Several strategies have been proposed to address these challenges, such as encouraging the generation of explicit reasoning steps in a single output~\cite{wei2022chain} or using LLMs to iteratively produce reasoning steps~\cite{creswell2022selection, creswell2022faithful}. Despite these efforts, achieving reliability remains a significant challenge. Additionally, some studies have explored the internal mechanisms of language models to enhance their performance~\cite{wang2024improving, wang2024understanding, wang2023label}, but they often do not address the impact of training dynamics on the model's final behavior. To better understand these models' behaviors and inner workings, Zhang et al.~\cite{zhang2024anchor} introduced anchor functions as a tool for probing Transformer behavior. Building on this framework, our research investigates how different initialization scales influence model reasoning bias and internal mechanisms from a perspective of training dynamics.

The fitting ability and generalization of deep learning models are critical in understanding deep learning~\cite{breiman1995reflections,zhang2016understanding}, and the initialization of neural network parameters plays a crucial role in determining the network's fitting results~\cite{arora2019exact, chizat_global_2018, zhang_type_2019, e2020comparative, jacot_neural_2018, mei_mean_2018, rotskoff_parameters_2018, sirignano_mean_2020, williams_gradient_2019}. Luo et al.~\cite{luo2021phase} and Zhou et al.~\cite{zhou2022empirical} primarily identify the linear and condensed regimes in wide ReLU neural networks. In the condensed regime, neuron weights within the same layer tend to become similar. A body of research indicates that condensed networks often exhibit strong generalization capabilities~\cite{zhang2022linear, zhang2023loss, zhang2023stochastic, zhang2024implicit}. See \cite{xu2025overview} for an overview of condensation. In our study, we demonstrate that with small initialization values, the parameters of the embedding layer can reach a low-rank state rather than a condensed state. This means that while embeddings of different tokens become linearly dependent, they are not identical. This distinction allows low-rank models to effectively capture essential patterns and generalize well without the stringent alignment required by condensation, which is particularly important for applications such as word embedding matrices where distinct representations for different tokens are necessary. Recent investigations have also explored how initialization affects the training dynamics of LLMs~\cite{huang2020improving, liu2020understanding, trockman2023mimetic, wang2024deepnet, zhang2019improving, zhu2021gradinit}. These studies mainly examine how the scale of initialization influences the stability of the training process and is vital for ensuring efficient and effective training of LLMs. In our work, we observe that different initialization schemes result in varying speeds of convergence for memorization tasks versus reasoning tasks and provide a theoretical rationale for this behavior.

\section{Conclusions}
In this paper, we investigate the underlying mechanism of which small initialization scales promote a reasoning preference in language models. Our findings suggest that the label distribution of tokens plays a pivotal role in shaping the embedding space, thereby influencing the learning dynamics and task complexity. Our result can be readily extended to the next-token prediction training and obtain similar results.  This perspective is supported through a combination of experimental observations and theoretical analysis, providing a deeper understanding of how initialization strategies impact task-specific behavior in language models. 

\section*{Acknowledgements}
This work is supported by the National Key R\&D Program of China Grant No. 2022YFA1008200, the National Natural Science Foundation of China Grant No. 92270001(Z. X.), 12371511 (Z. X.), 12422119 (Z. X.), Shanghai Municipal of Science and Technology Major Project No. 2021SHZDZX0102 (Z. X.), and the HPC of School of Mathematical Sciences and the Student Innovation Center, and the Siyuan-1 cluster supported by the Center for High Performance Computing at Shanghai Jiao Tong University, and Key Laboratory of Marine Intelligent Equipment and System (Ministry of Education, P.R. China), and SJTU Kunpeng \& Ascend Center of Excellence.

\section*{Impact Statement}
Our works provide new insights into the intrinsic mechanisms underlying the reasoning bias of language models under small initialization scales, as well as the training behavior of individual modules within the model architecture. These findings not only contribute to understanding the model behavior and training mechanisms, but also help with optimizing model initialization strategies and designing novel algorithms to enhance the reasoning capabilities of language models.

\bibliographystyle{icml2025}
\bibliography{ref}

\newpage
\appendix
\onecolumn

\section{Basic Settings}\label{app:setup}


\subsection{Transformer Architecture}\label{sec:model_arc}
For any sequence $X\in\fX^{\left(q,L\right)}$, we denote its one-hot vector by $\ve^X$. The word embedding $\vW^{\mathrm{emb}}$ and the input to the first Transformer block $X^{(1)}$ is calculated as:
\begin{equation}
\vW^{\mathrm{emb},X} = \ve^X\vW^{\mathrm{emb}},\text{\quad} \vX^{(1)} = \vW^{\mathrm{emb},X} + \vW^{\mathrm{pos}},
\end{equation}
where $\vW^{\mathrm{pos}}$ is a trainable positional vector. For the $l$-th layer, the $\vQ, \vK, \vV$ are defined as:
\begin{equation}
    \vQ^{(l)} = \vX^{(l)}\vW^{Q(l)},  \quad \vK^{(l)} = \vX^{(l)}\vW^{K(l)}, \quad \vV^{(l)} = \vX^{(l)}\vW^{V(l)}.
\end{equation}
The attention matrix ${\rm Attn}^{(l)}$ and its subsequent output  $\vX^{\mathrm{qkv}(l)}$ for the $l$-th layer is computed as:
\begin{equation}
    {\rm Attn}^{(l)} = \mathrm{softmax}\left({\rm mask}\left(\frac{\vQ^{(l)}\vK^{(l)T}}{\sqrt{d_k}}\right)\right) , \quad  \vX^{\mathrm{qkv}(l)} = {\rm Attn}^{(l)} \vV^{(l)}.
\end{equation}
The output of the $l$-th attention layer is obtained as:
\begin{equation}
    \vX^{\mathrm{ao}(l)} = \text{LN}\left(\vX^{(l)} + \vX^{\mathrm{qkv}(l)}\vW^{O(l)}\right), \quad \vX^{(l+1)}:=\vX^{\mathrm{do}(l)}=\text{LN}\left(\text{MLP}\left(\vX^{\mathrm{ao}(l)}\right)+\vX^{\mathrm{ao}(l)}\right), 
\end{equation}
where ``LN'' refers to Layer Normalization. The final output is obtained by projecting the output of the last layer $\vX^{\mathrm{do}(L)}$ using a linear projection layer, followed by a softmax operation and argmax to obtain the predicted token.

\subsection{Experimental Setups}
For those experiments about the Transformer structure, we train three Transformer models on a dataset of 200,000 samples, with each input sequence having a fixed length of 9 tokens. The vocabulary size $d_{\rm vob}$ is set to 200, and the model architecture includes an embedding dimension $d_m$ of 200, a feedforward dimension $d_f$ of 512, and query-key-value projection dimension $d_k$ of 64. The Transformer-based model uses 2 decoder layers with 1 attention head per layer. The training is conducted for 1000 epochs with a batch size of 100, and gradient clipping is applied with a maximum norm of 1. The AdamW optimizer is employed with an initial learning rate of $1 \times 10^{-5}$. The initialization rates of the three models are $\gamma=0.3,0.5,0.8$.

For those experiments related to Emb-MLP, we train three Emb-MLP models with $d_{\rm vob}=200,d_m=200,d_f=512$ and initialization scales $\gamma=0.3,0.5,0.8$. A dataset comprising $1,000,000$ data pairs is employed. We set that $\fA_{\rm mem}=\left\{1,2,\cdots,10\right\},\fA_{\rm rsn}=\left\{11,12,\cdots,20\right\},\fZ=\left\{21,22,\cdots,120\right\},\fM=\left\{\left(11,13\right),\left(13,11\right)\right\}$. The initial learning rate is $5\times 10^{-6}$ and all other training setups remain consistent with those described in the first paragraph.

For Figure~\ref{fig:loss_PrOntoQA} and Figure~\ref{fig:realtask}, we use GPT-2 models with an initialization scale $\gamma=0.3,0.5,0.8$. The dataset contains 10,000 data sequences, with half of them sourced from PrOntoQA and the other half from TinyStories. The AdamW optimizer is employed with an initial learning rate of $1 \times 10^{-5}$. The model is trained for 200 epochs, ensuring that the loss for both datasets decreases to a similar level.

\section{Theory Details}

\subsection{Proof of Proposition~\ref{prop:E_emb_g}}\label{app:proof_emb_g}
\begin{lemma}\label{lem:gf_emb_G}
   For any token \( s \), let $\{\left(X^{s,i},y^{s,i}\right)\}_{i=1}^{n_{s}}$ denote all input sequences containing $s$ and corresponding labels. The gradient flow of \( \vw^{{\rm emb},s} \) can be expressed as:
\begin{equation}
\frac{d\vw^{{\rm emb},s}}{dt} = \frac{1}{n}\sum_{i=1}^{n_s}\left(\left(\vp^{s,i}-\vy^{s,i}\right)\vW^{(2)T}\right)\odot\sigma^{\prime}\left(\sum_{x\in X^{s,i}}\vw^{{\rm emb},x}\vW^{(1)}\right)\vW^{(1)T},
\end{equation}

where $\vp^{s,i}=\rm{softmax}\left(\vG_{\vtheta}\left(X^{s,i}\right)\right)$, $\sigma^{\prime}$ denotes the derivative of $\sigma$ and $n$ means the count of all training data. $\odot$ represents the elements-wise production.
\end{lemma}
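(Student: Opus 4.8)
\textbf{Proof proposal for Lemma~\ref{lem:gf_emb_G}.}

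The plan is to compute the gradient of the cross-entropy loss with respect to $\vw^{{\rm emb},s}$ directly by the chain rule, following the forward pass of the Emb-MLP model. First I would write the per-sample loss for an input $X^{s,i}$ with one-hot label $\vy^{s,i}$ as $\ell^{s,i} = -\vy^{s,i}\cdot\log\left(\mathrm{softmax}(\vG_{\vtheta}(X^{s,i}))\right)$, and recall the standard identity that the gradient of the softmax-cross-entropy composition with respect to the logits $\vG_{\vtheta}(X^{s,i})$ is exactly $\vp^{s,i}-\vy^{s,i}$, where $\vp^{s,i}=\mathrm{softmax}(\vG_{\vtheta}(X^{s,i}))$. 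This is the one ``known'' ingredient; everything else is bookkeeping through the linear embedding-sum, the first weight matrix with activation, and the second weight matrix.

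Next I would trace the dependence of $\vG_{\vtheta}(X^{s,i}) = \sigma\left(\vh^{s,i}\right)\vW^{(2)}$ on $\vw^{{\rm emb},s}$, where $\vh^{s,i} := \left(\sum_{x\in X^{s,i}}\vw^{{\rm emb},x}\right)\vW^{(1)}$ is the pre-activation. The key observation is that $\vw^{{\rm emb},s}$ enters $\vh^{s,i}$ linearly through the summation $\sum_{x\in X^{s,i}}\vw^{{\rm emb},x}$, and with the convention stated in the excerpt (a token appearing twice is counted twice, and the family $\{(X^{s,i},y^{s,i})\}$ enumerates appearances), the Jacobian of $\sum_{x\in X^{s,i}}\vw^{{\rm emb},x}$ with respect to $\vw^{{\rm emb},s}$ is just the identity for each enumerated appearance. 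Hence $\partial \vh^{s,i}/\partial \vw^{{\rm emb},s} = \vW^{(1)}$ (acting on the right), the activation contributes a diagonal factor $\sigma^{\prime}(\vh^{s,i}) = \sigma^{\prime}(\vW^{(1)}\vw^{{\rm emb},s})$ after transposing to column-vector conventions, and $\vW^{(2)}$ contributes its transpose when pulling the logit-gradient back. Assembling these factors and using the gradient-flow relation $\frac{d\vw^{{\rm emb},s}}{dt} = -\nabla_{\vw^{{\rm emb},s}}\left(\frac{1}{n}\sum \ell\right)$, the minus sign flips $\vp^{s,i}-\vy^{s,i}$ into $\vy^{s,i}-\vp^{s,i}$, yielding
\begin{equation*}
\frac{d\vw^{{\rm emb},s}}{dt} = \frac{1}{n}\sum_{i=1}^{n_s}\vW^{(1),T}\left(\vW^{(2),T}(\vy^{s,i}-\vp^{s,i})\right)\odot\sigma^{\prime}(\vW^{(1)}\vw^{{\rm emb},s}),
\end{equation*}
which is the claimed formula.

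The only genuinely delicate points — and thus the main obstacle — are (i) getting the transpose/row-vs-column conventions consistent, since the paper writes embeddings as row vectors $\vw^{{\rm emb},s}\in\sR^{1\times d_m}$ while the lemma states the result in a column-vector form with left-multiplication by $\vW^{(1),T}$; one must fix a convention and track it carefully through the Hadamard product, which does not commute with ordinary matrix multiplication. And (ii) correctly handling the multiplicity convention: if $X^{s,i}$ contains $s$ with multiplicity $m>1$, the chain rule produces a factor $m$, which is precisely why the statement counts $(X,y)$ once per occurrence of $s$, so that each term in $\sum_{i=1}^{n_s}$ carries multiplicity one. I would state this convention explicitly at the start and verify it is consistent with the definition of $n_s$ given just before the lemma. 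The normalization by $n$ (the total token count, as opposed to the number of sequences) simply passes through from the definition of the averaged loss and requires no extra work.
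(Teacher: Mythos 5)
Your proposal follows essentially the same route as the paper's proof: the softmax--cross-entropy gradient identity $\vp^{s,i}-\vy^{s,i}$ at the logits, the chain rule (which the paper organizes via differentials and the trace identity) through the linear embedding sum, the Hadamard factor from the activation derivative, the transposed weight matrices, and the sign flip from the gradient-flow convention. One small caveat: the activation derivative must be evaluated at the full pre-activation $\sum_{x\in X^{s,i}}\vw^{{\rm emb},x}\vW^{(1)}$, as the paper's proof does, so your identification $\sigma^{\prime}(\vh^{s,i})=\sigma^{\prime}(\vW^{(1)}\vw^{{\rm emb},s})$ is not an equality when the sequence contains other tokens --- though the lemma statement itself is written with the latter argument.
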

\begin{proof}
For any data pair $\left(X^{s,i},y^{s,i}\right)$, the cross entropy function $R$ could be expressed as:
\begin{align*}
    R\left(X^{s,i}\right)=-\log \frac{e^{\vG_{\vtheta}\left(X^{s,i}\right)_{y^{s,i}}}}{\sum_{j=1}^{d_{\rm vob}}e^{\vG_{\vtheta}\left(X^{s,i}\right)_j}},
\end{align*}
where the subscript $j$ represents the element index. Then the derivative of $R$ respect with $\vw^{{\rm emb},s}$ can be expressed as:
\begin{align*}
    \frac{\partial R\left(X^{s,i}\right)}{\partial\vw^{{\rm emb},s}}&=\sum_{j=1}^{d_{\rm vob}}\frac{\partial R\left(X^{s,i}\right)}{\vG_{\vtheta}\left(X^{s,i}\right)_j}\frac{\partial \vG_{\vtheta}\left(X^{s,i}\right)_j}{\partial\vw^{{\rm emb},s}}=\sum_{j=1}^{d_{\rm vob}}(\vp^{s,i}_j-\vy^{s,i}_j)\frac{\partial \vG_{\vtheta}\left(X^{s,i}\right)_j}{\partial\vw^{{\rm emb},s}}.\\
\end{align*}
Using the trace theorem, we obtain:
\begin{align*}
    d  \vG_{\vtheta}\left(X^{s,i}\right)_j&= \text{tr}\left(d\vG_{\vtheta}\left(X^{s,i}\right)_j\right)=\text{tr}\left(d\sigma\left(\sum_{x\in X^{s,i}}\vw^{{\rm emb},x}\vW^{(1)}\right)\vW^{(2)}_{:,j}\right)\\
    &=\text{tr}\left(\sigma^{\prime}\left(\sum_{x\in X^{s,i}}\vw^{{\rm emb},x}\vW^{(1)}\right)\odot \left(d\vw^{{\rm emb},s}\vW^{(1)}\right)\vW^{(2)}_{:,j}\right)\\
    &=\text{tr}\left(\vW^{(1)}\left(\vW^{(2)}_{:,j}\odot\sigma^{\prime}\left(\sum_{x\in X^{s,i}}\vw^{{\rm emb},x}\vW^{(1)}\right)^T\right)d\vw^{{\rm emb},s}\right).
\end{align*}
Then we have
\begin{align*}
    \frac{\partial R\left(X^{s,i}\right)}{\partial\vw^{{\rm emb},s}}&=\sum_{j=1}^{d_{\rm vob}}\left(\vp^{s,i}_j-\vy^{s,i}_j\right)\left(\vW^{(2)T}_{:,j}\odot\sigma^{\prime}\left(\sum_{x\in X^{s,i}}\vw^{{\rm emb},x}\vW^{(1)}\right)\right)\vW^{(1)T}\\
    &=\left(\left(\vp^{s,i}-\vy^{s,i}\right)\vW^{(2)T}\right)\odot\sigma^{\prime}\left(\sum_{x\in X^{s,i}}\vw^{{\rm emb},x}\vW^{(1)}\right)\vW^{(1)T},
\end{align*}
and furthermore
\begin{align*}
    \frac{d\vw^{{\rm emb},s}}{dt} = -\frac{1}{n}\sum_{i=1}^{n_s}\frac{\partial R\left(X^{s,i}\right)}{\partial\vw^{{\rm emb},s}}=\frac{1}{n}\sum_{i=1}^{n_s}\left(\left(\vp^{s,i}-\vy^{s,i}\right)\vW^{(2)T}\right)\odot\sigma^{\prime}\left(\sum_{x\in X^{s,i}}\vw^{{\rm emb},x}\vW^{(1)}\right)\vW^{(1)T}.
\end{align*}
\end{proof}
 As the initialization scale decreases $\gamma\rightarrow0$, with the Assumption~\ref{assump:activation}, we have that 
$\sigma^{\prime}\left(\sum_{x\in X^{s,i}}\vw^{{\rm emb},x}\vW^{(1)}\right)=\bm{1}, \text{softmax}\left(\vG_{\vtheta}\left(X^{s,i}\right)\right)=\frac{1}{d_{\rm vob}}\bm{1}$, where $\textbf{1}\in\sR^{1\times d_{\rm vob}}$ means the vector with all elements equal to 1. Then the gradient flow  of \( \vw^{{\rm emb},s} \) could be approximated by the limit formulation, i.e.
\begin{equation}\label{eq:gf_emb_g_smallinit_app}
    \frac{d\vw^{{\rm emb},s}}{dt}=\frac{1}{n}\sum_{i=1}^{n_s}\left(\vy^{s,i}-\frac{1}{d_{\rm vob}}\bm{1}\right)\vW^{(2)T}\vW^{(1)T}.
\end{equation}
Consider $n\rightarrow\infty$ and denote the random variable $Y^s$ which follows the distribution of $\left\{y^{s,i}\right\}_{i=1}^{n_s}$,  then we obtain the asymptotic form by the law of large number
\begin{equation*}
    \frac{1}{n}\sum_{i=1}^{n_s}\vy^{s,i}=r_s\mathbb{E}_{Y^s}\left[\vY^s\right]=r_s\vP^s,
\end{equation*}
where $\vY^s$ is the one-hot representation of $Y^s$ and the $i$-th element of $\vP^s$ is $\vP^s_i=\mathbb{P}\left(Y^s=i\right)$. Then we obtain the Proposition~\ref{prop:E_emb_g}.

\subsection{Distribution of $Y^s$}\label{sec:dist_ys}

\paragraph{Memory anchor.}Since we select a label for any key-anchor pair randomly from $\mathcal{U}\left(\fZ\right)$, the label $s$ meets would follow the same distribution for any $s\in\fA_{\rm mem}$. specifically, we have
\begin{equation}
    \mathbb{P}\left(Y^{s}=i\right) = \frac{1}{N_{\fZ}}\delta_{i\in\fZ},
\end{equation}
where $\delta_{i\in\fZ}=1$ if $i\in\fZ$ otherwise 0.

\paragraph{Reasoning anchor.}For any reasoning anchor $s\in\fA_{\rm rsn}$, we assume that the other elements of a key-anchor pair containing $s$ is $z,a_1,a_2,\cdots,a_{q-1}$. Since the other elements are randomly chosen from the corresponding scope, the label could be represented as $y^s=s + z+\sum_{j=1}^{q-1}a_j$. Then $Y^s$ follows the distribution $s+Z+\sum_{j=1}^{q-1}A_j$, then we have
\begin{equation}
    \begin{aligned}    \mathbb{P}\left(Y^s=i\right)&=\mathbb{P}\left(Z+\sum_{j=1}^{q-1}A_j=i-s\right)\\&=\sum_{\zeta=\zeta_{\min}}^{\zeta_{\max}}{\mathbb{P}\left(Z=\zeta\right)\mathbb{P}\left(\sum_{j=1}^{q-1}A_j=i-s-\zeta\right)}\\
    &=\frac{1}{N_{\fZ}}\frac{1}{N_{\fA_{\rm rsn}}^{q-1}}\sum_{\zeta=\zeta_{\min}}^{\zeta_{\max}}\binom{q-1}{i-s-\zeta-(q-1)\alpha^{\rm rsn}_{\min}}_{N_{\fA_{\rm rsn}}},
\end{aligned}
\end{equation}

where the combination number $\binom{n}{j}_{k+1}$ can be defined by $\left(1+x+\cdots+x^k\right)^n=\sum_{j=0}^{kn}\binom{n}{j}_{k+1}x^j$~\cite{distributionofsumuniform}.
Specifically, when $q=2$, we have the following result:
\begin{align*}          \mathbb{P}\left(Y^s=i\right)=&\sum_{\zeta=\zeta_{\min}}^{\zeta=\zeta_{\max}}    \mathbb{P}\left(Z=\zeta\right)\mathbb{P}\left(A_1=i-s-\zeta\right)\\
    =&\left\{\begin{aligned}
        &\sum_{\zeta=\zeta_{\min}}^{i-s-\alpha^{\rm rsn}_{\min}}\frac{1}{N_{\fZ}N_{\fA_{\rm rsn}}},\quad i=\zeta_{\min}+\alpha^{\rm rsn}_{\min}+s,\cdots,\zeta_{\min}+\alpha^{\rm rsn}_{\max}+s.\\
        &\sum_{\zeta=i-s-\alpha^{\rm rsn}_{\max}}^{i-s-\alpha^{\rm rsn}_{\min}}\frac{1}{N_{\fZ}N_{\fA_{\rm rsn}}},\quad i=\zeta_{\min}+\alpha^{\rm rsn}_{\max}+1+s,\cdots,\zeta_{\max}+\alpha^{\rm rsn}_{\min}+s.\\
        &\sum_{\zeta=i-s-\alpha^{\rm rsn}_{\max}}^{\zeta_{\max}}\frac{1}{N_{\fZ}N_{\fA_{\rm rsn}}},\quad i=\zeta_{\max}+\alpha^{\rm rsn}_{\min}+1+s,\cdots,\zeta_{\max}+\alpha^{\rm rsn}_{\max}+s.
    \end{aligned}\right.\\
    =&\left\{\begin{aligned}
        &\frac{i-s-\alpha^{\rm rsn}_{\min}-\zeta_{\min}+1}{N_{\fZ}N_{\fA_{\rm rsn}}},\quad i=\zeta_{\min}+\alpha^{\rm rsn}_{\min}+s,\cdots,\zeta_{\min}+\alpha^{\rm rsn}_{\max}+s.\\
        &\frac{1}{N_{\fZ}},\quad i=\zeta_{\min}+\alpha^{\rm rsn}_{\max}+1+s,\cdots,\zeta_{\max}+\alpha^{\rm rsn}_{\min}+s.\\
        &\frac{\zeta_{\max}+\alpha^{\rm rsn}_{\max}-i+s+1}{N_{\fZ}N_{\fA_{\rm rsn}}},\quad i=\zeta_{\max}+\alpha^{\rm rsn}_{\min}+1+s,\cdots,\zeta_{\max}+\alpha^{\rm rsn}_{\max}+s.
    \end{aligned}\right.
\end{align*}

\paragraph{Key.} For any $s\in\fZ$, its labels come from two parts, memory mapping and reasoning mapping. In the memory mapping, $z$ meets each token in $\fZ$ with the same probability $\frac{1}{N_{\fZ}}$. In the reasoning mapping, the label $y^s=s+\sum_{i=1}^{q}a_i$. Assume that the ratio of memory mapping is identified with reasoning mapping, then we have
\begin{align}
    \mathbb{P}(Y^s=i)&=\frac{1}{2}\left(\mathbb{P}_{\rm mem}\left(Y^s=i\right)+\mathbb{P}_{\rm rsn}\left(Y^s=i\right)\right)\\
    &=\frac{1}{2}\left(\frac{1}{N_{\fZ}}\delta_{i\in\fZ}+\mathbb{P}\left(s+\sum_{j=1}^qA_j=i\right)\right)\\
    &=\frac{1}{2}\left(\frac{1}{N_{\fZ}}\delta_{i\in\fZ}+\frac{1}{N_{\fA_{\rm rsn}}^q}\binom{q}{i-s-q\alpha^{\rm rsn}_{\min}}_{N_{\fA_{\rm rsn}}}\right).
\end{align}
Specifically, when $q=2$
\begin{align*}
    \mathbb{P}\left(Y^s=i\right)=\frac{1}{2}\left(\frac{1}{N_{\fZ}}\delta_{i\in\fZ}+\frac{N_{\fA_{\rm rsn}}-\left|\alpha^{\rm rsn}_{\max}-\alpha^{\rm rsn}_{\min}-i+s\right|}{N_{\fA_{\rm rsn}}^2}\right).
\end{align*}
Generally, consider the usual sequence containing some noise. Then the label consists of a third part when $s$ appears as a noise. With a similar method, we have
\begin{align*}
    \mathbb{P}_{noise}\left(Y^s=i\right)&=\frac{1}{2}\left(\mathbb{P}_{noise,mem}\left(Y^s=i\right)+ \mathbb{P}_{noise,rsn}\left(Y^s=i\right)\right)\\
    &=\frac{1}{2}\left(\frac{1}{N_{\fZ}}\delta_{i\in\fZ}+\mathbb{P}\left(Z+\sum_{j=1}^qA_j=i\right)\right)\\
    &=\frac{1}{2}\left(\frac{1}{N_{\fZ}}\delta_{i\in\fZ}+\frac{1}{N_{\fZ}}\frac{1}{N_{\fA_{\rm rsn}}^{q}}\sum_{\zeta=\zeta_{\min}}^{\zeta_{\max}}\binom{q}{i-\zeta-q\alpha^{\rm rsn}_{\min}}_{N_{\fA_{\rm rsn}}}\right).
\end{align*}
Combine them together, we have in the general setting $\fX^{\left(q,L\right)}$, we have that 
\begin{align*}
    \mathbb{P}(Y^s=i)=&\frac{1}{2\left(L-q\right)}\left(\frac{1}{N_{\fZ}}\delta_{i\in\fZ}+\mathbb{P}\left(s+\sum_{j=1}^qA_j=i\right)\right)+\frac{L-q-1}{2\left(L-q\right)}\left(\frac{1}{N_{\fZ}}\delta_{i\in\fZ}+\mathbb{P}\left(Z+\sum_{j=1}^qA_j=i\right)\right)\\
    =&\frac{1}{2N_{\fZ}}\delta_{i\in\fZ}+\frac{1}{2\left(L-q\right)}\left(\mathbb{P}\left(s+\sum_{j=1}^qA_j=i\right)+\left(L-q-1\right)\mathbb{P}\left(Z+\sum_{j=1}^qA_j=i\right)\right)\\
    =&\frac{1}{2N_{\fZ}}\delta_{i\in\fZ}+\frac{1}{2\left(L-q\right)N_{\fA_{\rm rsn}}^{q}}\left(\binom{q}{i-s-q\alpha^{\rm rsn}_{\min}}_{N_{\fA_{\rm rsn}}}+\left(L-q-1\right)\frac{1}{N_{\fZ}}\sum_{\zeta=\zeta_{\min}}^{\zeta_{\max}}\binom{q}{i-\zeta-q\alpha^{\rm rsn}_{\min}}_{N_{\fA_{\rm rsn}}}\right).
\end{align*}
\subsection{Gradient Flow of Embedding Space in Emb-MLP}
With the discussion in Section~\ref{sec:dist_ys}, we obtain the detailed formulation of \eqref{eq:gf_emb_g_limit} for different anchors of different tasks. Specifically, we have the following result: 
\begin{corollary}\label{cor:emb_mem}
    Given any $s \in \fA_{\rm mem}$, assume that $n\rightarrow\infty$ and assume the ratio of sequences containing $s$ in the training set $r_s$ keeps constant, then we have
    \begin{align}\label{eq:emb_mem}
    \frac{d\vw^{{\rm emb},s}}{dt} &= r_s\left(\frac{\vdelta^{\fZ}}{N_{\fZ}}-\frac{1}{d_{\rm vob}}\textbf{1}\right)\vW^{(2)T}\vW^{(1)T},
\end{align}
where $\vdelta^{\fZ}\in\sR^d$ is a vector with elements equal to 1 for indices belonging to \(\fZ\), and 0 otherwise.
\end{corollary}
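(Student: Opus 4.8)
The plan is to specialize Proposition~\ref{prop:E_emb_g} to the case $s \in \fA_{\rm mem}$ by substituting the explicit form of $\vP^s$ derived in Section~\ref{sec:dist_ys}. First I would recall that for a memory anchor, the label of any sequence containing $s$ is obtained by drawing $y^{(z_p,\va)}$ uniformly from $\fZ$, and this holds irrespective of which particular $s\in\fA_{\rm mem}$ appears, since all anchors in $\va$ are memory anchors and the label is assigned at the level of the key-anchor pair. Hence, in the limit $n\to\infty$, the random variable $Y^s$ is distributed uniformly on $\fZ$, which is exactly Equation~\eqref{eq:dist_mem}: $\vP^s_i = \mathbb{P}(Y^s = i) = \frac{1}{N_{\fZ}}\delta_{i\in\fZ}$. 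Writing this in vector form gives $\vP^s = \frac{1}{N_{\fZ}}\vdelta^{\fZ}$, where $\vdelta^{\fZ}$ is the indicator vector of $\fZ$.

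Next I would plug $\vP^s = \frac{1}{N_{\fZ}}\vdelta^{\fZ}$ directly into the gradient flow~\eqref{eq:gf_emb_g_limit} of Proposition~\ref{prop:E_emb_g}, which under the hypothesis that $r_s = n_s/n$ stays constant yields
\begin{equation*}
\frac{d\vw^{{\rm emb},s}}{dt} = r_s\left(\frac{\vdelta^{\fZ}}{N_{\fZ}} - \frac{1}{d_{\rm vob}}\bm{1}\right)\vW^{(2)T}\vW^{(1)T},
\end{equation*}
which is precisely the claimed Equation~\eqref{eq:emb_mem}. So the corollary is essentially a direct substitution, and the only real content beyond Proposition~\ref{prop:E_emb_g} is the identification of the conditional label distribution for memory anchors, which itself follows from the construction of the memory mapping in the dataset setup.

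The step that warrants the most care is justifying the passage to the limit that underlies $\frac{1}{n}\sum_{i=1}^{n_s}\vy^{s,i} \to r_s\vP^s$: this requires the law of large numbers applied to the i.i.d. (or exchangeable) draws of labels attached to occurrences of $s$, together with the standing assumption that $r_s$ is constant across the limit. For memory anchors the labels $y^{s,i}$ are genuinely i.i.d. uniform on $\fZ$ once the key-anchor pair's random label is marginalized over, so the convergence is immediate; one should simply note that the empirical average of the one-hot vectors $\vy^{s,i}$ converges almost surely to $\mathbb{E}[\vY^s] = \vP^s$. I would also remark, as the excerpt does informally, that the small-initialization regime $\gamma\to\infty$ is what licenses the approximations $\sigma'(\cdot)=\bm{1}$ and ${\rm softmax}(\vG_\vtheta(\cdot)) = \frac{1}{d_{\rm vob}}\bm{1}$ that produced~\eqref{eq:gf_emb_g_smallinit} in the first place. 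Beyond that, there is no genuine obstacle—the corollary is a clean instantiation of the general proposition, and the proof is a one-line substitution once the memory label distribution is recorded.
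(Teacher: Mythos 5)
Your proposal is correct and follows exactly the paper's route: the corollary is obtained by substituting the memory-anchor label distribution $\vP^s=\frac{1}{N_{\fZ}}\vdelta^{\fZ}$ from Section~\ref{sec:dist_ys} into the gradient flow~\eqref{eq:gf_emb_g_limit} of Proposition~\ref{prop:E_emb_g}. Your additional remarks on the law of large numbers and the small-initialization approximations correctly identify the (implicit) assumptions the paper relies on, but introduce nothing beyond the paper's own argument.
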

\begin{corollary}\label{cor:emb_rsn}
 Given any $s \in \fA_{\rm rsn}$, assume that $n\rightarrow\infty$ and the ratio of sequences containing $s$ in the training set $r_s$ remains constant. Then, the gradient flow of the embedding vector corresponding to $s$ is given by:
    \begin{align}\label{eq:emb_rsn}
    \frac{d\vw^{{\rm emb},s}}{dt} &= r_s\left(\vP^{s}-\frac{1}{d_{\rm vob}}\textbf{1}\right)\vW^{(2)T}\vW^{(1)T},
\end{align}
 where $\vP^{s}\in\sR^d$ is a probability vector whose i-th element is $\vP^{s}_i=\frac{1}{N_{\fZ}}\frac{1}{N_{\fA_{\rm rsn}}^{q-1}}\sum_{\zeta=\zeta_{\min}}^{\zeta_{\max}}\binom{q-1}{i-s-\zeta-(q-1)\alpha^{\rm rsn}_{\min}}_{N_{\fA_{\rm rsn}}}$.
\end{corollary}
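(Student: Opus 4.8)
The plan is to obtain Corollary~\ref{cor:emb_rsn} as an immediate specialization of Proposition~\ref{prop:E_emb_g}, once the label law $\vP^s$ of a reasoning anchor is written in closed form. Proposition~\ref{prop:E_emb_g} already supplies $\frac{d\vw^{{\rm emb},s}}{dt}=r_s\left(\vP^s-\frac{1}{d_{\rm vob}}\bm 1\right)\vW^{(2)T}\vW^{(1)T}$ with $\vP^s_i=\mathbb P(Y^s=i)$, so the only thing left is to identify $\mathbb P(Y^s=i)$ for $s\in\fA_{\rm rsn}$ with the stated sum of generating-function coefficients. I would split this into (i) reading off the distributional identity for $Y^s$ and (ii) a short combinatorial evaluation of the resulting convolution.

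For (i): in the Emb-MLP setting each training sequence consists of the key $z_p$ together with $q$ anchors, and the reasoning label is $\fF_{\rm rsn}(X)=z_p+\sum_{i=1}^q a_{p+i}$, which is insensitive to positions and to noise tokens. If a sequence contains the token $s\in\fA_{\rm rsn}$, then $s$ fills one anchor slot and enters the label additively, while the key and the remaining $q-1$ anchors are --- thanks to the balanced construction of the dataset (equal number of samples for every anchor combination) together with the convention that a sequence containing $s$ twice is counted twice --- uniform and independent: $Z\sim\mathcal U(\fZ)$ and $A_1,\dots,A_{q-1}\sim\mathcal U(\fA_{\rm rsn})$. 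Letting $n\to\infty$, the empirical distribution of $\{y^{s,i}\}_{i=1}^{n_s}$ converges to the law of $s+Z+\sum_{j=1}^{q-1}A_j$, which is precisely~\eqref{eq:dist_rsn}; hence $\mathbb P(Y^s=i)=\mathbb P\!\left(Z+\sum_{j=1}^{q-1}A_j=i-s\right)$.

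For (ii): conditioning on $Z=\zeta$ and using independence gives $\mathbb P(Y^s=i)=\frac{1}{N_{\fZ}}\sum_{\zeta=\zeta_{\min}}^{\zeta_{\max}}\mathbb P\!\left(\sum_{j=1}^{q-1}A_j=i-s-\zeta\right)$. Writing $A_j=\alpha^{\rm rsn}_{\min}+B_j$ with $B_j\sim\mathcal U(\{0,1,\dots,N_{\fA_{\rm rsn}}-1\})$ turns the inner probability into the pmf of a sum of $q-1$ bounded uniforms, and the generating-function identity $\left(1+x+\cdots+x^{N_{\fA_{\rm rsn}}-1}\right)^{q-1}=\sum_m\binom{q-1}{m}_{N_{\fA_{\rm rsn}}}x^m$ evaluates it as $\frac{1}{N_{\fA_{\rm rsn}}^{q-1}}\binom{q-1}{\,i-s-\zeta-(q-1)\alpha^{\rm rsn}_{\min}\,}_{N_{\fA_{\rm rsn}}}$. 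Substituting and summing over $\zeta$ yields exactly $\vP^s_i=\frac{1}{N_{\fZ}}\frac{1}{N_{\fA_{\rm rsn}}^{q-1}}\sum_{\zeta=\zeta_{\min}}^{\zeta_{\max}}\binom{q-1}{\,i-s-\zeta-(q-1)\alpha^{\rm rsn}_{\min}\,}_{N_{\fA_{\rm rsn}}}$, and inserting this into Proposition~\ref{prop:E_emb_g} gives~\eqref{eq:emb_rsn}.

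The arithmetic here is routine, so I expect the only genuinely delicate step to be (i): verifying that conditioning on a sequence containing $s$ really leaves the other key and anchor tokens i.i.d.\ uniform, and that the ``counted twice'' appearance convention is compatible with the hypothesis that $r_s=n_s/n$ stays constant. This is exactly where the balanced-dataset construction must be invoked carefully, and it is what lets the law-of-large-numbers step used in the proof of Proposition~\ref{prop:E_emb_g} carry over unchanged. A secondary but immediate check is that noise keys occurring elsewhere in a sequence never appear in $\fF_{\rm rsn}$, hence cannot perturb the law of $Y^s$.
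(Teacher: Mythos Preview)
Your proposal is correct and follows essentially the same route as the paper: invoke Proposition~\ref{prop:E_emb_g} to reduce the corollary to identifying $\vP^s$, then compute $\mathbb P(Y^s=i)$ for a reasoning anchor by writing $Y^s\stackrel{d}{=}s+Z+\sum_{j=1}^{q-1}A_j$, conditioning on $Z$, and evaluating the resulting convolution of uniforms via the generating-function coefficients $\binom{q-1}{\cdot}_{N_{\fA_{\rm rsn}}}$. Your additional remarks on why the balanced-dataset construction and the double-counting convention justify the i.i.d.\ uniform conditional law are more explicit than what the paper spells out, but the argument is the same.
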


\subsection{Proof of Lemma~\ref{lem:attention}}\label{app:proof_lemma}
\begin{proof}
    We assume that $\vW^{{\rm emb},X}_{i,j}\sim\mathcal{N}\left(0,\left(d_m^{-\gamma}\right)^2\right),\vW^Q_{i,j}\sim\mathcal{N}\left(0,\left(d_k^{-\gamma}\right)^2\right),\vW^K_{i,j}\sim\mathcal{N}\left(0,\left(d_k^{-\gamma}\right)^2\right)$. 
We have that
\begin{align*}
    \left(\vW^{{\rm emb},X}\vW^Q\vW^{KT}\vW^{{\rm emb},X,T}\right)_{i,j} &= \sum_{k=1}^{d_m}\sum_{l=1}^{d_m}\vW^{{\rm emb},X}_{i,k}\left(\sum_{p=1}^{d_k}\vW^{Q}_{k,p}\vW^{K}_{l,p}\right)\vW^{{\rm emb},X}_{j,l}\\
    &=\sum_{k=1}^{d_m}\sum_{l=1}^{d_m}\sum_{p=1}^{d_k}\vW^{{\rm emb},X}_{i,k}\vW^{Q}_{k,p}\vW^{K}_{l,p}\vW^{{\rm emb},X}_{j,l}\\
    &\sim \mathcal{N}\left(0,\left(\frac{d_m^2d_k}{2\left(d_m^\gamma+d_k^\gamma\right)}\right)^2\right).
\end{align*}
Then the attention operator
\begin{align*}
    \frac{\left(\vW^{{\rm emb},X}\vW^Q\vW^{KT}\vW^{{\rm emb},X,T}\right)_{i,j}}{\sqrt{d_k}}\sim \mathcal{N}\left(0,\left(\frac{d_m^2\sqrt{d_k}}{2\left(d_m^\gamma+d_k^\gamma\right)}\right)^2\right).
\end{align*}
Utilizing the Chebyshev's Inequality, then we have 
\begin{align*}
    \mathbb{P}\left(\frac{\left|\left(\vW^{{\rm emb},X}\vW^Q\vW^{KT}\vW^{{\rm emb},X,T}\right)_{i,j}\right|}{\sqrt{d_k}}>\delta\right)&\leq\frac{d_m^4d_k}{4\delta^2\left(d_m^\gamma+d_k^{\gamma}\right)^2},
\end{align*}
  for any $\delta>0$. Given any $\varepsilon\in\left(0,1\right]$, let $C=\frac{1}{2}\log_{d_m+d_k}{\frac{d_m^4d_k}{4\delta^2\varepsilon}}$, then for any $\gamma>C$, we have 
\begin{align*}
    \mathbb{P}\left(\frac{\left|\left(\vW^{{\rm emb},X}\vW_Q\vW_K^T\vW^{{\rm emb},X,T}\right)_{i,j}\right|}{\sqrt{d_k}}>\delta\right)&\leq\frac{d_m^4d_k}{4\delta^2\left(d_m^\gamma+d_k^{\gamma}\right)^2}\leq\varepsilon,
\end{align*}
which implies that $\vA_{i,j}\xrightarrow{P}\frac{1}{i},$ for any $i\leq j$ as $\gamma\rightarrow \infty$.
\end{proof}
\subsection{Proof of Proposition~\ref{prop:emb_mem_tran},~\ref{prop:emb_rsn_tran}}\label{app:proof_prop_emb_tran}
 For convenience in further analysis, we introduce the following notations $\vH^{s,i}:=(\overline{\vW}^{{\rm emb},X^{s,i}}\vW^V\vW^O+\vW^{{\rm emb},X^{s,i}}_L)\vW^{f1},\vW^{VO}=\vW^V\vW^O,\vW^f=\vW^{f1}\vW^{f2},\vp^{s,i}=\text{softmax}(\vf_{\vtheta}(X^{s,i}))$. Firstly, we have the following result:
\begin{lemma}\label{lem:gf_emb_f}
    Given any token $s$, the gradient flow of $\vw^{{\rm emb},s}$ can be expressed as
\begin{align*}
    \frac{d\vw^{{\rm emb},s}}{dt}= &-\frac{1}{n}\left(\sum_{i=1}^{n_{s}}\frac{1}{L}\left(\left(\vp^{s,i}-\vy^{s,i}\right)\vW^{fT}\vW^{VO,T}\odot\sigma^{\prime}\left(\vH^{s,i}\right)^T+\left(\vp^{s,i}-\vy^{s,i}\right)\vW^{VO,T}\right)\right.\\
    &\left.\qquad+\sum_{i=1}^{\tilde{n}_{s}}\left(\vp^{s,i}-\vy^{s,i}\right)\vW^{fT}\odot\sigma^{\prime}\left(\vH^{s,i}\right)^T+\left(\vp^{s,i}-\vy^{s,i}\right)\right),
\end{align*}
where $\tilde{n}_{s}$ denotes the time $s$ appears in the final position of a sequence.
\end{lemma}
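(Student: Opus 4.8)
\textbf{Proof proposal for Lemma~\ref{lem:gf_emb_f}.}

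The plan is to compute the gradient of the cross-entropy loss with respect to $\vw^{{\rm emb},s}$ by carefully tracking the two distinct ways that the embedding of token $s$ enters the one-layer Transformer $\vf_{\vtheta}$ from Definition~\ref{def:1_layer_transformer}. First I would observe that $\vw^{{\rm emb},s}$ contributes to $\vf_{\vtheta}(X)$ for a sequence $X$ containing $s$ through exactly two channels: (i) via the attention-averaged term $\vA_{L,:}\vW^{{\rm emb},X}\vW^V\vW^O$, where every occurrence of $s$ in $X$ gets weight $\vA_{L,k}$ for its position $k$, and (ii) via the residual term $\vW^{{\rm emb},X}_{L,:}$, which contributes only when $s$ sits in the final position $L$ of the sequence. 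This split is exactly why the statement has two sums: the first over all $n_s$ appearances of $s$ (weighted by $\tfrac1L$, since under small initialization $\vA_{L,k}\to\tfrac1L$ by Lemma~\ref{lem:attention} applied at row $L$), and the second over the $\tilde n_s$ sequences where $s$ occupies the last slot.

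Next I would carry out the differentiation step by step, mirroring the computation already done in Lemma~\ref{lem:gf_emb_G}. For a fixed data pair $(X^{s,i},y^{s,i})$, the chain rule gives $\partial R/\partial \vf_{\vtheta}(X^{s,i})_j = \vp^{s,i}_j - \vy^{s,i}_j$ as before. Then I differentiate $\vf_{\vtheta}(X^{s,i})$ through its FNN-plus-residual structure: the outer residual $\vW^{f2}$-branch contributes $(\vp^{s,i}-\vy^{s,i})\vW^{f2,T}$ composed with $\sigma'(\vH^{s,i})$ and then $\vW^{f1,T}$ acting on whichever pre-activation input carried $\vw^{{\rm emb},s}$; the skip connection contributes $(\vp^{s,i}-\vy^{s,i})$ directly onto that same input. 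In both cases the ``input that carried $\vw^{{\rm emb},s}$'' is $\vA_{L,:}\vW^{{\rm emb},X^{s,i}}\vW^V\vW^O+\vW^{{\rm emb},X^{s,i}}_L$, whose derivative with respect to $\vw^{{\rm emb},s}$ is $\vA_{L,k}\vW^{VO}$ summed over positions $k$ where $s$ appears, plus $\vW^{VO}$ itself is replaced by the identity for the residual-$L$ contribution. Collecting the attention-channel pieces yields the factor $\vW^{VO,T}$ and the $\tfrac1L$ weight (after invoking Lemma~\ref{lem:attention}), while the final-position pieces drop the $\vW^{VO,T}$ factor and the $\tfrac1L$ weight — producing precisely the two sums in the statement, with the overall $-\tfrac1n$ from gradient flow on the empirical risk averaged over all $n$ tokens.

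The main obstacle is bookkeeping rather than any deep inequality: one must be scrupulous about the transpose conventions (the paper writes embeddings as row vectors, so the ``trace theorem'' manipulation from Lemma~\ref{lem:gf_emb_G} has to be redone for the two-branch architecture), and about not double-counting when $s$ both appears in the sequence and lands in the final position — the first sum runs over \emph{all} $n_s$ appearances including a final-position one, while the second sum adds the extra residual/identity contribution only for those $\tilde n_s$ final-position occurrences. I would also need to confirm that, under Assumption~\ref{assump:activation} and $\gamma\to\infty$, the replacement $\vA_{L,k}\to\tfrac1L$ inside the gradient is legitimate (it is, by Lemma~\ref{lem:attention} together with continuity of the softmax-gradient map), though strictly the clean form stated holds in that limit; for finite $\gamma$ one carries an $O(\varepsilon)$ error term. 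Once these conventions are pinned down, the identity follows by assembling the per-sample derivatives and summing.
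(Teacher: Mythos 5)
Your proposal is correct and follows essentially the same route as the paper: compute $\partial R/\partial\vf_{\vtheta}(X^{s,i})_j=\vp^{s,i}_j-\vy^{s,i}_j$, differentiate $\vf_{\vtheta}$ through its FNN-plus-residual structure via the same trace-theorem manipulation as in Lemma~\ref{lem:gf_emb_G}, and split the contribution of $\vw^{{\rm emb},s}$ into the attention-averaged channel (weight $\tfrac1L$ per occurrence, all $n_s$ occurrences) and the final-position residual channel (the extra $\tilde n_s$ terms), exactly reproducing the paper's case distinction between ``$s$ occurs on last position'' and ``otherwise.'' Your bookkeeping of the non-double-counting and of the order in which $\sigma'(\vH^{s,i})$ enters (between $\vW^{f2,T}$ and $\vW^{f1,T}$) matches the paper's derivation.
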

 \begin{proof}
For any data pair $\left(X^{s,i},y^{s,i}\right)$, we have
\begin{align*}
    d\vf_{\vtheta}(X^{s,i})_j &= d\left(\sigma\left(\vH^{s,i}\right)\vW^{f2}_{:,j} + \overline{\vW}^{{\rm emb},X^{s,i}}\vW^{VO}_{:,j}+\vW^{{\rm emb},X^{s,i}}_{L,j}\right)\\
    &= \sigma^{\prime}\left(\vH^{s,i}\right)\odot \left(d \overline{\vW}^{{\rm emb},X^{s,i}}\vW^{VO}\vW^{f1}+d\vW^{{\rm emb},X^{s,i}}_L\vW^{f1}\right)\vW^{f2}_{:,j}+d \overline{\vW}^{{\rm emb},X^{s,i}}\vW^{VO}_{:,j}+d\vW^{{\rm emb},X^{s,i}}_{L,j}.
\end{align*}
By the trace theorem, we have
\begin{align*}
    d\vf_{\vtheta}\left(X^{s,i}\right)_j &= \text{tr}\left(d\vf_{\theta}\left(X^{s,i}\right)_j\right)\\
    &= \text{tr}\left(\vW^{VO}\vW^{f1}\left(\vW^{f2}_{:,j}\odot\sigma^{\prime}\left(\vH^{s,i}\right)^T\right) d\overline{\vW}^{{\rm emb},X^{s,i}}\right)+\text{tr}\left(\vW^{f1}\left(\vW^{f2}_{:,j}\odot\sigma^{\prime}\left(\vH^{s,i}\right)^T\right) d\vW^{{\rm emb},X^{s,i}}_{L,:}\right)\\
    &\quad+\text{tr}\left(\vW^{VO}_{:,j}d\overline{\vW}^{{\rm emb},X^{s,i}}\right)+\text{tr}\left(d\vW^{{\rm emb},X^{s,i}}_{L,j}\right)\\
    & = \text{tr}\left(\left(\vW^{VO}\vW^{f1}\left(\vW^{f2}_{:,j}\odot\sigma^{\prime}\left(\vH^{s,i}\right)^T\right)+\vW^{VO}_{:,j}\right)d\overline{\vW}^{{\rm emb},X^{s,i}}\right) \\&\quad+ \text{tr}\left(\left(\vW^{f2}_{:,j}\odot\sigma^{\prime}\left(\vH^{s,i}\right)^T+\bm{1}\right) d\vW^{{\rm emb},X^{s,i}}_{L,:}\right).
    \end{align*}
Utilizing the chain rule, we have
\begin{align*}
    &\frac{\partial R\left(X^{s,i},y^{s,i}\right)}{\partial \vW^{{\rm emb},s}} = \sum_{j=1}^{d_m} \frac{\partial R\left(X^{s,i},y^{s,i}\right)}{\partial \vf_{\vtheta}\left(X^{s,i}\right)_j}\frac{\partial \vf_{\vtheta}\left(X^{s,i}\right)_j}{\partial \vW^{{\rm emb},s}}=\sum_{j=1}^{d_m} \left(\vp^{s,i}_j-\vy^{s,i}_j\right)\frac{\partial \vf_{\vtheta}\left(X^{s,i}\right)_j}{\partial \vW^{{\rm emb},s}}\\
   &= \left\{
    \begin{aligned}
        &\frac{1}{L}\left(\left(\left(\vp^{s,i}-\vy^{s,i}\right)\vW^{f2,T}\odot\sigma^{\prime}\left(\vH^{s,i}\right)\right)\vW^{f1,T}\vW^{VO,T}+\left(\vp^{s,i}-\vy^{s,i}\right)\vW^{VO,T}\right)\\&+\left(\left(\vp^{s,i}-\vy^{s,i}\right)\vW^{f2,T}\odot\sigma^{\prime}\left(\vH^{s,i}\right)\right)\vW^{f1,T}+\left(\vp^{s,i}-\vy^{s,i}\right),\qquad\qquad s \text{   occurs on last position.}\\
        &\frac{1}{L}\left(\left(\left(\vp^{s,i}-\vy^{s,i}\right)\vW^{f2,T}\odot\sigma^{\prime}\left(\vH^{s,i}\right)\right)\vW^{f1,T}\vW^{VO,T}+\left(\vp^{s,i}-\vy^{s,i}\right)\vW^{VO,T}\right),\quad\text{otherwise.}
    \end{aligned}
   \right.
\end{align*}
Then we obtain the gradient flow as
\begin{align*}
    \frac{d\vw^{{\rm emb},s}}{dt}= &-\frac{1}{n}\left(\sum_{i=1}^{n_{s}}\frac{1}{L}\left(\left(\vp^{s,i}-\vy^{s,i}\right)\vW^{f2,T}\odot\sigma^{\prime}\left(\vH^{s,i}\right)\vW^{f1,T}\vW^{VO,T}+\left(\vp^{s,i}-\vy^{s,i}\right)\vW^{VO,T}\right)\right.\\
    &\left.\qquad+\sum_{i=1}^{\tilde{n}_{s}}\left(\vp^{s,i}-\vy^{s,i}\right)\vW^{f2,T}\odot\sigma^{\prime}\left(\vH^{s,i}\right)^T\vW^{f1,T}+\left(\vp^{s,i}-\vy^{s,i}\right)\right).
\end{align*}
\end{proof}

As the initialization scales decrease to zero, we derive the gradient flow under a small initialization scale as follows via Assumption~\ref{assump:activation}.
\begin{equation}\label{eq:gf_emb_f}
\begin{aligned}
    &\frac{d\vw^{{\rm emb},s}}{dt}= \frac{1}{n}\left(\sum_{i=1}^{n_{s}}\frac{1}{L}\left(\vy^{s,i}-\frac{1}{d_m}\bm{1}\right)\left(\vW^{VO}\vW^f+\vW^{VO}\right)^T\right.\\
    &\left.\qquad+\sum_{i=1}^{\tilde{n}_{s}}\left(\vy^{s,i}-\frac{1}{d_m}\bm{1}\right)\vW^{f,T}+\left(\vy^{s,i}-\frac{1}{d_m}\bm{1}\right)\right).
\end{aligned}
\end{equation}
We consider the ideal condition $n\rightarrow\infty$, with the law of large number, ~\eqref{eq:gf_emb_f} can be approximated as follow: 
    \begin{equation}
        \frac{d\vw^{{\rm emb},s}}{dt}=\frac{r_s}{L}\mathbb{E}_{Y^s}\left[\vY^s-\frac{1}{d_m}\bm{1}\right]\tilde{\vW}.
    \end{equation}
With the distribution we discussed in Section~\ref{sec:dist_ys}, we complete the proof of Proposition~\ref{prop:emb_mem_tran},~\ref{prop:emb_rsn_tran}.

\subsection{Proof of Theorem~\ref{thm:emb_reconstruct}}\label{app:proof_theorem}
\begin{proof}
     Consider the linear expansion of $\vw^{{\rm emb},s}=\vw^{{\rm emb},s}_{t_0}+\frac{d \vw^{{\rm emb},s}}{dt}\eta$ where $\vw^{{\rm emb},s}_{t_0}$ is the initialization of $\vw^{{\rm emb},s}$, then we have
     \begin{align}
         \vw^{{\rm emb},s}=&\vw^{{\rm emb},s}_{t_0}+\frac{d\vw^{{\rm emb},s}}{dt}\eta\\
         =& \vw^{{\rm emb},s}_{t_0}+\frac{r_s\eta}{L}\mathbb{E}_{Y^s}\left[\vy^s-\frac{1}{d_m}\bm{1}\right]\left(\vW^{f,T}+\vI\right)\left(\vW^{VO,T}+\vI\right)\\
         =&\frac{r_s\eta}{L}\mathbb{E}_{Y^s}\left[\vy^s-\frac{1}{d_m}\bm{1}\right]+\vw^{{\rm emb},s}_{t_0}+O\left(d_m^{-2\gamma}\bm{1}\right).
     \end{align}
     For any $s\in\fA_{\rm rsn}$, the formulation can be rewritten as 
     \begin{equation}
         \vw^{{\rm emb},s}_i=\frac{r_s\eta}{L}\left(\mathbb{P}\left(s+Z+\sum_{j=1}^{q-1}A_j=i\right)-\frac{1}{d_m}\right)+\varepsilon,
     \end{equation}
     where $\varepsilon\sim\fN\left(0,\left(d_m^{\gamma}\right)^2\right)$. Let $q$ enlarge enough, then $\mathbb{P}\left(s+Z+\sum_{j=1}^{T-1}A_j=i\right)$ can be approximated by the following formulation using the Berry-Esseen central limit theorem 
     \begin{equation}
         \sup_i\left|\mathbb{P}\left(s+Z+\sum_{j=1}^{q-1}=i\right)-\frac{1}{\sqrt{2\pi}\sigma_P}e^{-\frac{(i-s-\mu)^2}{2\sigma_P}}\right|\leq O\left(q^{-\frac{1}{2}}\right),
     \end{equation}
    where $\mu$ and $\sigma_P$ is the expectation and standard deviation of $Z+\sum_{j=1}^{q-1}A_j$. Denote that $\tilde{\vw}^{{\rm emb},s}_i=\frac{r_s\eta}{L}\left(\frac{1}{\sqrt{2\pi}\sigma_P}e^{-\frac{(i-s-\mu)^2}{2\sigma_P}}-\frac{1}{d_m}\right)+\varepsilon$, then we have:
    \begin{equation*}
        \sup_i \left|\tilde{\vw}^{{\rm emb},s}_i-\vw^{{\rm emb},s}_i\right|\leq O\left(q^{-\frac{1}{2}}+d_m^{-\gamma}\right).
    \end{equation*}
    Then the difference in inner production can be derived as follows:
    \begin{align*}
        \sup_{i,j}\left|\left(\tilde{\vw}^{{\rm emb},s_j},\tilde{\vw}^{{\rm emb},s_i}\right)-\left({\vw}^{{\rm emb},s_j},{\vw}^{{\rm emb},s_i}\right)\right|&=\sup_{i,j}\left|\sum_k\tilde{\vw}^{{\rm emb},s_j}_k\tilde{\vw}^{{\rm emb},s_i}_k-\vw^{{\rm emb},s_j}_k\vw^{{\rm emb},s_i}_k\right|\\
        &\leq \sum_{k}\sup_{i,j}\left|\tilde{\vw}^{{\rm emb},s_j}_k\tilde{\vw}^{{\rm emb},s_i}_k-\vw^{{\rm emb},s_j}_k\vw^{{\rm emb},s_i}_k\right|\\
        &\leq O\left(d_m^{1-\gamma}\left(q^{-\frac{1}{2}}+d_m^{-\gamma}\right)\right).
    \end{align*}
     Since an axis transformation does not affect the inner product, we set $\tilde{i}=i-\mu$, then we complete the proof of Theorem~\ref{thm:emb_reconstruct}.
 \end{proof}

\subsection{Validation of Theorem~\ref{thm:emb_reconstruct}}\label{app:validation_of_theory}
\begin{figure}[htbp]
    \centering
    \includegraphics[width=1\linewidth]{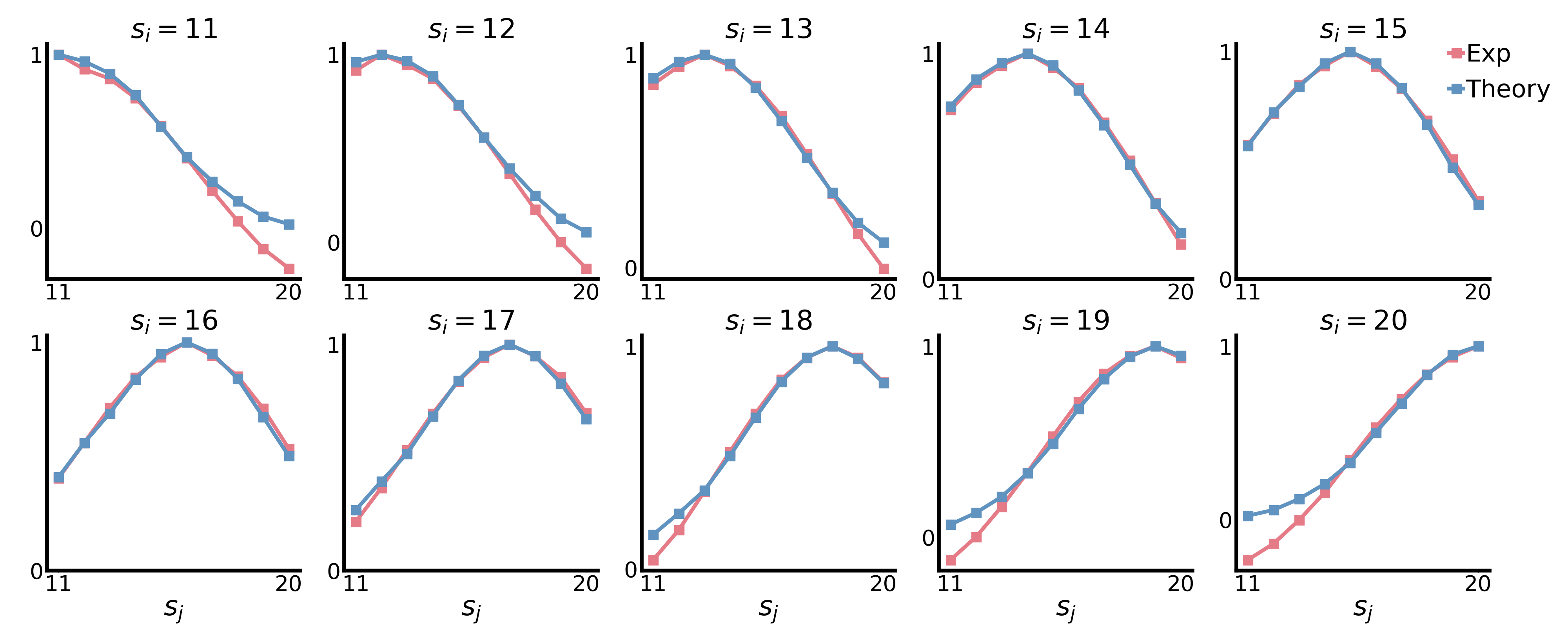}
    \vspace{-20pt}
    \caption{Cosine similarity comparison between experimental results $\cos\left(\vw^{{\rm emb},s_i},\vw^{{\rm emb},s_j}\right)$ with theoretical approximations $\cos\left(\tilde{\vw}^{{\rm emb},s_i},\tilde{\vw}^{{\rm emb},s_j}\right)$ (see ~\eqref{eq:estimation_wemb}), for any $s_i,s_j\in\fA_{\rm rsn}$. }
    \label{fig:theory_app}
\end{figure}
To verify the validity and generality of our theoretical analysis, we compare the cosine similarity of the embedding vectors within the reasoning anchors between experimental results and theoretical approximations. The results in Figure~\ref{fig:theory_app} demonstrate that our theoretical estimates align well with the experimental results in most cases, with discrepancies observed only when $|s_i-s_j|$ becomes large, likely due to the omission of higher-order terms.

\subsection{Discussion about $W^V$}\label{app:W_V}
For the phenomenon where the $\vW^V$ of the first attention module exhibits a preference for capturing the reasoning anchors, a general theoretical explanation would require a more comprehensive and sophisticated analysis. However, a similar result can be derived under a special and constrained condition.
\begin{theorem}\label{thm:WV}
    Let $n,\gamma\rightarrow\infty$, $N_{\fZ}=d_m$, Define that $A\sim  \mathcal{U}\left(\fA_{\rm rsn}\right)$ and $Y$ as a random variable which randomly takes value from the whole dataset's labels, then we have the following result:
    \begin{equation*}
            \frac{d\vW^V}{dt}=\frac{1}{2}\mathbb{E}_{A}\vw^{{\rm emb},A}\mathbb{E}_{Y}\left[\vY-\frac{1}{d_m}\bm{1}\right]^T\vW^O\left(\vW^f+I\right).
    \end{equation*}
\end{theorem}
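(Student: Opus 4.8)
The plan is to derive the gradient flow of $\vW^V$ in the first attention module directly from the chain rule applied to the one-layer Transformer model of Definition~\ref{def:1_layer_transformer}, then take the small-initialization limit ($\gamma\to\infty$) followed by the population limit ($n\to\infty$), exactly paralleling the strategy used for Propositions~\ref{prop:emb_mem_tran} and~\ref{prop:emb_rsn_tran}. First I would fix a data pair $(X^{i},y^{i})$ and compute $\partial R(X^{i},y^{i})/\partial\vW^V$ using the trace differential technique from the proof of Lemma~\ref{lem:gf_emb_f}: writing $\vf_{\vtheta}(X)=\sigma(\vH)\vW^{f2}+\vA_{L,:}\vW^{{\rm emb},X}\vW^V\vW^O+\vW^{{\rm emb},X}_{L,:}$ with $\vH=(\vA_{L,:}\vW^{{\rm emb},X}\vW^V\vW^O+\vW^{{\rm emb},X}_{L,:})\vW^{f1}$, the only $\vW^V$-dependence enters through the product $\vA_{L,:}\vW^{{\rm emb},X}\vW^V\vW^O$, so $d\vf_{\vtheta}(X)_j$ is linear in $d\vW^V$ and collecting the trace gives a gradient of the form $(\vA_{L,:}\vW^{{\rm emb},X})^T\big[(\vp^{i}-\vy^{i})(\vW^{f,T}\odot\sigma'(\vH))+(\vp^{i}-\vy^{i})\big]\vW^{O,T}$ up to transposition conventions, where $\vW^f=\vW^{f1}\vW^{f2}$.

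Next I would invoke Assumption~\ref{assump:activation} and Lemma~\ref{lem:attention}: as $\gamma\to\infty$ the softmax preactivation in the attention vanishes, so $\vA_{L,:}\to\frac1L\bm1$ and hence $\vA_{L,:}\vW^{{\rm emb},X}\to\frac1L\sum_{s\in X}\vw^{{\rm emb},s}$ is the average embedding over the sequence; simultaneously $\sigma'(\vH)\to\bm1$ and ${\rm softmax}(\vf_{\vtheta}(X))\to\frac1{d_m}\bm1$ (using $N_{\fZ}=d_m$ so that the label one-hot lives in $\sR^{d_m}$). The per-sample gradient therefore collapses to $\big(\frac1L\sum_{s\in X}\vw^{{\rm emb},s}\big)^T\big(\vy^{i}-\frac1{d_m}\bm1\big)\vW^{O,T}(\vW^{f,T}+\vI)$, and summing over all $n$ samples and passing $n\to\infty$ by the law of large numbers turns the empirical average into $\mathbb{E}\big[\big(\frac1L\sum_{s\in X}\vw^{{\rm emb},s}\big)^T(\vY-\frac1{d_m}\bm1)\big]\vW^{O,T}(\vW^{f,T}+\vI)$. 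The remaining step is to identify this expectation with $\frac12\mathbb{E}_A\vw^{{\rm emb},A}\,\mathbb{E}_Y[\vY-\frac1{d_m}\bm1]^T\vW^O(\vW^f+\vI)$: the $\frac1L\sum_{s\in X}\vw^{{\rm emb},s}$ averaged over the dataset, in the regime where keys, noise tokens and anchors are uniform on their ranges and the $q$ anchor slots carry the bulk of the embedding mass (memory anchors having near-orthogonal, lower-mass embeddings by Propositions~\ref{prop:emb_mem_tran}–\ref{prop:emb_rsn_tran}), reduces to $\mathbb{E}_A\vw^{{\rm emb},A}$ up to the normalization constant $\frac12$, and—if the label $Y$ is (conditionally) independent of the averaged embedding direction under the symmetric dataset construction—the joint expectation factors into the product of marginals.

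The main obstacle I anticipate is precisely this last factorization: $Y$ is \emph{not} literally independent of the sequence embedding average, since in a reasoning sample the label $z_p+\sum a_{p+i}$ is a deterministic function of the very anchors appearing in the sequence. Making the claimed product form rigorous requires the "special and constrained condition" flagged before the theorem—presumably that one conditions on a single effective anchor, treats the key and remaining $q-1$ anchors plus noise as averaging to a constant vector, and uses $N_{\fZ}=d_m$ together with the symmetry of $\mathcal U(\fZ)$ and $\mathcal U(\fA_{\rm rsn})$ so that cross-terms vanish in expectation. I would therefore present the computation under exactly those hypotheses, flagging where the decoupling is used, and derive the constant $\frac12$ from the equal mixing ratio of memory and reasoning mappings (so that reasoning anchors contribute to the $\vW^V$ gradient only on the reasoning half of the data, while the memory half contributes a term that vanishes in direction against the reasoning-anchor subspace). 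The bookkeeping of transpose placement and of which identity-from-resnet factor attaches where is routine but must be done carefully to match the stated $\vW^O(\vW^f+I)$ ordering.
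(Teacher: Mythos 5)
Your first two steps match the paper exactly: the trace-differential computation of $\partial R/\partial\vW^V$ is the paper's Lemma~\ref{lem:gf_wv_f}, and the small-initialization limit ($\sigma'\to\bm{1}$, $\vp^i\to\frac{1}{d_m}\bm{1}$, $\vA_{L,:}\to\frac{1}{L}\bm{1}$) yields the same expression $\frac{1}{n}\sum_i\overline{\vW}^{{\rm emb},X^i}(\vy^i-\frac{1}{d_m}\bm{1})^T(\vW^O\vW^f+\vW^O)^T$ that the paper obtains. You also correctly identify the crux: the cross-moment $\mathbb{E}\bigl[(\tfrac{1}{L}\sum_{s\in X}\vw^{{\rm emb},s})^T(\vY-\tfrac{1}{d_m}\bm{1})\bigr]$ does not factor into a product of marginals, because the label is a deterministic function of the anchors in the sequence.

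But your proposed resolution — symmetry of the uniform distributions plus a conditional-independence/decoupling assumption on the \emph{raw} embeddings — is not how the paper closes the gap, and it would not produce the stated formula. The paper's key move, which is absent from your proposal, is to reorder the sum over samples into a sum over the $N=nL$ individual tokens and then substitute the \emph{first-order-in-training-time expansion} of each token's embedding, $\vw^{{\rm emb},s_i}\approx\vw^{{\rm emb},s_i}_{t_0}+\eta\,\tfrac{r_{s_i}}{L}\,\mathbb{E}_{Y^{s_i}}[\vY^{s_i}-\tfrac{1}{d_m}\bm{1}]\,(\vW^{f,T}+\vI)(\vW^{VO,T}+\vI)$, which is exactly the output of Propositions~\ref{prop:emb_mem_tran}--\ref{prop:emb_rsn_tran}. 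After this substitution the left factor in each summand is a \emph{deterministic} function of the token $s_i$, so the expectation over the label conditional on $s_i$ factors by the tower property — no independence between embedding and label is needed or assumed. The resulting outer product $\mathbb{E}_{Y}[\vY-\tfrac{1}{d_m}\bm{1}]\,\vW^1\,\mathbb{E}_{Y}[\vY-\tfrac{1}{d_m}\bm{1}]^T\vW^2$ is then converted to the stated form by the identity $\mathbb{P}(Y=i)-\tfrac{1}{d_m}=\tfrac{1}{2}\mathbb{E}_{A}\bigl[\mathbb{E}_{Y^{A}}[\vy^{A}-\tfrac{1}{d_m}]_i\bigr]$ (this is where $N_{\fZ}=d_m$ and the equal memory/reasoning mix enter, producing the $\tfrac{1}{2}$), and the inner conditional mean is recognized — again via the same linearization — as $\vw^{{\rm emb},A}$ itself. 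In other words, $\mathbb{E}_A\vw^{{\rm emb},A}$ appears on the left of the final formula precisely because the reasoning anchor's early-time embedding \emph{is} its label-distribution vector; your sketch, which keeps the embeddings as unstructured quantities and hopes cross-terms vanish by symmetry, cannot recover this identification. (The initialization term $\vw^{{\rm emb},s_i}_{t_0}$ contributes only $O(d_m^{-4\gamma})$ and is discarded in the $\gamma\to\infty$ limit.)

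Two smaller bookkeeping points: the paper's statement has the labels living in $\sR^{d_m}$ (hence the hypothesis $N_{\fZ}=d_m$ and the $\tfrac{1}{d_m}\bm{1}$ rather than $\tfrac{1}{d_{\rm vob}}\bm{1}$), which you noted; and your heuristic that "memory anchors have near-orthogonal, lower-mass embeddings" is not what Proposition~\ref{prop:emb_mem_tran} says — the memory contribution to $\mathbb{E}_Y$ is handled exactly through the $\frac{\vdelta^{\fZ}}{N_{\fZ}}-\frac{1}{d_m}\bm{1}$ term in the decomposition of $\mathbb{P}(Y=i)$, not by dismissing memory embeddings as negligible.
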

Theorem~\ref{thm:WV} highlights that $\vW^V$ inherently demonstrates a preference for reasoning tasks, thereby enhancing its ability to capture information associated with reasoning anchors.
\begin{proof}
 Firstly we have the following formulation:
\begin{lemma}\label{lem:gf_wv_f}
    Given the dataset $\{(X^i,y^i)\}_{i=1}^n$, the gradient flow of $\vW^V$ can be expressed as follow:
    \begin{equation}
    \begin{aligned}
        \frac{d\vW^V}{dt} =-\frac{1}{n}\sum_{i=1}^n\left(\sigma^{\prime}\left(\vH^{i}\right)\odot\overline{\vW}^{{\rm emb},X^i}\right)^T\left(\vp^i-\vy^i\right)\left(\vW^O\vW^{f}\right)^T+\overline{\vW}^{{\rm emb},X^i,T}\left(\vp^i-\vy^i\right)\vW^{O,T}.
    \end{aligned}
    \end{equation}
\end{lemma}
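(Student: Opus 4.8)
The plan is to establish Lemma~\ref{lem:gf_wv_f} by the same differential--trace computation used to prove Lemma~\ref{lem:gf_emb_f}, now differentiating the one-layer Transformer output $\vf_{\vtheta}$ of Definition~\ref{def:1_layer_transformer} with respect to $\vW^V$ instead of an embedding row. Starting from the cross-entropy loss $R(X^i,y^i)=-\log\frac{e^{\vf_{\vtheta}(X^i)_{y^i}}}{\sum_{j}e^{\vf_{\vtheta}(X^i)_j}}$, the standard softmax identity gives $\partial R(X^i,y^i)/\partial\vf_{\vtheta}(X^i)_j=\vp^i_j-\vy^i_j$, so the chain rule yields $\partial R/\partial\vW^V=\sum_{j}(\vp^i_j-\vy^i_j)\,\partial\vf_{\vtheta}(X^i)_j/\partial\vW^V$. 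The essential structural observation is that $\vW^V$ enters $\vf_{\vtheta}$ only through the product $\vW^{VO}=\vW^V\vW^O$, and only in the attention--output branch: once inside the nonlinearity, through the term $\overline{\vW}^{{\rm emb},X^i}\vW^V\vW^O\vW^{f1}$ that appears in $\vH^i$ and is then mapped by $\sigma(\cdot)\vW^{f2}$, and once in the residual term $\overline{\vW}^{{\rm emb},X^i}\vW^V\vW^O$. In particular the attention matrix $\vA$, and hence $\overline{\vW}^{{\rm emb},X^i}=\vA_{L,:}\vW^{{\rm emb},X^i}$, carries no dependence on $\vW^V$ and is held constant throughout, which is what makes the computation short.

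Carrying this out, the differential of the $j$-th output coordinate is
\begin{equation*}
d\vf_{\vtheta}(X^i)_j=\Bigl(\sigma^{\prime}(\vH^i)\odot\bigl(\overline{\vW}^{{\rm emb},X^i}\,d\vW^V\,\vW^O\vW^{f1}\bigr)\Bigr)\vW^{f2}_{:,j}+\overline{\vW}^{{\rm emb},X^i}\,d\vW^V\,\vW^O_{:,j}.
\end{equation*}
For the residual piece this is already a scalar linear in $d\vW^V$, so cyclic invariance of the trace reads off its contribution to $\partial\vf_{\vtheta}(X^i)_j/\partial\vW^V$ directly; for the activation piece I would use the manipulation (as in the proof of Lemma~\ref{lem:gf_emb_f}) that rewrites the scalar $\bigl(\sigma^{\prime}(\vH^i)\odot(\overline{\vW}^{{\rm emb},X^i}\,d\vW^V\,\vW^O\vW^{f1})\bigr)\vW^{f2}_{:,j}=\overline{\vW}^{{\rm emb},X^i}\,d\vW^V\,\vW^O\vW^{f1}\bigl(\sigma^{\prime}(\vH^i)\odot\vW^{f2}_{:,j}\bigr)$ and then cycles the trace to move $d\vW^V$ to the front. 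Summing over $j$ with weights $\vp^i_j-\vy^i_j$ recombines $\vW^{f1}\vW^{f2}$ into $\vW^f$ (so the activation branch carries the factor $\vW^O\vW^f$), collapses $\sum_j(\vp^i_j-\vy^i_j)(\cdot)_{:,j}$ into a contraction against the row vector $\vp^i-\vy^i$, and leaves the residual branch with the clean term $\overline{\vW}^{{\rm emb},X^i,T}(\vp^i-\vy^i)\vW^{O,T}$; this gives $\partial R(X^i,y^i)/\partial\vW^V$ as a sum of two terms of the stated form. Finally, the gradient flow $\frac{d\vW^V}{dt}=-\frac1n\sum_{i=1}^n\partial R(X^i,y^i)/\partial\vW^V$ produces the formula in the lemma.

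The step I expect to be the main obstacle is the matrix-calculus bookkeeping in the activation branch: tracking transposes and factor order when differentiating a Hadamard-product-inside-a-product, and ensuring the final expression is a consistently-dimensioned $d_m\times d_k$ matrix that aggregates correctly over the output index $j$. This is slightly heavier than, but formally identical to, the corresponding step in the proof of Lemma~\ref{lem:gf_emb_f}, where the $\sigma^{\prime}$-weighted term was handled by the same rewrite-and-cycle manipulation. The remaining ingredients --- the softmax derivative, the chain rule through $\vf_{\vtheta}$, and the passage from the loss gradient to the gradient flow --- are routine and need no new ideas.
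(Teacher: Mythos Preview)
Your proposal is correct and follows essentially the same route as the paper: compute $d\vf_{\vtheta}(X^i)_j$ with respect to $\vW^V$ via the two branches (activation and residual), apply the trace/cycle manipulation to isolate $d\vW^V$, contract against $\vp^i_j-\vy^i_j$ via the softmax identity, and average over $i$ to obtain the gradient flow. The Hadamard rewrite you highlight is exactly the device the paper uses (and, as you note, is the same step as in Lemma~\ref{lem:gf_emb_f}); nothing further is needed.
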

\begin{proof}
    For each  data pair $\left(X^i,y^i\right)$, we have
    \begin{align*}
    \vf_{\vtheta}\left(X^i\right)_j &= \sigma\left(\left(\vA_{L,:}\vW^{{\rm emb},X^i}\vW^V\vW^O+\vW^{{\rm emb},X^i}_{L,:}\right)\vW^{f1}\right)\vW^{f2}_{:,j} + \vA_{L,:}\vW^{{\rm emb},X^i}\vW^V\vW^O_{:,j}+\vW^{{\rm emb},X^i}_{L,j}.
    \end{align*}
Compute its differential, we have
    \begin{align*}
    d \vf_{\vtheta}\left(\vW^{{\rm emb},X^i}\right)_j &= d\left(\sigma\left(\left(\overline{\vW}^{{\rm emb},X^i}\vW^V\vW^O+\vW^{{\rm emb},X^i}_{L,:}\right)\vW^{f1}\right)\vW^{f2}_{:,j} + d\overline{\vW}^{{\rm emb},X^i}\vW^V\vW^O_{:,j}\right)\\
    &=\sigma^{\prime}\left(\vH^i\right)\odot d\left(\left(\overline{\vW}^{{\rm emb},X^i}\vW^V\vW^O+\vW^{{\rm emb},X^i}_{L,:}\right)\vW^{f1}\right)\vW^{f2}_{:,j}+\overline{\vW}^{{\rm emb},X^i}d\vW^V\vW^O_{:,j}\\
    &=\sigma^{\prime}\left(\vH^i\right)\odot \overline{\vW}^{{\rm emb},X^i}d\vW^V\vW^O\vW^{f1}\vW^{f2}_{:,j}+\overline{\vW}^{{\rm emb},X^i}d\vW^V\vW^O_{:,j}.
    \end{align*}
    Using the trace theorem 
    \begin{align*}
    d\vf_{\vtheta}\left(X^i\right)_j = \text{tr}\left(d\vf_{\vtheta}\left(X^i\right)_j\right)& = \text{tr}\left(\sigma^{\prime}\left(\vH^i\right)\odot \overline{\vW}^{{\rm emb},X^i}d\vW^V\vW^O\vW^{f}_{:,j}\right) + \text{tr}\left(\overline{\vW}^{{\rm emb},X^i}d\vW^V\vW^O_{:,j}\right)\\
    & = \text{tr}\left(\vW^O\vW^{f}_{:,j}\sigma^{\prime}\left(\vH^i\right)\odot \overline{\vW}^{{\rm emb},X^i}d\vW^V\right) + \text{tr}\left(\vW^O_{:,j}\overline{\vW}^{{\rm emb},X^i}d\vW^V\right)\\
    & = \text{tr}\left(\left(\vW^O\vW^{f}_{:,j}\sigma^{\prime}\left(\vH^i\right)\odot\overline{\vW}^{{\rm emb},X^i}+\vW^O_{:,j}\overline{\vW}^{{\rm emb},X^i}\right)d\vW^V\right),
    \end{align*}
    which suggests that
    \begin{align*}
    \frac{\partial \vf_{\vtheta}\left(X^i\right)_j}{\partial \vW^V} = \left(\vW^O\vW^{f}_{:,j}\sigma^{\prime}\left(\vH^i\right)\odot\overline{\vW}^{{\rm emb},X^i}+\vW^O_{:,j}\overline{\vW}^{{\rm emb},X^i}\right)^T.
    \end{align*}
    Utilizing the chain rule, we have
    \begin{align*}
    \frac{\partial R\left(X^i\right)}{\partial \vW^V} &= \sum_{j=1}^{d_m} \frac{\partial R\left(X^i\right)}{\partial  \vf_{\vtheta}\left(X^i\right)_j}\frac{\partial  \vf_{\vtheta}\left(X^i\right)_j}{\partial \vW^V}\\
    & = \sum_{j=1}^{d_m}\left(\vp^i_j-\vy^i_j\right)\left(\vW^O\vW^{f}_{:,j}\sigma^{\prime}\left(\vH^i\right)\odot\overline{\vW}^{{\rm emb},X^i}+\vW^O_{:,j}\overline{\vW}^{{\rm emb},X^i}\right)^T\\
    & = \left(\sigma^{\prime}\left(\vH^i\right)\odot\overline{\vW}^{{\rm emb},X^i}\right)^T\left(\vp^i-\vy^i\right)\left(\vW^O\vW^{f}\right)^T+\overline{\vW}^{{\rm emb},X^i,T}\left(\vp^i-\vy^i\right)\vW^{O,T},
\end{align*}
where $\vp^i=\text{softmax}(\vf_{\vtheta}(X^i))$. Then gradient flow of $\vW^V$ can be expressed as 
\begin{align*}
    \frac{d\vW^V}{dt} &= -\frac{1}{n}\sum_{i=1}^n\frac{\partial R\left(X^i\right)}{\partial \vW^V}\\
    &=-\frac{1}{n}\sum_{i=1}^n\left(\sigma^{\prime}\left(\vH^{i}\right)\odot\overline{\vW}^{{\rm emb},X^i}\right)^T\left(\vp^i-\vy^i\right)\left(\vW^O\vW^{f}\right)^T+\overline{\vW}^{{\rm emb},X^i,T}\left(\vp^i-\vy^i\right)\vW^{O,T}.
\end{align*}
\end{proof}
 When initialized with a small scale, the gradient flow for $\vW^V$ could be interpreted by:
\begin{align}
    \frac{d\vW^V}{dt} &= \frac{1}{n}\sum_{i=1}^n\overline{\vW}^{{\rm emb},X^i}\left(\vy^i-\frac{1}{d_m}\bm{1}\right)^T\left(\vW^O\vW^{f}+\vW^O\right)^T\\
    &=\frac{1}{nL}\sum_{i=1}^n\sum_{j=1}^L\vw^{\rm emb}_{(i,j)}\left(\vy^i-\frac{1}{d_m}\bm{1}\right)^T\left(\vW^O\vW^{f}+\vW^O\right)^T,
\end{align}
where $\vw^{\rm emb}_{(i,j)}$ denotes the $j$-th element of $\vW^{{\rm emb},X^i}$. In this formulation, there are $nL$ tokens, and we reorder all tokens along with their corresponding labels. Let $\vw^{{\rm emb},s_i}$ denote the embedding vector of the $i$-th token, and let $y^{s_i}$ be its corresponding label. Consequently, the gradient flow can be expressed as:
\begin{equation}\label{eq:gf_WV_f}
    \frac{d\vW^V}{dt} =\frac{1}{N}\sum_{i=1}^N\vw^{{\rm emb},s_i}\left(\vy^{s_i}-\frac{1}{d_m}\bm{1}\right)^T\left(\vW^O\vW^{f}+\vW^O\right)^T.
\end{equation}
If we interpret $\vw^{{\rm emb},s_i}$ by its linear expansion $\vw^{{\rm emb},s_i}_{t_0}+\eta\frac{d\vw^{{\rm emb},s_i}}{dt}$, we obtain that
\begin{align*}
    \frac{d\vW^V}{dt}=&\frac{1}{N}\sum_{i=1}^N\left(\vw^{\text{emb},s_i}_{t_0}+\eta\frac{d\vw^{{\rm emb},s_i}}{dt}\right)\left(\vy^{s_i}-\frac{1}{d_m}\bm{1}\right)^T\left(\vW^O\vW^{f}+\vW^O\right)^T\\
    =&\frac{1}{N}\sum_{i=1}^N\left(\vw^{\text{emb},s_i}_{t_0}+\eta\frac{r_{s_i}}{L}\mathbb{E}_{Y^{s_i}}\left[\vY^{s_i}-\frac{1}{d_m}\bm{1}\right]\left(\vW^{f,T}+\vI\right)\left(\vW^{VO,T}+\vI\right)\right)\left(\vy^{s_i}-\frac{1}{d_m}\bm{1}\right)^T\left(\vW^O\vW^{f}+\vW^O\right)^T.
\end{align*}
Let $\vW^1=\left(\left(\vW^f\right)^T+\vI\right)\left(\left(\vW^{VO}\right)^T+\vI\right),\vW^2=\left(\vW^O\vW^f+\vW^O\right)^T$, then the formulation could be rewritten as
\begin{align*}
    \frac{d\vW^V}{dt}=&\frac{1}{N}\sum_{i=1}^N\left(\vw^{\text{emb},s_i}_{t_0}+\eta\frac{r_{s_i}}{L}\mathbb{E}_{Y^{s_i}}\left[\vY^{s_i}-\frac{1}{d_m}\bm{1}\right]\vW^1\right)\left(\vy^{s_i}-\frac{1}{d_m}\bm{1}\right)^T\vW^{2}\\
    =&\mathbb{E}_{s_i,Y}\left[\left(\vw^{\text{emb},s_i}_{t_0}+\eta\frac{r_{s_i}}{L}\mathbb{E}_{Y^{s_i}}\left[\vY^{s_i}-\frac{1}{d_m}\bm{1}\right]\vW^1\right)\left(\vy^{s_i}-\frac{1}{d_m}\bm{1}\right)^T\vW^{2}\right]\\
    =&\mathbb{E}_{s_i,Y}\left[\eta\frac{r_{s_i}}{L}\mathbb{E}_{Y^{s_i}}\left[\vY^{s_i}-\frac{1}{d_m}\bm{1}\right]\vW^1\left(\vY^{s_i}-\frac{1}{d_m}\bm{1}\right)^T\vW^{2}\right]+\mathbb{E}_{s_i,Y^{s_i}}\left[\vw^{\text{emb},s_i}_{t_0}\left(\vY^{s_i}-\frac{1}{d_m}\bm{1}\right)^T\vW^{2}\right]\\
    =&\frac{r_s\eta}{L}\mathbb{E}_{s_i}\left[\mathbb{E}_{Y^{s_i}}\left[\vY^{s_i}-\frac{1}{d_m}\bm{1}\right]\vW^1\mathbb{E}_{Y^{s_i}}\left[\vY^{s_i}-\frac{1}{d_m}\bm{1}\right]^T\vW^{2}\right]+\mathbb{E}_{s_i,Y^{s_i}}\left[\vw^{\text{emb},s_i}_{t_0}\left(\vY^{s_i}-\frac{1}{d_m}\bm{1}\right)^T\vW^{2}\right]\\
    =&\frac{r_s\eta}{L}\mathbb{E}_{Y}\left[\vY-\frac{1}{d_m}\bm{1}\right]\vW^1\mathbb{E}_{Y}\left[\vY-\frac{1}{d_m}\bm{1}\right]^T\vW^{2}+\mathbb{E}_{s_i,Y^{s_i}}\left[\vw^{\text{emb},s_i}_{t_0}\left(\vY^{s_i}-\frac{1}{d_m}\bm{1}\right)^T\vW^{2}\right].
\end{align*}
While $\mathbb{E}_{Y}\left[\vY-\frac{1}{d_m}\bm{1}\right]_i=\mathbb{P}\left(Y=i\right)-\frac{1}{d_m}$, using the discussion in section \ref{sec:dist_ys}, Let $Z\sim \fU\left(\fZ\right),A_1,\cdots A_q\sim \fU\left(\fA_{\rm rsn}\right)$ we have
\begin{align*}
    \mathbb{P}\left(Y=i\right)-\frac{1}{d_m}&=\frac{1}{2}\left(\mathbb{P}_{\rm mem}\left(Y=i\right)+\mathbb{P}_{\rm rsn}\left(Y=i\right)\right)-\frac{1}{d_m}\\
    &=\frac{1}{2}\left(\frac{\delta_{i\in\fZ}}{N_{\fZ}}+\mathbb{P}\left(Z+\sum_{j=1}^qA_j=i\right)\right)-\frac{1}{d_m}\\
    &=\frac{1}{2}\left(\mathbb{P}\left(Z+\sum_{j=1}^qA_j=i\right)-\frac{1}{d_m}\right) + \frac{1}{2}\left(\frac{\delta_{i\in\fZ}}{N_{\fZ}}-\frac{1}{d_m}\right)\\
    &=\frac{1}{2}\left(\mathbb{E}_{A_1}\left[\mathbb{P}\left(Z+\sum_{j=1}^qA_j=i\mid A_1=a\right)\right]-\frac{1}{d_m}\right) + \frac{1}{2}\left(\frac{\delta_{i\in\fZ}}{N_{\fZ}}-\frac{1}{d_m}\right)\\
    &=\frac{1}{2}\mathbb{E}_{A_1}\left[\mathbb{P}\left(Z+\sum_{j=1}^qA_j=i|A_1=a\right)-\frac{1}{d_m}\right]+ \frac{1}{2}\left(\frac{\delta_{i\in\fZ}}{N_{\fZ}}-\frac{1}{d_m}\right)\\
    &=\frac{1}{2}\mathbb{E}_{A_1}\left[\mathbb{E}_{Y^s}\left[\vy^s-\frac{1}{d_m}\right]_i\right].
\end{align*}
Then we have that
\begin{align*}
    &\frac{d\vW^V}{dt}=\frac{r\eta}{2L}\mathbb{E}_{A_1}\left[\mathbb{E}_{Y^s}\left[\vY^s-\frac{1}{d_m}\bm{1}\right]\right]\vW^1\mathbb{E}_{Y}\left[\vY-\frac{1}{d_m}\bm{1}\right]^T\vW^{2}+\mathbb{E}_{s_i,Y^{s_i}}\left[\vw^{\text{emb},s_i}_{t_0}\left(\vY^{s_i}-\frac{1}{d_m}\bm{1}\right)^T\vW^{2}\right]\\
    &=\frac{1}{2}\mathbb{E}_{A_1}\left[\vw^{\text{emb},A}\right]\mathbb{E}_{Y}\left[\vY-\frac{1}{d_m}\bm{1}\right]^T\vW^{2}+O\left(d_m^{-4\gamma}\bm{1}\right).
\end{align*}
\end{proof}

\newpage
\section{Mechanisms under Varying Initialization Scales}\label{app:model_chara}
\subsection{Embedding Space of Emb-MLP}
Figure~\ref{fig:emb_g_app} exhibits the cosine similarity within the embedding space of Emb-MLP models with initialization rates $\gamma=0.3$ and $\gamma=0.5$. The results indicate that under a large initialization scale, the embedding space of the model becomes less influenced by the label distributions and instead relies predominantly on orthogonality to differentiate all tokens. This mechanism neglects the intrinsic relationships among tokens, leading to a loss of generalization.
\begin{figure}[htbp]
    \centering
    \includegraphics[width=1\linewidth]{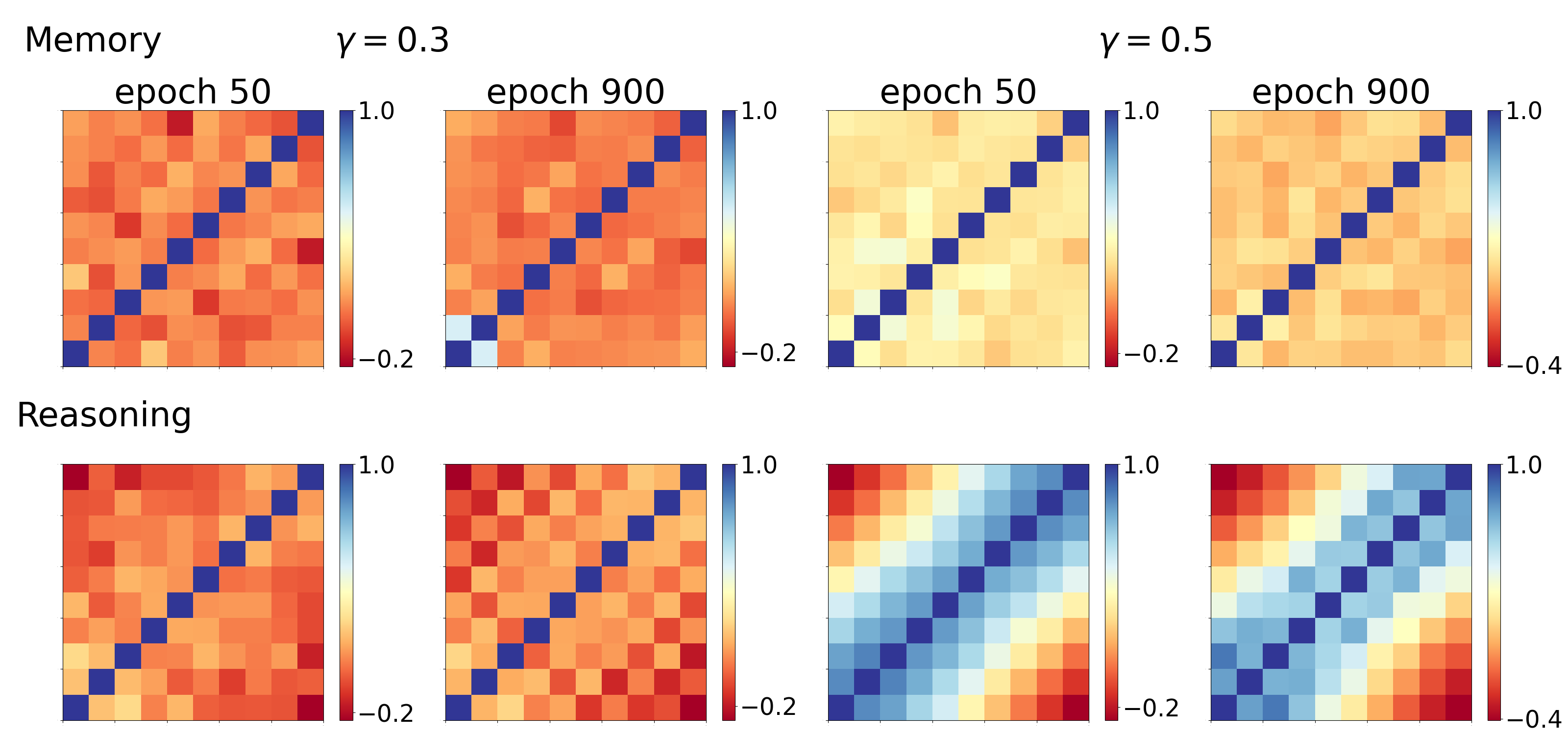}
    \caption{Cosine similarity among different anchors' embedding vectors of Emb-MLP under initialization rates $\gamma=0.3,0.5$ for memory anchors (top row) and reasoning anchors (bottom row).}
    \label{fig:emb_g_app}
\end{figure}
\subsection{Embedding Space of Transformer}
Figure~\ref{fig:emb_f_app} exhibits the structure of the Transformer's embedding space with $\gamma=0.3$ and $\gamma=0.5$. The left and middle panels exhibit the cosine similarity within the embedding space of memory anchors an reasoning anchors, demonstrating that a larger initialization scale promotes orthogonality among embedding vectors. The right panel presents the PCA projection of the embedding space, suggesting that under a large initialization scale, the embedding space lacks a meaningful structure conducive to learning the reasoning mapping. These findings suggest that a large initialization scale encourages differentiation of tokens primarily through orthogonality, taking tokens as independent from the others and neglecting intrinsic token relationships, and ultimately impairing the generalization capability.
\begin{figure}[htbp]
    \centering
    \includegraphics[width=1\linewidth]{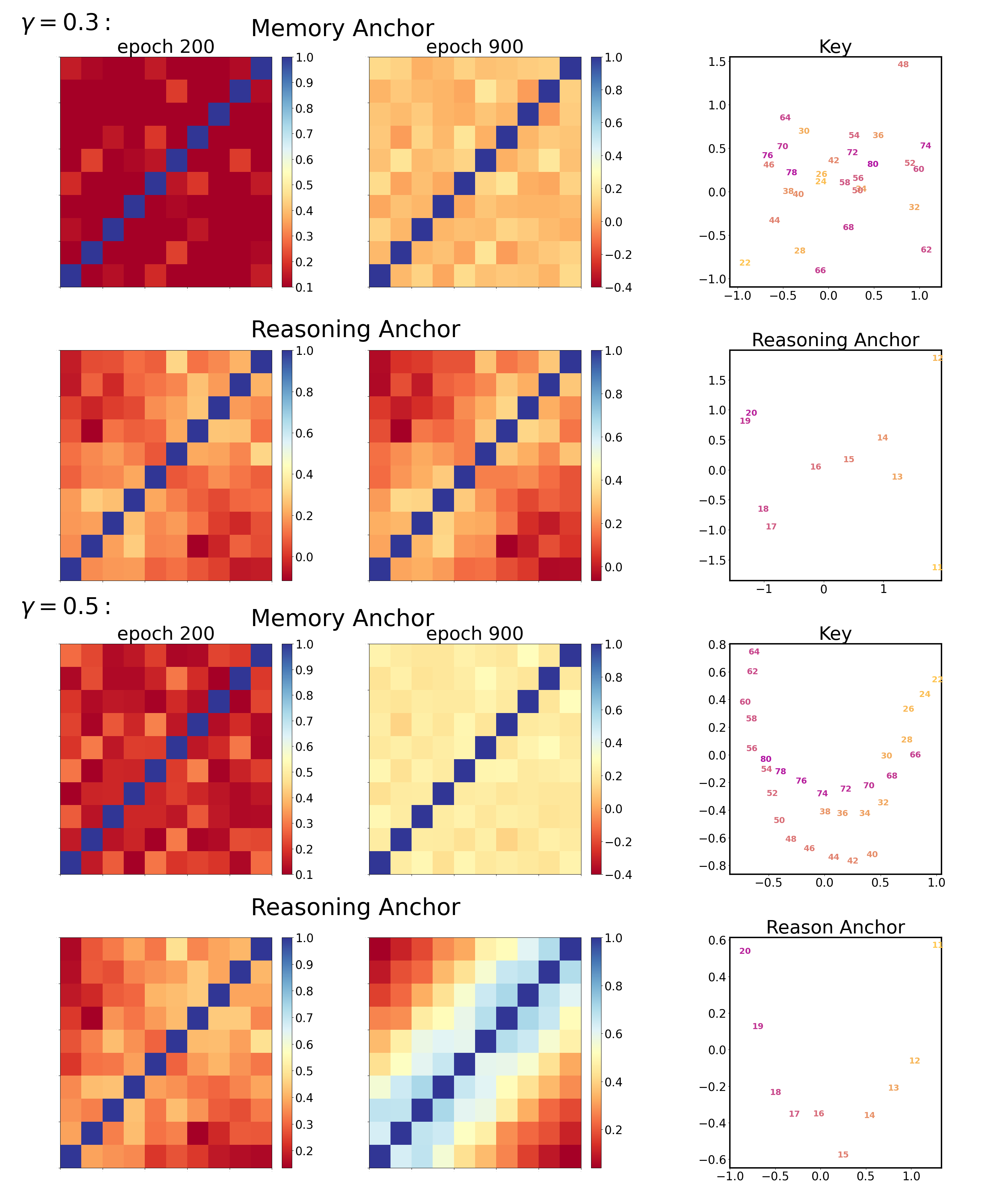}
    \caption{Characteristic of embedding space of Transformer with initialization rates $\gamma=0.3,0.5$. The left and middle panels depict the cosine similarity among embedding vectors of memory anchors and reasoning anchors at epochs 200 and 900. The right panel shows a PCA projection of the embedding space with the key and reasoning anchors.}
    \label{fig:emb_f_app}
\end{figure}

\subsection{The First Attention Module of Transformer}
Figure~\ref{fig:attn_f_app} exhibits the structure of the first attention module with $\gamma=0.3$ and $\gamma=0.5$ at epoch 200.  The comparison reveals that a larger initialization scale results in a more complex attention mechanism, which exhibits no specific preference for any particular task.
\begin{figure}[htbp]
    \centering
    \includegraphics[width=1\linewidth]{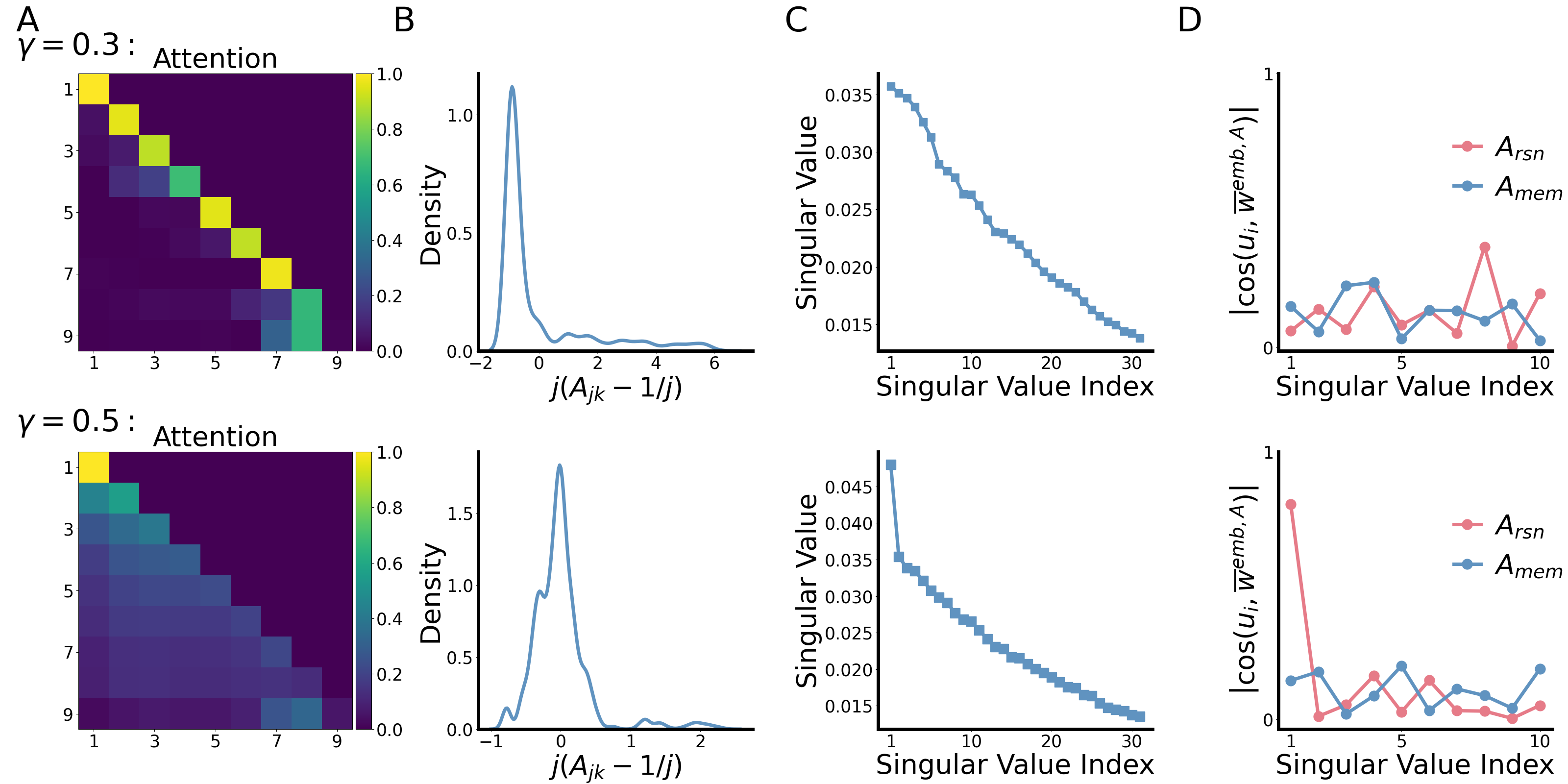}
    \caption{Characteristics of the first attention module of Transformers (step 200) with initialization rates $\gamma=0.3$ (top row) and $\gamma=0.5$ (bottom row). A: Heatmap of the attention matrix for a random sample. B: Distribution of the relative error between attention $A_{jk}$ and $\frac{1}{j}$ across all training sequences. C: Distribution of singular values of $\vW^{V}$. D: Cosine similarity between the left singular vectors and average embedding vectors of the anchors. }
    \label{fig:attn_f_app}
\end{figure}
\subsection{Low-rank Phenomena of Transformer}
Figure~\ref{fig:low_rank_W} illustrates the distribution of singular value across different parameter matrices under varying initialization scales. The results reveal that as the initialization scale decreases, the parameter matrices exhibit a pronounced low-rank structure, which in turn facilitates a simpler learning mode.
\begin{figure}[htbp]
    \centering
    \includegraphics[width=1\linewidth]{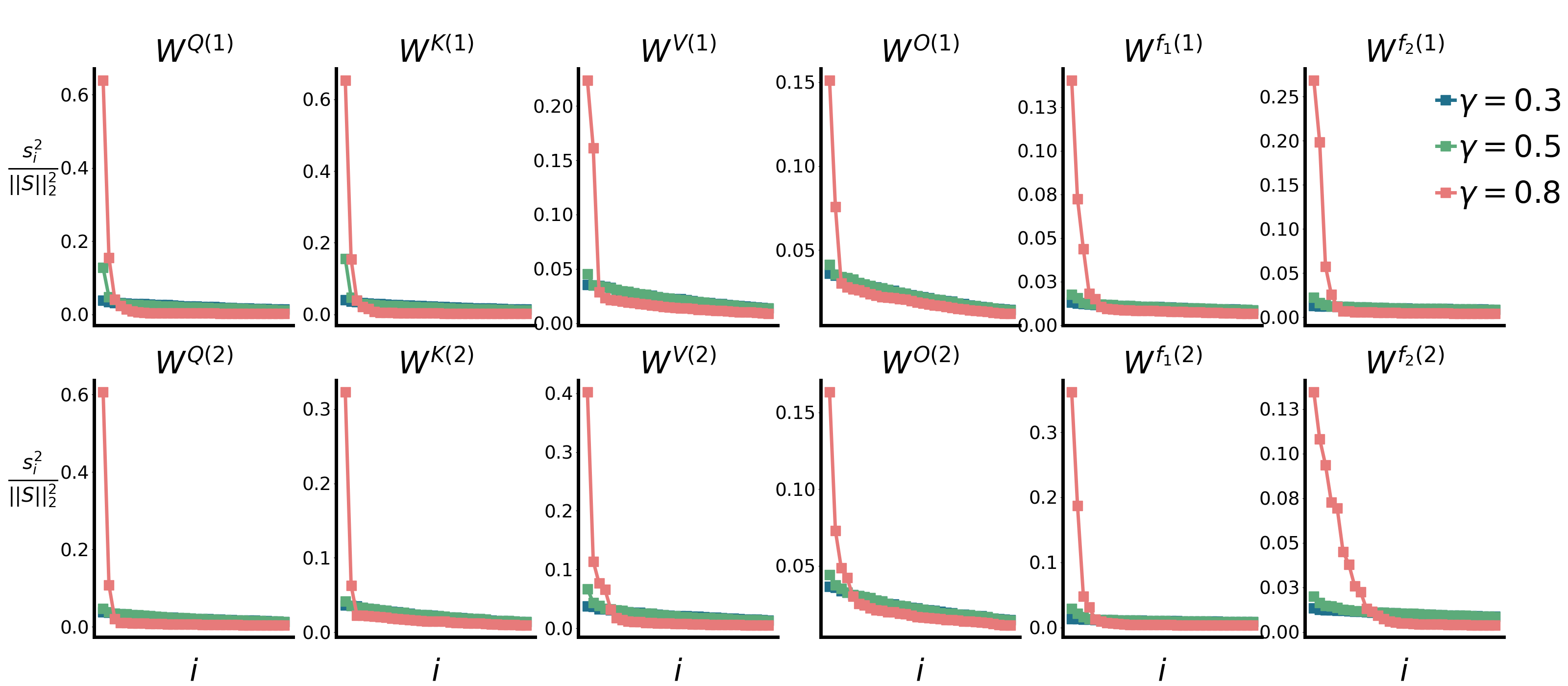}
    \caption{The distribution of singular value in different parameter matrices under different initialization scales (step 200). We denote the singular value vector by $S$ and the $i$-th largest singular value by $s_i$.}
    \label{fig:low_rank_W}
\end{figure}

\subsection{Embedding Space in Real Language Tasks.}
Figure~\ref{fig:emb_realtask_app} exhibits the cosine similarity within the embedding space of PrOntoQA and TinyStories tasks trained by GPT-2 models with initialization rates $\gamma=0.3$ and $\gamma=0.5$. It's noted that under a large initialization scale, the embedding vectors are mutually orthogonal, indicating that the model neglects the associations among different tokens.
\begin{figure}[htbp]
    \centering
    \includegraphics[width=1\linewidth]{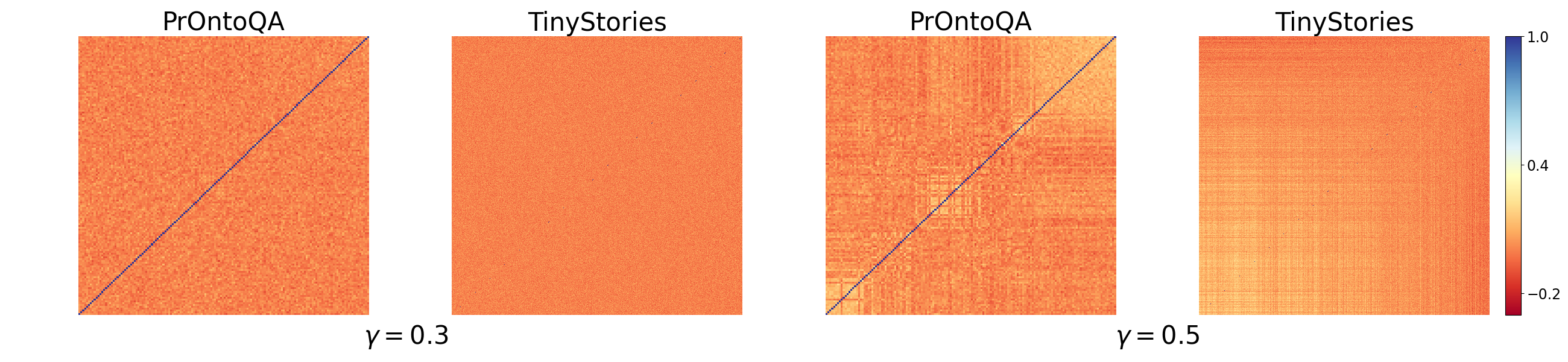}
    \caption{Characteristic of embedding space of PrOntoQA and TinyStories with initialization rates $\gamma=0.3,0.5$ (step 5000).}
    \label{fig:emb_realtask_app}
\end{figure}

\section{The Second Attention Module}\label{sec:second_attention_app}
The function of the second attention module is to extract the key preceding the anchors $z_{p}$, and transfer its information to the last position. Figure~\ref{fig:second_attn}A depicts the last row of the attention matrix before applying softmax, whose variation trend with respect to position index $i$ can be divided into three parts: (1) for $i \leq p$, the attention increases progressively as $i$ increases; (2) for $p+1 \leq i \leq p+q$, the attention exhibits a slight decrease; and (3) for $i \geq p+q$, the attention drops sharply.
\begin{figure}[htpb]
    \centering
    \includegraphics[width=1\linewidth]{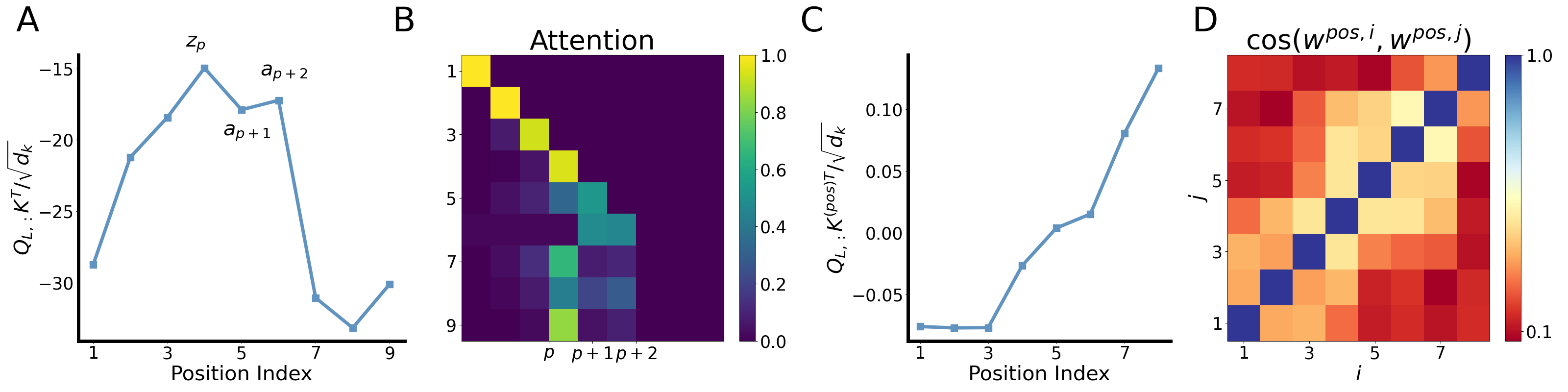}

    \caption{Characteristic of the second attention module. A: The last row of the second attention matrix (without applying softmax) for a randomly selected sequence. B: A heatmap of the second attention matrix for the same sequence. C: The last row of the matrix $\vQ\vW^{\rm pos}\vW^K/\sqrt{d_k}$ immediately before the final token. D: The cosine similarity of positional embeddings $\cos\left(\vw^{{\rm pos},i},\vw^{{\rm pos},j}\right)$ for $i,j=1,2,\cdots,L-1$.}
    \label{fig:second_attn}
\end{figure}

Positional encoding plays a crucial role in this step. Figure~\ref{fig:second_attn}C illustrates the last row of the attention matrix after substituting $\vK$ with $\vW^{\text{pos}}\vW^{K}$, suggesting a increasing trend with the position index. Note that we only present the index immediately before the final token, as the key does not appear at the last position and thus is not required to adhere to the same pattern. Furthermore, since the reasoning anchor and the tokens following it are augmented with reasoning anchor's information in the first attention module, this information can be utilized to reduce the attention for tokens after the token.
We construct a detailed mechanism about this in ~\ref{app:second_attn_mechanism}.

\section{Reconstruction Mechanism for Information Capturing}
To verify our observation is significant for information capturing for the Transformer model, we reconstruct the embedding space, the first attention module, and the second attention module and exhibit the process of extracting the key-anchor pair from a reasoning sequence.

\subsection{Embedding Space}
\begin{assumption} [Word Embedding]\label{word_embedding}
   We assume the embedding space has the following properties:
\begin{enumerate}
    \item $\cos\left(\vw^{{\rm emb},s_{\rm mem}},\vw^{{\rm emb},s_{\rm rsn}}\right)=0, \cos\left(\vw^{{\rm emb},s_{\rm rsn}},\vw^{{\rm emb},s_{\rm key}}\right)=0,\quad  \forall s_{\rm mem}\in\fA_{\rm mem},s_{\rm rsn}\in\fA_{\rm rsn},s_{\rm key}\in \fZ$.
    \item Fix any $s_1\in\fZ$, $\cos\left(\vw^{{\rm emb},s_1},\vw^{{\rm emb},s_2}\right)  \geq\cos\left(\vw^{{\rm emb},s_1},\vw^{{\rm emb},s_3}\right) \text{ if }\left|s_1-s_2\right|\leq\left|s_1-s_3\right|, \quad \forall s_{2},s_{3}\in \fZ$.
    \item There exists a universal constant $C_{\vw}<\infty$ such that $||\vw^{{\rm emb},s}||_{\infty}\leq C_{\vw}$ for any token $s$.
\end{enumerate}
\end{assumption}

In addition to word embeddings, position embeddings should be effectively utilized, as they play a critical role in the functionality of the second attention module. Here, we propose some assumptions about the relationship between word embeddings and position embeddings, with further characteristics to be elaborated upon later.
\begin{assumption}[Position Embedding 1]
Given any position embedding vector $\vw^{{\rm pos},i}$ where $i$ denotes the position index, we assume that $\vw^{\rm{pos},i}\perp \vw^{{\rm emb},s} $ for all $i=1,2,\cdots L$ and $s\in\fZ\cup\fA_{\rm rsn}\cup\fA_{\rm mem}$.
\end{assumption}
\begin{assumption}[Position Embedding 2]\label{assump:pe2}
    We assume that $\cos\left(\vw^{{\rm pos},i},\vw^{{\rm pos},j}\right)=\cos\frac{|i-j|}{L}\pi$ and $||\vw^{{\rm pos},i}||=1$ for any $i,j\in [1,L]$.
\end{assumption}

Given any sequence $X$, the output of the embedding layer is 
\begin{equation*}
    \vX^{(1)}=\ve^X\vW^{\rm emb}+\vW^{\rm pos}.
\end{equation*}

\subsection{First Attention Module}
In the first attention module, due to the impact of small initialization, the attention matrix  $\vA^{(1)}$ functions as an average operator. Specifically, the result of the first attention module can be interpreted as
\begin{equation}
    \left({\rm Attn}^{(1)}\left(\vX^{(1)}\right) \vX^{(1)}\vW^{V(1)}\right)_j=\frac{1}{j}\left(\sum_{i\leq j}\vX^{(1)}_{i,:}\right)\vW^{V(1)}.
\end{equation}
Furthermore, performing singular value decomposition (SVD) on the value projection matrix $\vW^{V(1)}$ reveals that its largest singular value is significantly greater than the remaining singular values. The left singular vector corresponding to the largest singular value $\vW^{V(1)}$ is highly similar to the embedding vectors of the reasoning anchors which indicates that $\vW^{V(1)}$ can be approximated by
\begin{equation}
    \vW^{V(1)}=\lambda_V\left(\frac{1}{||\sum_{s\in\fA_{\rm rsn}}\vW^{{\rm emb},s}||_2}\sum_{s\in\fA_{\rm rsn}}\vW^{{\rm emb},s}\right)^T\vv,
\end{equation}
where $\lambda_{V}$ is the singular value and $\vv\in\sR^{1\times d_k}$ denotes the right singular vector. Since $\vw^{{\rm emb},s_{\rm rsn}}\perp \vw^{{\rm emb},s_{\rm mem}}$ for any $s_{\rm rsn}\in\fA_{\rm rsn},s_{\rm mem}\in\fA_{\rm mem}$ and $\vw^{\rm emb}\perp\vw^{\rm pos}$. Then the result of the attention operator can be interpreted as
\begin{equation}
    \left({\rm Attn}^{(1)}\left(\vX^{(1)}\right) \vX^{(1)}\vW^{V(1)}\right)_{j,:}=\frac{\tilde{\lambda}_V}{j}\left(\sum_{i\leq j}\vW^{{\rm emb},X}_{i,:}\right)\overline{\vw}^{{\rm emb},T}_{\fA_{\rm rsn}}\vv,
\end{equation}
where $\overline{\vw}^{\rm emb}_{\fA_{\rm rsn}}=\sum_{s\in\fA_{\rm rsn}}\vw^{{\rm emb},s},\tilde{\lambda}_V=\frac{\lambda_V}{||\overline{\vw^{\rm emb}}_{\fA_{\rm rsn}}||}$. Substituting the reasoning sequence $X^{\rm rsn}$ and memory sequence $X^{\rm mem}$, respectively, into this formulation, we derive the following results:
\begin{align}
    \left({\rm Attn}^{(1)} \vX^{{\rm mem},(1)}\vW^{V(1)}\right)_{j,:} &= 0,\\
    \left({\rm Attn}^{(1)} X^{{\rm rsn},(1)}\vW^{V(1)}\right)_{j,:} &= \left\{\begin{aligned}
        &\bm{0},\quad j\leq p,\\
        &\frac{\tilde{\lambda}_V}{j}\left(\sum_{i=p+1}^{\min(j,p+q)}\vW^{{\rm emb},X^{\rm rsn}}_i\right)\overline{\vw}^{{\rm emb},T}_{\fA_{\rm rsn}}\vv,\quad p<j\leq L,
    \end{aligned}\right.
\end{align}
where $j$ means the row index. Thus, all tokens following the reasoning anchor are effectively “tagged,” facilitating the identification of the anchor. Define the output of the first attention module is as follows:
\begin{align*}
    \vX^{(2)} &= \vX^{(1)}+{\rm Attn}^{(1)}\left(\vX^{(1)}\right) \vX^{(1)}\vW^{V(1)}.
\end{align*}
Under the Assumption~\ref{word_embedding}, we can formulate the output of the reasoning sequence further
\begin{align*}
    \vX^{{\rm rsn},(2)}_{j,:}=\left\{
    \begin{aligned}
        &\vw^{{\rm emb},z_j}+\vw^{{\rm pos},j},\quad j\leq p,\\
        &\vw^{{\rm emb},a_j}+\vw^{{\rm pos},j}+\frac{\tilde{\lambda}_V}{j}\left(\sum_{i=p+1}^j\vw^{{\rm emb},a_i}\right)\overline{\vw}^{{\rm emb},T}_{\fA_{\rm rsn}}\vv,\quad p+1\leq j\leq p+q,\\
        &\vw^{{\rm emb},z_j}+\vw^{{\rm pos},j}+\frac{\tilde{\lambda}_V}{j}\left(\sum_{i=p+1}^{p+q}\vw^{{\rm emb},a_i}\right)\overline{\vw}^{{\rm emb},T}_{\fA_{\rm rsn}}\vv,\quad p+q+1\leq j\leq L.
    \end{aligned}
    \right.
\end{align*}

\subsection{Second Attention Module}\label{app:second_attn_mechanism}
We observe that the first attention module introduces additional information to all tokens with indices $j\geq p+1$. The subsequent challenge is to identify the reasoning tokens and the key token, and effectively propagate their information to the last position in the sequence. To achieve this, we construct the following attention distribution and demonstrate its properties:
\begin{definition}[Cliff Sequence]
Given a sequence \( \vl \in \mathbb{R}^L \), we define \( \vl \) as a $(p,q)$-cliff sequence if there exists $p,L\in\sN^+$ such that $\vl$ satisfies the following conditions:
\begin{enumerate}
    \item (Increasing Segment) \( \vl_{i+1} > \vl_i \) for all \( i < p \).
    \item (Plateau) \( \frac{\vl_{p-1}+\vl_p}{2} \leq \vl_{p+1} , \cdots , \vl_{p+q}\leq\vl_p \).
    \item (Descending Segment) \( \vl_i < \vl_1 \) for all \( p+q < i \leq L \).
\end{enumerate}
\end{definition}
It is evident that if the attention of the last token forms a $(p,q)$-cliff sequence, it can effectively capture the information of the tokens and the key. Specifically, we have the following results to illustrate its feasibility.
\begin{lemma}
     For any $\varepsilon > 0$, there exists a $(p,q)$-cliff sequence $\vl$ with norm $C$ such that ${\rm softmax}(\vl)_i \leq \varepsilon$ for any $i\in[1,p-1]\cup[p+q+1,L]$.
\end{lemma}
\begin{proof}
    It's evident that we just need to illustrate ${\rm softmax}(\vl)_{p-1}\rightarrow 0$ as $C\rightarrow\infty$. Denote that $\vl=C\tilde{\vl}$, then we have
    \begin{align*}
        {\rm softmax}\left(\vl\right)_{p-1}&=\frac{e^{C\tilde{\vl}_{p-1}}}{\sum_{j=1}^Le^{C\tilde{\vl}_{j}}}\\
        &=\frac{1}{\sum_{j\in[1, p-1]\cup[p+q+1,L]}e^{C\left(\tilde{\vl}_i-\tilde{\vl}_{p-1}\right)}+\sum_{j\in[p,p+q]}e^{C\left(\tilde{\vl}_i-\tilde{\vl}_{p-1}\right)}}.
    \end{align*}
    Since that $\tilde{\vl}_j\leq\tilde{\vl}_{p-1}$ for any $j\in[1, p-1]\cup[p+q+1,L]$ and $\tilde{\vl}_j\geq\tilde{\vl}_{p-1}$ for any $j\in[p,p+q]$, so we have
    \begin{align*}
        \lim_{C\rightarrow\infty}\sum_{j\in[1, p-1]\cup[p+q+1,L]}e^{C\left(\tilde{\vl}_i-\tilde{\vl}_{p-1}\right)}=0\quad\text{ and } \lim_{C\rightarrow\infty}\sum_{j\in[p,p+q]}e^{C\left(\tilde{\vl}_i-\tilde{\vl}_{p-1}\right)}=\infty  ,
    \end{align*}
    then $\rm{softmax}\left(\vl\right)_{p-1}\rightarrow 0$.
\end{proof}

Here we provide a mechanism to construct a real matrix $\tilde{\vA}\in \sR^{d_m\times d_m}$ such that $\vX^{\rm rsn,(2)}_{L,:}\tilde{\vA}\vX^{{\rm rsn,(2)},T}$ is a $\left(p,q\right)$-cliff sequence. Assume that $\tilde{\vA}=\vpi\left(\text{span}\{\vw^{pos}\}\right)-\mu\vv^T\vv,\mu>0$, where $\vpi\left(\text{span}\{\vw^{pos}\}\right)$ denotes the subspace spanned by $\{\vw^{pos}\}$. Then we have
    \begin{align*}
        \vX^{\rm rsn,(2)}_{L,:}\tilde{\vA}\vX^{{\rm rsn,(2)},T}&=\left\{
    \begin{aligned}
        &\left(\vw^{{\rm pos},L},\vw^{{\rm pos},j}\right),\quad j\leq q,\\
        & \left(\vw^{{\rm pos},L},\vw^{{\rm pos},j}\right)-\frac{\tilde{\lambda}_V \mu}{L}\left(\sum_{i=p+1}^{p+q}\vw^{{\rm emb},a_i},\overline{\vw}^{{\rm emb}}_{\fA_{\rm rsn}} \right)\left(\vv,\vw^{{\rm emb},a_j}\right)\\
        &-\frac{\tilde{\lambda}_V^2 \mu}{jL}\left(\sum_{i=p+1}^{p+q}\vw^{{\rm emb},a_i},\overline{\vw}^{\rm emb}_{\fA_{\rm rsn}}\right)\left(\sum_{i=p+1}^{j}\vw^{{\rm emb},a_i},\overline{\vw}^{\rm emb}_{\fA_{\rm rsn}}\right),\quad p+1\leq j\leq p+q,\\
        & \left(\vw^{{\rm pos},L},\vw^{{\rm pos},j}\right)-\frac{\tilde{\lambda}_V^2\mu}{jL}\left(\sum_{i=p+1}^{p+q}\vw^{{\rm emb},a_i},\overline{\vw}^{\rm emb}_{\fA_{\rm rsn}} \right)^2,\quad p+q+1\leq j\leq L.
    \end{aligned}
    \right.
    \end{align*}
Define $\varphi_j=\left(\sum_{i=p+1}^{j}\vw^{{\rm emb},a_i},\overline{\vw}^{\rm emb}_{\fA_{\rm rsn}}\right)$, applying the Assumption~\ref{assump:pe2} on the position embedding, then we have the following result:
\begin{align*}
        \vX^{\rm rsn,(2)}_{L,:}\tilde{\vA}\vX^{{\rm rsn,(2)},T}
        =\left\{
    \begin{aligned}
        &\cos\left(1-\frac{j}{L}\right)\pi,\quad j\leq p,\\
        & \cos\left(1-\frac{j}{L}\right)\pi-\frac{\tilde{\lambda}_V^2 \mu}{jL}\varphi_{p+q} \left(\frac{j||\vw^{{\rm emb},a_j}||}{\tilde{\lambda}_V}\cos\left(\vv,\vw^{{\rm emb},a_j}\right)+\phi_j\right),\quad p+1\leq j\leq p+q,\\
        & \cos\left(1-\frac{j}{L}\right)\pi-\frac{\tilde{\lambda}_V^2\mu}{jL}\varphi_{p+q} ^2,\quad p+q+1\leq j\leq L.
    \end{aligned}
    \right.
    \end{align*}
To satisfy the Increasing Segment condition, we need that: 
\begin{align*}
    \cos\left(1-\frac{j}{L}\right)\pi-\frac{\tilde{\lambda}_V^2 \mu}{jL}\varphi_{p+q} \left(\frac{j||\vw^{{\rm emb},a_j}||}{\tilde{\lambda}_V}\cos\left(\vv,\vw^{{\rm emb},a_j}\right)+\varphi_j\right)
    \geq\frac{1}{2}\cos\left(1-\frac{p}{L}\right)\pi+\frac{1}{2}\cos\left(1-\frac{p-1}{L}\right)\pi,
\end{align*}
for any  $p\in\left[1,L-q\right],j\in\left[p+1,p+q\right]$. Denote that:
\begin{equation}
\tilde{M}:=\max_{p,q}\varphi_{p+q},\qquad \tilde{m}:=\min_{p,q}\varphi_{p+q}.
\end{equation}
Then we have
\begin{align*}
    &\frac{\tilde{\lambda}_V^2\mu}{jL}\tilde{M}\left(\frac{j||\vw^{{\rm emb},a_j}||}{\tilde{\lambda}_V||\vv||}\cos\left(\vv,\vw^{{\rm emb},a_j}\right)+\tilde{M}\right)\leq-\cos\left(\frac{p+1}{L}\right)\pi+\frac{1}{2}\cos\left(\frac{p}{L}\right)\pi+\frac{1}{2}\cos\left(\frac{p-1}{L}\right)\pi\\
    \rightarrow & \frac{\tilde{\lambda}_V^2\mu}{jL}\tilde{M}\left(\frac{j||\vw^{{\rm emb},a_j}||}{\tilde{\lambda}_V||\vv||}\cos\left(\vv,\vw^{{\rm emb},a_j}\right)+\tilde{M}\right)\\&\leq \sqrt{\left(\frac{1}{2}\left(1-\cos\left(\frac{\pi}{L}\right)\right)\right)^2+\left(\frac{3}{2}\sin\left(\frac{\pi}{L}\right)\right)^2}\cos\left(\frac{L-1}{L}\pi-\arctan{\frac{3\sin\left(\frac{\pi}{L}\right)}{1-\cos\left(\frac{\pi}{L}\right)}}\right).
\end{align*}
Denote the right side by $C_M$, and simplify it with
\begin{equation}\label{eq:condition1}
    \frac{||\vw^{{\rm emb},a_j}||}{\tilde{\lambda}_V}\cos\left(\vv,\vw^{{\rm emb},a_j}\right)\leq \frac{LC_M}{\tilde{\lambda}_V^2\mu\tilde{M} }-\frac{\tilde{M}}{L}.
\end{equation}
For another side, we assume that:
\begin{align*}
    \cos\left(1-\frac{j}{L}\right)\pi-\frac{\tilde{\lambda}_V^2 \mu}{jL}\varphi_{p+q} \left(\frac{j||\vw^{{\rm emb},a_j}||}{\tilde{\lambda}_V}\cos\left(\vv,\vw^{{\rm emb},a_j}\right)+\varphi_j\right)\leq \cos\left(1-\frac{p}{L}\right)\pi,
\end{align*}
which implies that:
\begin{align*}
    -\frac{\tilde{\lambda}_V^2 \mu}{jL}\varphi_{p+q}\left(\frac{j||\vw^{{\rm emb},a_j}||}{\tilde{\lambda}_V}\cos\left(\vv,\vw^{{\rm emb},a_j}\right)+\varphi_j\right)&\leq -2\sin\left(\frac{2p+q}{2L}\pi\right)\sin\left(\frac{q}{2L}\pi\right).\\
\end{align*}
We have that:
\begin{align}\label{eq:condition2}
    \frac{||\vw^{{\rm emb},a_j}||}{\tilde{\lambda}_V}\cos\left(\vv,\vw^{{\rm emb},a_j}\right)\geq \frac{LC_m}{\tilde{\lambda}_V^2\mu\tilde{m}}-\frac{\tilde{m}}{L}.
\end{align}
These two conditions give the direction scope of $\vv$. For the Descending Segment condition, we have that

\begin{equation}\label{eq:condition3}
\begin{aligned}
    &\cos\left(1-\frac{1}{L}\right)\pi>\cos\left(1-\frac{j}{L}\right)\pi-\frac{\tilde{\lambda}_V^2\mu}{jL}\varphi_{p+q} ^2\\
    \rightarrow &\tilde{\lambda}_V^2\mu > jL\left( \cos\left(\frac{\pi}{L}\right)-\cos\left(\frac{j\pi}{L}\right)\right)\varphi_{p+q} ^{-2}\\
    \rightarrow & \tilde{\lambda}_V^2\mu >L^2\left(1+\cos\left(\frac{\pi}{L}\right)\right)\tilde{m}^{-2}.
\end{aligned}
\end{equation}

With~\eqref{eq:condition1},\eqref{eq:condition2}, and~\eqref{eq:condition3}, we could give a range of $\tilde{\lambda}_V,\mu$ and the direction of $\vv$ which makes $\vX^{\rm rsn,(2)}_{L,:}\tilde{\vA}\vX^{{\rm rsn,(2)},T}$ is a $\left(p,q\right)$-cliff sequence.

\section{Layer Normalization}\label{app:LN}
We conduct an experiment with removing the Layer Normalization module, exhibiting the same phenomena, i.e., smaller initialization scales bias reasoning task, and results are depicted in Figure~\ref{fig:no_LN}. 
\begin{figure}[htbp]
    \vspace{-11pt}
    \centering
    \includegraphics[width=0.9\linewidth]{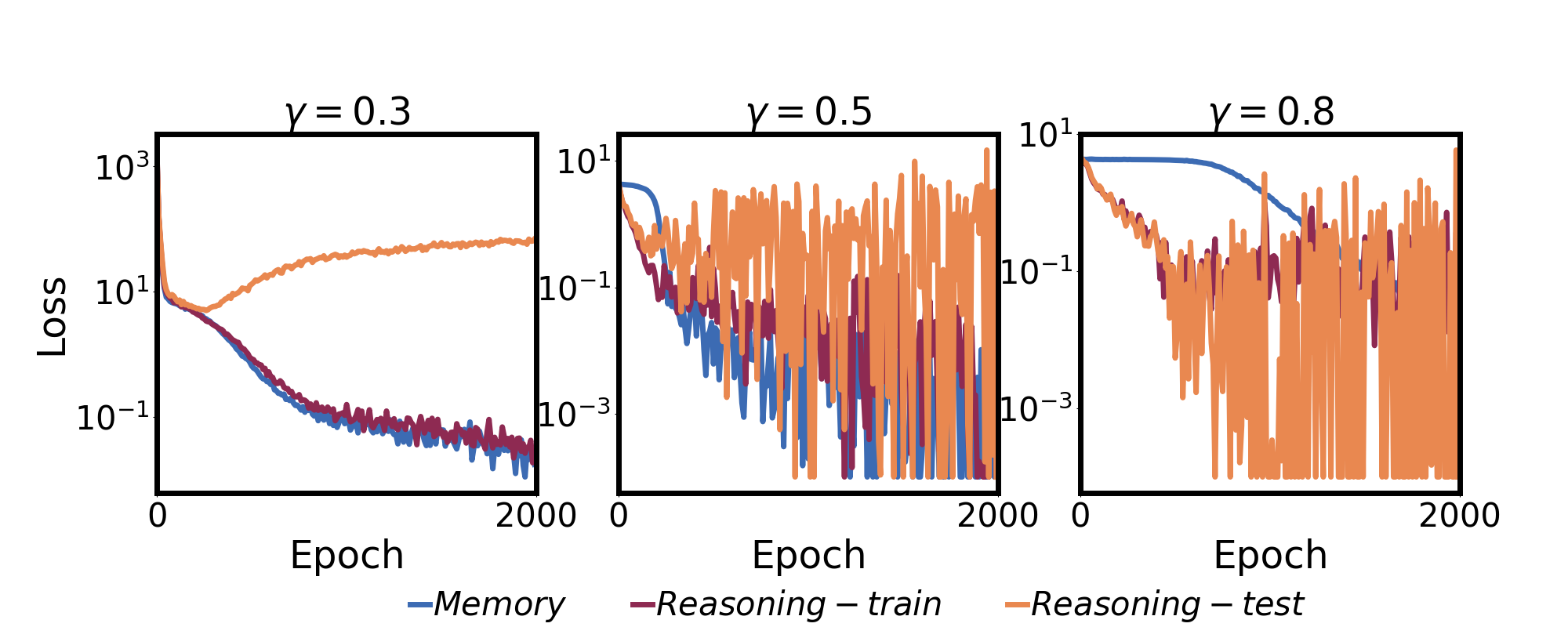}
    \caption{Training dynamics of Transformers under $\gamma=0.3,0.5,0.8$ without Layer Normalization.}
    \label{fig:no_LN}
\end{figure}

\section{Learning Rate}
We conduct experiments with the learning rate belonging to $\left[10^{-5},5\times 10^{-4}\right]$. Figure~\ref{fig:lr_loss} exhibits the loss dynamics under different $\gamma$, remaining consistent learning bias across these learning rate configurations. However, when the learning rate increases to $0.001$, the training becomes highly unstable, manifesting a severe loss spike.
\begin{figure}[htbp]
    \centering
    \includegraphics[width=0.9\linewidth]{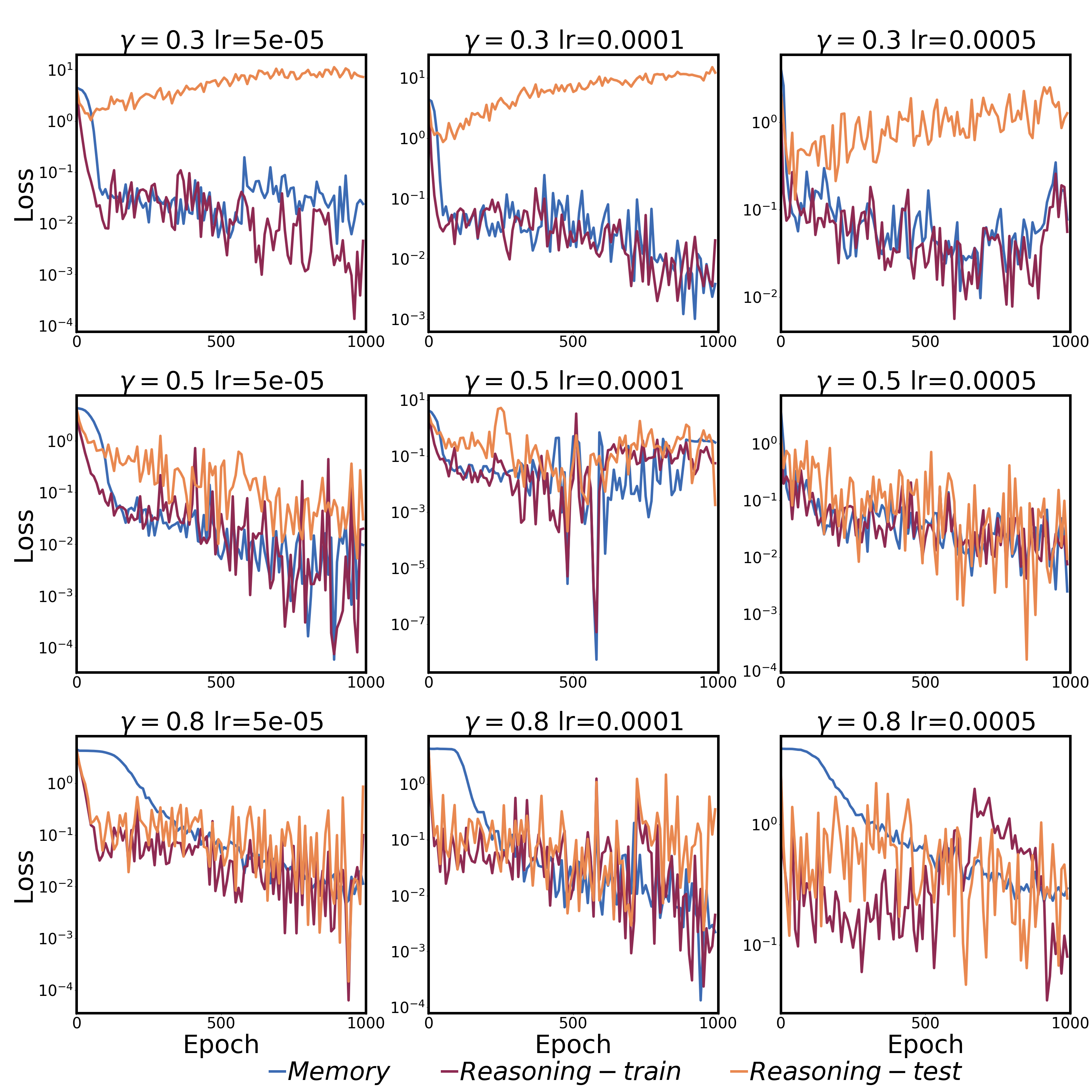}
    \caption{Training dynamics of Transformers under $\gamma=0.3,0.5,0.8$ and varying learning rates.}
    \vspace{-15pt}
    \label{fig:lr_loss}
\end{figure}


\end{document}